%% file: main.tex
\documentclass[11pt]{asiresearch}
\usepackage[T1]{fontenc}
\usepackage{lmodern}
\usepackage{XCharter}
\usepackage{algorithm}
\usepackage{algpseudocode}
\usepackage{needspace}

\DeclareMathOperator{\KL}{KL}
\DeclareMathOperator{\UKL}{UKL}
\DeclareMathOperator{\sg}{sg}

\newcommand{\pistar}{\pi^\star_{t,k}}
\newcommand{\pitrainer}{\pi^{\mathrm{trainer}}}
\newcommand{\pisampler}{\pi^{\mathrm{sampler}}}
\newcommand{\pisamp}{\pisampler_{\theta_{t-k}}}
\newcommand{\score}{\nabla_\theta\log\pi_\theta}
\title{\LARGE\bfseries On KL-Regularized Policy Optimization}
\author{\textbf{Yifan Zhang}\\[1.5mm]Princeton University\\[0.5mm]\texttt{yifzhang@princeton.edu}}
\date{September 18, 2026\footnote{Revised: October 5, 2026}}
\hypersetup{pdftitle={On KL-Regularized Policy Optimization},pdfauthor={Yifan Zhang}}

\begin{document}
\maketitle
\begin{abstract}
Asynchronous reinforcement learning (RL) for large language model (LLM) agents trains one policy on trajectories generated by another: rollouts come from stale checkpoints, and the inference engine's probabilities differ from the trainer's even at identical parameters. Standard remedies either clip importance ratios, which biases the update, or, as in GRPO, sample a group of responses per prompt, which is costly when episodes are long. We propose KL-Regularized Policy Optimization (KLPO), a framework that anchors the KL regularizer at the sampler. The regularized improvement step then has a closed-form Gibbs solution, and KLPO fits its log-ratio optimality condition by least squares on the sampler's own trajectories, so the sampler probability enters through a log-ratio and no importance weights are needed. Profiling out the regression intercept replaces the intractable log-partition function with the signal's sampler mean plus a sampler-to-trainer KL divergence. For token-level policy mirror descent targets, we show that the resulting gradient can be computed from terminal returns without a critic, via sampler-centered scores or a single trajectory residual, even under stochastic tool outputs. We further prove that independent Monte Carlo estimates of the KL term keep these gradients unbiased, derive the exact KL gap of cheaper top-$K$ and binary approximations, and show that SPPO, GPO, REBEL, and BPO arise as special cases of KLPO. The result is a critic-free update that uses one rollout per prompt and requires neither a learned normalizer nor a group of responses.
\end{abstract}

\projectpage{https://github.com/yifanzhang-pro/KLPO}

\input{contents/report}

\bibliographystyle{plainnat}
\vspace{5ex}

\bibliography{reference}

\clearpage
\appendix

\renewcommand{\appendixpagename}{\centering \huge Appendix}
\appendixpage
\counterwithin{theorem}{section} 

\startcontents[section]
\begingroup
\footnotesize
\printcontents[section]{l}{1}{\setcounter{tocdepth}{1}}
\endgroup
\clearpage

\input{contents/appendix}

\end{document}

%% file: contents/report.tex
\section{Introduction}
\label{sec:introduction}
Reinforcement learning (RL) for language agents is asynchronous by necessity. Episodes last minutes, interleave tool calls, and finish at unpredictable times, so workers keep generating with the last checkpoint they received while the learner trains on trajectories as they arrive. Every update therefore involves two policies: the trainer $\pitrainer_{\theta_t}$ that the learner improves at step $t$, and the sampler $\pisamp$ that generated the data, a checkpoint $k$ steps old. They differ for two reasons. Staleness makes the lag $k$ positive and different for every trajectory, with no fixed bound when tool latency varies. Execution makes their probabilities differ even at $k=0$, because the inference engine applies its own sampling transforms, quantization, and numerical kernels. Agentic training is off-policy by construction, and the trainer--sampler gap is a permanent feature of the system.

Most policy-optimization objectives are written for a single policy and treat the second as a correction. Token-level importance ratios grow more extreme with lag and accumulate along long trajectories, so they are clipped, as in PPO \citep{schulman2017ppo}, which biases the update. A learned critic adds a second model that must track a moving policy. Group-relative estimators such as GRPO \citep{shao2024deepseekmath} avoid the critic by sampling several responses per prompt, which is costly when a single episode is long.

We build the objective around both policies by anchoring the KL regularizer at the sampler, the policy that collected the data. The regularized improvement step then has the closed-form Gibbs solution that underlies REPS, MPO, policy mirror descent, and RPG \citep{peters2010reps,abdolmaleki2018mpo,lan2023pmd,rpg2026}. Its logarithm is an equation between the trainer-to-sampler log-ratio and an improvement signal, which we fit by least squares on the sampler's own trajectories. The log-ratio that an importance-weighted method must exponentiate and clip becomes the regression variable, so the gradient carries no importance weight, and the regression target is exactly the policy selected by the regularized objective. Because only the probabilities the sampler actually used enter, execution mismatch is covered along with staleness, and replay across checkpoints becomes a mixture of well-posed regressions, one per sampler version. We call the resulting family of objectives KL-Regularized Policy Optimization (KLPO); Figure~\ref{fig:overview} traces its token-level instance from objective to update.

\begin{figure}[!t]
 \centering
 \input{figures/overview}
 \caption{\textbf{Token-level KLPO, from objective to critic-free update.} Steps~1--4 solve the local KL-regularized improvement problem anchored at the sampler \citep{rpg2026} and rewrite its solution as a log-ratio condition. Steps~5--6 fit that condition by regression, as in SPPO and GPO \citep{sppo2024,gpo2025}, and profile the unknown normalizer into a local KL. Steps~7--9 compute the gradient from terminal returns: Step~7 builds on BPO's critic-free reformulation of PMD \citep{bpo2026}, Step~8 centers the trainer's score under the sampler \citep{scorecentering2026}, and Step~9 replaces the exact score mean with $M\geq1$ independent sampler draws. Figure~\ref{fig:score-alternative} gives the sequence-regression alternative to Steps~7--9.}
 \label{fig:overview}
\end{figure}

Our contributions are as follows.
\begin{itemize}
 \item \textbf{Regression and profiling} (Section~\ref{sec:framework}). For any signal and granularity, regressing the log-ratio optimality condition identifies the sampler-anchored Gibbs target without importance weights; its gradient is the regularized-return gradient with the likelihood ratio removed. Profiling the intercept replaces the intractable log-partition function with the signal's sampler mean plus a sampler-to-trainer KL divergence.
 \item \textbf{A common view of prior objectives} (Section~\ref{sec:instances}). With exact normalizers, SPPO and GPO are response-level KLPO, REBEL is its pairwise profiled form, and BPO's exact trajectory loss equals the profiled token-level objective. Each practical loss differs from its KLPO instance by an identified approximation.
 \item \textbf{Critic-free token-level gradients} (Section~\ref{sec:granularity}). The gradient of the profiled token-level policy mirror descent objective follows from terminal returns, either by centering the trainer's score under the sampler or from one baseline-free trajectory residual, and both routes hold with stochastic tool outputs.
 \item \textbf{Estimators for asynchronous training} (Sections~\ref{sec:kl-estimators} and~\ref{sec:agentic}). Independent Monte Carlo estimates of the local KL keep both gradients unbiased, and top-$K$ and binary approximations fall short of the KL by exactly the divergence inside the merged tail. Replay across versions is a fixed mixture of per-version regressions, so token-level KLPO trains from one rollout per prompt with no critic, learned normalizer, or response group.
\end{itemize}

\section{Background}
\label{sec:klpo}
\label{sec:setup}
We index optimizer steps by $t$ and positions within a response by $u$. Each response is generated by a single checkpoint, whose version $v_i$ is stored with it, so at step $t$ it has lag $k_i(t)=t-v_i$; reuse increases the lag but leaves the sampler $\theta_{v_i}$ unchanged. The learner thus holds the trainer $\pitrainer_{\theta_t}$ and receives data from samplers $\pisamp$ with lags $0\leq k\leq K_t$, and synchronous training is the case $k=0$ with matched execution. Sampler probabilities are those actually used to generate the data, including temperature and other sampling transforms, and are logged at collection time. The sampler also serves as the reference distribution of the KL regularizer, so no separate reference model is needed, and the trainer is the only policy that is differentiated. Every quantity below is built from this pair (Figure~\ref{fig:asynchronous}).

\begin{figure}[t]
 \centering
 \input{figures/asynchronous}
 \caption{\textbf{Two policies in asynchronous agentic RL.} Workers generate complete trajectories with stale checkpoints $\pisampler_{\theta_{t-k}}$, on runtimes whose probabilities can differ from the trainer's, and return them with their collection version, sampler log-probabilities, and the records the KL estimator needs. The learner updates $\pitrainer_{\theta_t}$ by fitting each trajectory against its own collection sampler.}
 \label{fig:asynchronous}
\end{figure}

Sections~\ref{sec:framework}--\ref{sec:kl-estimators} fix one collection version $(t,k)$, write $\pi_\theta$ for $\pitrainer_\theta$, and evaluate gradients at $\theta_t$; Section~\ref{sec:agentic} combines versions. A prompt $x\sim\mathcal D$ is followed by a complete response $y=(y_1,\ldots,y_T)\in\mathcal Y_x$, including its termination, with $\pi(y\mid x)=\prod_{u=1}^{T}\pi(y_u\mid x,y_{<u})$ for both policies. We write $s_u=(x,y_{<u})$ for the history before token $u$ and $a_u=y_u$ for the token. In agentic tasks, $s_u$ also contains every tool output observed before $y_u$, and $T$ counts only the tokens generated by the policy.

\begin{assumption}[Finite Positive Model]
\label{ass:finite}
The vocabulary is finite and responses have bounded length. The reward $R(x,y)$ is a bounded deterministic function of the prompt and the complete response, such as a verifier score. The trainer and the sampler are normalized and positive on a common feasible support.
\end{assumption}
\begin{assumption}[Matched Collection]
\label{ass:matched}
For a fixed collection version, prompts are drawn from $\mathcal D$ and complete responses from $\pisamp$, without outcome-dependent filtering. Every sampler-dependent quantity refers to that same sampler: the denominator of each log-ratio, the distribution used for centering or KL estimation, and the continuation that produces each return.
\end{assumption}
\begin{assumption}[Deterministic Transitions and Terminal Reward]
\label{ass:deterministic}
The next history is determined by the current history and action, and the reward is received only at termination.
\end{assumption}
Assumptions~\ref{ass:finite} and~\ref{ass:matched} hold throughout and ensure that the sampler visits every feasible history and action. Assumption~\ref{ass:deterministic} is used only for samplewise identities that telescope token-level quantities along a trajectory; plain text generation satisfies it, and with tools it requires tool outputs to be deterministic functions of the history. A target policy is \emph{realizable} if the model class $\{\pi_\theta\}$ contains it. Statements about exact fits assume realizability; gradient identities do not.

\input{contents/framework}

\input{contents/granularity}

\input{contents/estimators}

\input{contents/agentic}

\section{Related Work}
\label{sec:related}
\paragraph{KL-regularized policy improvement.}
Lemma~\ref{lem:common-target} is the classical solution of a relative-entropy-regularized improvement step. REPS, MPO, and AWR \citep{peters2010reps,abdolmaleki2018mpo,awr2019} fit a parametric policy to it by weighted maximum likelihood, a forward-KL projection; policy mirror descent \citep{lan2023pmd} analyzes repeated steps; and RPG \citep{rpg2026} maximizes the regularized return directly, a reverse-KL projection weighted by the trainer-to-sampler ratio. KLPO fits the same target through its log-ratio condition. The three projections share the unrestricted optimum and differ within a restricted model class (Appendices~\ref{app:rpg} and~\ref{app:related}).

\paragraph{Log-ratio regression and preference optimization.}
DPO \citep{rafailov2023dpo} reads $\beta$ times a policy log-ratio as an implicit reward. SPPO \citep{sppo2024} and GPO \citep{gpo2025} regress log-ratios onto win rates and preference scores, and REBEL \citep{rebel2024} regresses log-ratio differences onto reward differences. Section~\ref{sec:instances} places all three within KLPO through the signal $f$ rather than a scalar reward, since general preferences can be cyclic.

\paragraph{Token-level PMD and score centering.}
BPO \citep{bpo2026} starts from token-level PMD with sampler advantages. Its Equation~(25) derives the full-KL gradient of the exact trajectory loss before linearizing the residual, and its practical loss adds group normalization, a binary KL, smoothing, and clipping; our sequence regression keeps the residual, drops the prompt value, and allows random tool outputs. Score centering \citep{scorecentering2026} subtracts the sampler mean of the trainer's score and multiplies by the reward. KLPO token regression applies the same correction \citep[Equation~(4)]{scorecentering2026} with coefficient $R-\beta\ell_{\theta,u}$, and its top-$K$ variant \citep[Section~4.1]{scorecentering2026} is the derivative of TopK-KL.

\paragraph{Off-policy and critic-free RL for language models.}
Clipped importance ratios \citep{schulman2017ppo} and group-relative advantages \citep{shao2024deepseekmath} are the standard ways to handle stale data and to avoid a critic. FlashREINFORCE \citep{flashreinforce2026} studies critic-free, single-rollout asynchronous training with token importance sampling and batch reward centering. KLPO changes the objective so that the sampler probability enters a log-ratio rather than a multiplicative weight.

\paragraph{Soft values and implicit Q-functions.}
Regularized MDPs \citep{geist2019regularized} and soft Q-learning \citep{haarnoja2017softq} characterize KL-regularized optimal policies through Bellman equations, and IQL \citep{kostrikov2022iql}, IQ-Learn \citep{garg2021iqlearn}, and \citet{rafailov2024qfunction} extract policies from value functions or read policies as value functions. At an exact fit, the response-level KLPO log-ratio is the optimal sampler-regularized soft advantage (Appendix~\ref{sec:qstar}); token-level KLPO fits ordinary sampler advantages and learns no value function.

\section{Conclusion}
\label{sec:discussion}
We introduced KLPO, which anchors the KL regularizer at the sampler and fits the resulting Gibbs target by regressing its log-ratio optimality condition on the sampler's own trajectories, without importance weights. Profiling the intercept turns the unknown normalizer into a sampler mean plus a sampler-to-trainer KL divergence, and at the token level only the local KL remains. The profiled token-level gradient then follows from terminal returns, through sampler-centered scores or a single trajectory residual; both routes hold under random tool outputs, and independent Monte Carlo estimates of the KL keep them unbiased. The same principle recovers SPPO, GPO, REBEL, and BPO as special cases. Token-level KLPO trains from a single rollout per prompt, with no critic, learned normalizer, or group of responses, which makes it a natural fit for asynchronous agentic training. Next steps include large-scale empirical evaluation, finite-sample and convergence analysis, and value learning for stochastic environments (Appendix~\ref{app:learning-soft-targets}).

\section*{Acknowledgement}

We used large language models to assist with the proofs and to improve the clarity and readability of this manuscript.

%% file: figures/overview.tex
\begingroup
\input{figures/style}

\tikzset{
  klpo fig flow/.style={
    -{Latex[length=2.0mm,width=1.35mm]},
    draw=klpoBlue,
    line width=.85pt
  }
}
\newcommand{\klpostepheading}[1]{%
  {\raggedright\sffamily\bfseries\fontsize{8.5}{10}\selectfont #1\par}%
}
\tcbset{
  klpo theory step/.style={
    enhanced,colback=klpoOrange!3,colframe=klpoOrange!30,
    boxrule=.4pt,arc=1.6mm,left=.9mm,right=.9mm,
    top=.9mm,bottom=.9mm,boxsep=0pt,before skip=0pt,after skip=0pt,
    valign=top
  },
  klpo practice step/.style={
    enhanced,colback=klpoBlue!3,colframe=klpoBlue!25,
    boxrule=.4pt,arc=1.6mm,left=.9mm,right=.9mm,
    top=.9mm,bottom=.9mm,boxsep=0pt,before skip=0pt,after skip=0pt,
    valign=top
  }
}

\begin{tcolorbox}[
  enhanced,colback=klpoInk!4,colframe=klpoInk!4,boxrule=0pt,
  arc=1.6mm,left=1.2mm,right=1.2mm,top=.45mm,bottom=.45mm,boxsep=0pt,
  before skip=0pt,after skip=0pt]
\centering\sffamily\scriptsize\color{klpoInk}
\(\displaystyle
\pi_{\mathrm{sampler}}:=\pisamp,\qquad
\ell_{\theta,u}:=\log\frac{\pitrainer_\theta(y_u\mid x,y_{<u})}{\pi_{\mathrm{sampler}}(y_u\mid x,y_{<u})},\qquad
A_u:=A^{\mathrm{sampler}}_{t,k}(x,y_{<u},y_u)
\)
\\[-.2mm]
\(\displaystyle
k_{\theta,u}:=\KL\big(\pi_{\mathrm{sampler}}(\cdot\mid x,y_{<u})\|\pi_\theta(\cdot\mid x,y_{<u})\big).
\)
\quad {\tiny At \(s_u=(x,y_{<u})\): \(q=\pi_{\mathrm{sampler}},\ p=\pi_\theta\).}
\end{tcolorbox}

\vspace{.7mm}

\noindent
\begin{minipage}[c]{.30\linewidth}
\begin{tcolorbox}[klpo theory step,equal height group=klpo-overview-row-one]
\klpostepheading{\color{klpoOrange}Step 1. Local RPG--URKL Objective}
{\sffamily\tiny\color{klpoGray}regularized local policy improvement}\par
\vspace{.55mm}
\centering\scriptsize\color{klpoInk}
\(\displaystyle
\begin{aligned}
J_u(\pi)
&:=\EE_{y_u\sim\pi(\cdot\mid x,y_{<u})} A_u\\[-1pt]
&\quad-\beta\,\UKL\!\left(
\pi\|\pi_{\mathrm{sampler}}\right).
\end{aligned}
\)
\end{tcolorbox}
\end{minipage}%
\hfill
\begin{minipage}[c]{.035\linewidth}
\centering
\begin{tikzpicture}[baseline=-.5ex]
\draw[klpo fig flow] (0,0)--(.42,0);
\end{tikzpicture}
\end{minipage}%
\hfill
\begin{minipage}[c]{.30\linewidth}
\begin{tcolorbox}[klpo theory step,equal height group=klpo-overview-row-one]
\klpostepheading{\color{klpoOrange}Step 2. Gibbs / PMD Optimum}
{\sffamily\tiny\color{klpoGray}closed-form optimal policy}\par
\vspace{.55mm}
\centering\scriptsize\color{klpoInk}
\(\displaystyle
\begin{gathered}
\pi^+_{\mathrm{tok}}(y_u\mid x,y_{<u})\\[1pt]
=\pi_{\mathrm{sampler}}(y_u\mid x,y_{<u})\\[1pt]
\cdot\exp\!\left(
\frac{A_u-c^*_{\mathrm{tok},u}}{\beta}
\right).
\end{gathered}
\)
\end{tcolorbox}
\end{minipage}%
\hfill
\begin{minipage}[c]{.035\linewidth}
\centering
\begin{tikzpicture}[baseline=-.5ex]
\draw[klpo fig flow] (0,0)--(.42,0);
\end{tikzpicture}
\end{minipage}%
\hfill
\begin{minipage}[c]{.30\linewidth}
\begin{tcolorbox}[klpo theory step,equal height group=klpo-overview-row-one]
\klpostepheading{\color{klpoOrange}Step 3. Exact Normalizer \(c^*\)}
{\sffamily\tiny\color{klpoGray}log-partition at the target}\par
\vspace{.45mm}
\centering\scriptsize\color{klpoInk}
\(\displaystyle
\begin{gathered}
c^*_{\mathrm{tok},u}
=\beta\log\EE_{y_u\sim\pi_{\mathrm{sampler}}}
  e^{A_u/\beta}\\[1pt]
=\beta\KL\!\left(
\pi_{\mathrm{sampler}}\|\pi^+_{\mathrm{tok}}
\right).
\end{gathered}
\)
\end{tcolorbox}
\end{minipage}\par

\vspace{.2mm}
\noindent\hfill
\begin{minipage}[c]{.30\linewidth}
\centering
{\sffamily\tiny\color{klpoGray}take logs}\\[-1.1mm]
\begin{tikzpicture}
\draw[klpo fig flow] (0,.12)--(0,-.24);
\end{tikzpicture}
\end{minipage}
\vspace{.2mm}

\noindent
\begin{minipage}[c]{.30\linewidth}
\begin{tcolorbox}[klpo theory step,equal height group=klpo-overview-row-two]
\klpostepheading{\color{klpoOrange}Step 6. Profiled Token Objective}
{\sffamily\tiny\color{klpoGray}replace the fixed intercept by its profiled value}\par
\vspace{.35mm}
\centering\scriptsize\color{klpoInk}
\(\displaystyle
\begin{gathered}
c_{\theta,u}:=\beta k_{\theta,u},\\[1pt]
\begin{aligned}
\delta_u
&:=\frac{A_u-c_{\theta,u}}{\beta}-\ell_{\theta,u},\\[-1pt]
\mathcal L_{\mathrm{tok}}
&:=\frac{\beta}{2}\EE_{\pi_{\mathrm{sampler}}}\sum_u\delta_u^2.
\end{aligned}
\end{gathered}
\)
\end{tcolorbox}
\end{minipage}%
\hfill
\begin{minipage}[c]{.035\linewidth}
\centering
\begin{tikzpicture}[baseline=-.5ex]
\draw[klpo fig flow] (.42,0)--(0,0);
\end{tikzpicture}
\end{minipage}%
\hfill
\begin{minipage}[c]{.30\linewidth}
\begin{tcolorbox}[klpo theory step,equal height group=klpo-overview-row-two]
\klpostepheading{\color{klpoOrange}Step 5. Fixed-Normalizer Regression}
{\sffamily\tiny\color{klpoGray}fit the exact optimality condition}\par
\vspace{.5mm}
\centering\scriptsize\color{klpoInk}
\(\displaystyle
d^{\mathrm{fix}}_{\theta,u}
:=\beta\ell_{\theta,u}-A_u+c^*_{\mathrm{tok},u}
\)
\\[-.2mm]
\(\displaystyle
\mathcal L_{\mathrm{tok}}^{\mathrm{fix}}
:=\frac{1}{2\beta}\EE_{\pi_{\mathrm{sampler}}}
\sum_u\big(d^{\mathrm{fix}}_{\theta,u}\big)^2.
\)
\end{tcolorbox}
\end{minipage}%
\hfill
\begin{minipage}[c]{.035\linewidth}
\centering
\begin{tikzpicture}[baseline=-.5ex]
\draw[klpo fig flow] (.42,0)--(0,0);
\end{tikzpicture}
\end{minipage}%
\hfill
\begin{minipage}[c]{.30\linewidth}
\begin{tcolorbox}[klpo theory step,equal height group=klpo-overview-row-two]
\klpostepheading{\color{klpoOrange}Step 4. Optimality Condition}
{\sffamily\tiny\color{klpoGray}the policy solution becomes a log-ratio target}\par
\vspace{.75mm}
\centering\scriptsize\color{klpoInk}
\(\displaystyle
\beta\log
\frac{\pi^+_{\mathrm{tok}}(y_u\mid x,y_{<u})}
     {\pi_{\mathrm{sampler}}(y_u\mid x,y_{<u})}
=A_u-c^*_{\mathrm{tok},u}.
\)
\end{tcolorbox}
\end{minipage}\par

\vspace{.2mm}
\noindent
\begin{minipage}[c]{.30\linewidth}
\centering
{\sffamily\tiny\color{klpoGray}terminal returns + score centering}\\[-1.1mm]
\begin{tikzpicture}
\draw[klpo fig flow] (0,.12)--(0,-.24);
\end{tikzpicture}
\end{minipage}
\hfill\mbox{}
\vspace{.2mm}

\noindent
\begin{minipage}[c]{.30\linewidth}
\begin{tcolorbox}[klpo practice step,equal height group=klpo-overview-row-three]
\klpostepheading{\color{klpoBlue}Step 7. KLPO Token Regression}
{\sffamily\tiny\color{klpoGray}BPO-based critic-free PMD}\par
\vspace{.3mm}
\centering\scriptsize\color{klpoInk}
\(\displaystyle
\begin{gathered}
\begin{aligned}
h_{\theta,u}&:=R-\beta\ell_{\theta,u},\\[1pt]
\bar h_{\theta,u}&:=\EE_{\pi_{\mathrm{sampler}}}[h_{\theta,u}\mid x,y_{<u}],\\[1pt]
\mathcal L_{\mathrm{TR}}&:=\frac{1}{2\beta}\EE_{\pi_{\mathrm{sampler}}}\\[-1pt]
&\quad\ \textstyle\sum_u(h_{\theta,u}-\bar h_{\theta,u})^2,
\end{aligned}\\[2pt]
\nabla\mathcal L_{\mathrm{TR}}
=\nabla\mathcal L_{\mathrm{tok}}.
\end{gathered}
\)
\end{tcolorbox}
\end{minipage}%
\hfill
\begin{minipage}[c]{.035\linewidth}
\centering
\begin{tikzpicture}[baseline=-.5ex]
\draw[klpo fig flow] (0,0)--(.42,0);
\end{tikzpicture}
\end{minipage}%
\hfill
\begin{minipage}[c]{.30\linewidth}
\begin{tcolorbox}[klpo practice step,equal height group=klpo-overview-row-three]
\klpostepheading{\color{klpoBlue}Step 8. Exact Backprop Surrogate}
{\sffamily\tiny\color{klpoGray}sampler-centered scores}\par
\vspace{.3mm}
\centering\scriptsize\color{klpoInk}
\(\displaystyle
\begin{gathered}
q_v:=\pi_{\mathrm{sampler}}(v\mid x,y_{<u}),\\[1pt]
p_v:=\pi_\theta(v\mid x,y_{<u}),\\[2pt]
z_{\theta,u}:=\log p_{y_u}-\sum_vq_v\log p_v,\\[2pt]
L^{\mathrm{bp,TR}}_{\theta,u}:=-\sg(h_{\theta,u})\,z_{\theta,u},\\[2pt]
\nabla^{\mathrm{AD}}\EE_{\pi_{\mathrm{sampler}}}\textstyle\sum_uL^{\mathrm{bp,TR}}_{\theta,u}
=\nabla\mathcal L_{\mathrm{tok}}.
\end{gathered}
\)
\end{tcolorbox}
\end{minipage}%
\hfill
\begin{minipage}[c]{.035\linewidth}
\centering
\begin{tikzpicture}[baseline=-.5ex]
\draw[klpo fig flow] (0,0)--(.42,0);
\end{tikzpicture}
\end{minipage}%
\hfill
\begin{minipage}[c]{.30\linewidth}
\begin{tcolorbox}[klpo practice step,equal height group=klpo-overview-row-three]
\klpostepheading{\color{klpoBlue}Step 9. Monte Carlo KL}
{\sffamily\tiny\color{klpoGray}independent draws; unbiased score mean}\par
\vspace{.2mm}
\centering\scriptsize\color{klpoInk}
\(\displaystyle
\begin{gathered}
v_{u,j}\overset{\mathrm{iid}}{\sim}q,\qquad M\geq1,\\[2pt]
\widehat z^M_{\theta,u}:=\log p_{y_u}
-\frac1M\sum_j\log p_{v_{u,j}},\\[2pt]
L^{\mathrm{bp,TR\text{-}MC}}_{\theta,u}
:=-\sg(h_{\theta,u})\,\widehat z^M_{\theta,u}.
\end{gathered}
\)
\end{tcolorbox}
\end{minipage}\par

\vspace{.7mm}

\begin{tcolorbox}[
  enhanced,colback=klpoInk!4,colframe=klpoInk!4,boxrule=0pt,
  arc=1.6mm,left=1.2mm,right=1.2mm,top=.45mm,bottom=.45mm,boxsep=0pt,
  before skip=0pt,after skip=0pt]
\centering\sffamily\scriptsize\color{klpoInk}
{\bfseries Exact routes:}\quad
\(\displaystyle
\nabla\mathcal L_{\mathrm{SR}}
=\nabla\mathcal L_{\mathrm{tok}}
=\nabla\mathcal L_{\mathrm{TR}}.
\)
\\[.8mm]
{\sffamily\tiny\color{klpoGray}Both routes: complete matched-sampler rollouts with exact centering or full KL; deterministic transitions are not required.}
\\[.8mm]
\(\displaystyle
v_{u,j}\overset{\mathrm{iid}}{\sim}q(\cdot\mid s_u),\quad M\geq1,\quad
\EE_{\mathrm{MC}}\nabla_\theta^{\mathrm{AD}}\sum_uL^{\mathrm{bp,TR\text{-}MC}}_{\theta,u}
=\nabla_\theta^{\mathrm{AD}}\sum_uL^{\mathrm{bp,TR}}_{\theta,u}.
\)
\\[.8mm]
{\sffamily\tiny\color{klpoGray}Sequence regression (Steps \(7^\dagger\)--\(9^\dagger\)): Figure~\ref{fig:score-alternative}. TopK-KL and binary KL: Section~\ref{sec:kl-estimators}.}
\end{tcolorbox}
\endgroup

%% file: figures/asynchronous.tex
\begingroup
\input{figures/style}
\resizebox{\linewidth}{!}{%
\begin{tikzpicture}[x=1cm,y=1cm,>=Latex,font=\sffamily\small]
 \path[use as bounding box] (0,0) rectangle (16,6.15);
 \fill[klpoBlue!4,rounded corners=5pt] (0,1.05) rectangle (3.9,6.15);
 \fill[klpoOrange!4,rounded corners=5pt] (9.05,1.05) rectangle (16,6.15);
 \node[klpo heading,text=klpoBlue,font=\sffamily\bfseries\small]
   at (.25,5.78) {Async workers};
 \node[klpo note,anchor=west,align=left,text width=3.3cm]
   at (.25,5.2) {Stale checkpoints\\different runtimes};
 \foreach \yy/\idx/\lag in {4.25/1/1,3.0/2/2,1.75/3/K_t} {
   \node[klpo box,text width=3.0cm,minimum height=.82cm]
     (worker\idx) at (1.95,\yy)
     {$\pisampler_{\theta_{t-\lag}}$\quad agent + tools};
 }
 \node[klpo box,text width=3.15cm,minimum height=1.55cm]
   (replay) at (6.6,3.0)
   {Completed\\trajectories\\[4pt]
    $x_i,\ y_i,\ T_i,\ R_i,\ v_i$\\[4pt]
    {\footnotesize Sampler log-probs\\+ KL records}};
 \draw[klpo flow] (worker1.east)--(4.3,4.25)--(4.3,3.3)
   --(replay.west |- {0,3.3});
 \draw[klpo flow] (worker2.east)--(replay.west);
 \draw[klpo flow] (worker3.east)--(4.3,1.75)--(4.3,2.7)
   --(replay.west |- {0,2.7});
 \node[klpo note,anchor=north,align=center,text width=3.5cm]
   at ([yshift=-2mm]replay.south) {Retain sampler version $v_i$};
 \node[klpo heading,text=klpoOrange,font=\sffamily\bfseries\small]
   at (9.35,5.78) {Learner};
 \node[klpo note,anchor=east] at (15.7,5.78) {Optimizer step $t$};
 \node[klpo fit,text width=5.75cm,minimum height=1.0cm]
   (current) at (12.4,4.75)
   {Current trainer $\pitrainer_{\theta_t}$\\[3pt]Recompute log-probabilities};
 \node[klpo fit,text width=5.75cm,minimum height=1.55cm]
   (fit) at (12.4,3.0)
   {KLPO token or sequence regression\\[5pt]
    KL correction:\\{\footnotesize MC-KL, TopK-KL, binary, or full}};
 \draw[klpo flow] (replay.east)--node[above,klpo note,font=\sffamily\scriptsize]
   {batch}(fit.west);
 \draw[klpo update] (current.south)--(fit.north);
 \node[klpo note,align=center,text width=5.8cm] at (12.4,1.5)
   {One rollout per prompt\\No critic or auxiliary normalizer};
 \draw[klpo update] (fit.east)--(15.78,3.0)--(15.78,4.75)--(current.east);
 \draw[klpo update,dashed] (15.78,3.0)--(15.78,.38)
   --(1.95,.38)--(1.95,1.05);
 \node[fill=white,inner sep=3pt,klpo note,text=klpoOrange]
   at (8.85,.38) {Publish new checkpoints; stored denominators stay unchanged};
\end{tikzpicture}%
}
\endgroup

%% file: contents/framework.tex
\section{KL-Regularized Policy Optimization}
\label{sec:framework}
KLPO is built from one primitive, a KL-regularized improvement step taken from the policy that collected the data. We state it generically: a context $h$ is followed by an outcome $z$ drawn from a positive sampler conditional $\mu(z\mid h)$ on a finite support, such as a prompt and a complete response or a history and the next token. An improvement signal $f(h,z)$, such as a reward, an advantage, or a win rate, scores outcomes, and $\beta>0$ is fixed. Proofs for this section are in Appendix~\ref{app:common-proof}.

\subsection{Sampler-Anchored Improvement}
\label{sec:improvement}
The improvement step maximizes the expected signal while penalizing divergence from the sampler,
\begin{equation}
 \max_{\pi}\ \EE_{z\sim\pi}f(h,z)-\beta\KL\big(\pi\,\|\,\mu(\cdot\mid h)\big),
 \label{eq:framework-objective}
\end{equation}
over distributions $\pi$ on the support. This is the unnormalized reverse-KL (URKL) objective of RPG \citep{rpg2026} with the sampler as reference, the policy-improvement step of REPS and MPO \citep{peters2010reps,abdolmaleki2018mpo}, and one iteration of entropic policy mirror descent \citep{lan2023pmd}. Its solution is classical.

\begin{lemma}[Gibbs Optimality Condition]
\label{lem:common-target}
The unique maximizer of Equation~\eqref{eq:framework-objective} is
\begin{equation}
 \pi_f^\star(z\mid h)=\mu(z\mid h)e^{(f(h,z)-c_f(h))/\beta},\qquad
 c_f(h)=\beta\log\EE_{z\sim \mu(\cdot\mid h)}e^{f(h,z)/\beta},
 \label{eq:common-target}
\end{equation}
and for every distribution $\pi$ the objective equals $c_f(h)-\beta\KL(\pi\,\|\,\pi_f^\star(\cdot\mid h))$.
\end{lemma}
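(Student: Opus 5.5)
The plan is to prove the identity first and read off uniqueness from it. Fix $h$ and abbreviate $\mu=\mu(\cdot\mid h)$, $f(z)=f(h,z)$ and $c=c_f(h)$. By Assumption~\ref{ass:finite}, and the fact that $\mu$ is positive on a finite support, the expectation $\EE_{z\sim\mu}e^{f(z)/\beta}$ is a finite positive number. So $c$ is well defined.

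Next I check that $\pi_f^\star=\mu\,e^{(f-c)/\beta}$ is a distribution on the support. It is positive everywhere. Summing over $z$ gives $e^{-c/\beta}\EE_\mu e^{f/\beta}=1$ by the definition of $c$. This also means $\KL(\pi\|\pi_f^\star)$ is finite for every $\pi$ on the support, because $\pi_f^\star$ is strictly positive there.

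The key step is the pointwise rearrangement of the optimality relation:
\[
f(z)=\beta\log\frac{\pi_f^\star(z)}{\mu(z)}+c \qquad \text{for every } z \text{ in the support.}
\]
Take an arbitrary distribution $\pi$ on the support and substitute this into the objective of Equation~\eqref{eq:framework-objective}. This gives
\[
\EE_\pi f-\beta\EE_\pi\log\frac{\pi}{\mu}
= c+\beta\EE_\pi\Big[\log\frac{\pi_f^\star}{\mu}-\log\frac{\pi}{\mu}\Big]
= c-\beta\KL(\pi\,\|\,\pi_f^\star).
\]
The log-ratios against $\mu$ cancel. Terms with $\pi(z)=0$ are handled by the convention $0\log 0=0$ and contribute nothing on either side.

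Uniqueness then follows from Gibbs' inequality. Since $\KL(\pi\|\pi_f^\star)\ge 0$, with equality exactly when $\pi=\pi_f^\star$, the objective is at most $c$. That bound is attained only at $\pi_f^\star$, because $\beta>0$.

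I do not expect a real obstacle here. The only points needing care are the support and zero-probability conventions, so that every quantity is finite and the cancellation is legitimate, and both are covered by the finite positive-support hypothesis.
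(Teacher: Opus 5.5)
Your proposal is correct and takes the same route as the paper: normalize $\pi_f^\star$ via $c_f$, rewrite the objective as $c_f(h)-\beta\KL(\pi\,\|\,\pi_f^\star)$, and get uniqueness from the nonnegativity of KL, which vanishes only at $\pi=\pi_f^\star$. Your added checks on finiteness and the $0\log 0$ convention just spell out steps the paper leaves implicit.
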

Taking logarithms, the maximizer satisfies one equation per outcome:
\begin{equation}
 \beta\log\frac{\pi_f^\star(z\mid h)}{\mu(z\mid h)}=f(h,z)-c_f(h).
 \label{eq:framework-condition}
\end{equation}
The left side involves only policy probabilities, and the right side is the signal shifted by a context-dependent constant. KLPO fits this equation directly.

\subsection{Regressing the Optimality Condition}
\label{sec:regression-principle}
For a trainable policy $\pi_\theta$, the residual of Equation~\eqref{eq:framework-condition} is
\[
 r_\theta(h,z)=\beta\log\frac{\pi_\theta(z\mid h)}{\mu(z\mid h)}-f(h,z)+c_f(h)
 =\beta\log\frac{\pi_\theta(z\mid h)}{\pi_f^\star(z\mid h)}.
\]
\begin{theorem}[Optimality-Condition Regression]
\label{thm:common-regression}
Let $\nu$ be a replay law over pairs $(h,z)$ and $a>0$ a weight, both held fixed during differentiation. Then
\begin{equation}
 \mathcal L_f=\frac1{2\beta}\EE_\nu[a\,r_\theta^2],\qquad
 \nabla_\theta\mathcal L_f=\EE_\nu[a\,r_\theta\nabla_\theta\log \pi_\theta].
 \label{eq:common-regression}
\end{equation}
If $\nu$ charges every pair in the support and $\pi_f^\star$ is realizable, then $\mathcal L_f=0$ exactly when $\pi_\theta=\pi_f^\star$ on the support.
\end{theorem}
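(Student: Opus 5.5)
The plan has two parts: a direct differentiation for the gradient formula, and a positivity argument for the characterization of the zero set.

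\textbf{Gradient.} Since $\nu$ and $a$ are fixed during differentiation, I would pass the gradient inside the expectation. Under Assumption~\ref{ass:finite} the support is finite and $\pi_\theta$ is positive there, so $\mathcal L_f$ is a finite sum of smooth terms. In the residual $r_\theta(h,z)=\beta\log\pi_\theta(z\mid h)-\beta\log\mu(z\mid h)-f(h,z)+c_f(h)$, only the first term depends on $\theta$. The sampler $\mu$, the signal $f$ and the normalizer $c_f$ are all $\theta$-free, because $c_f$ is defined in Lemma~\ref{lem:common-target} purely from $\mu$ and $f$. Hence
\[
\nabla_\theta r_\theta=\beta\nabla_\theta\log\pi_\theta,
\]
and the chain rule gives
\[
\nabla_\theta\tfrac1{2\beta}a\,r_\theta^2=\tfrac1{\beta}a\,r_\theta\cdot\beta\nabla_\theta\log\pi_\theta=a\,r_\theta\nabla_\theta\log\pi_\theta .
\]
Taking the $\nu$-expectation then yields Equation~\eqref{eq:common-regression}.

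\textbf{Zero set.} Here I would use the second form of the residual, $r_\theta=\beta\log(\pi_\theta/\pi_f^\star)$. This follows by substituting Equation~\eqref{eq:framework-condition}, i.e.\ $\beta\log(\pi_f^\star/\mu)=f-c_f$, into the definition of $r_\theta$.

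For the ``if'' direction: when $\pi_\theta=\pi_f^\star$ on the support, every residual is zero, so $\mathcal L_f=0$.

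For the ``only if'' direction: since $a>0$ and $r_\theta^2\ge0$, $\mathcal L_f=0$ forces $r_\theta(h,z)=0$ at every pair carrying positive $\nu$-mass. By hypothesis $\nu$ charges every pair in the support, so $\pi_\theta(z\mid h)=\pi_f^\star(z\mid h)$ for all supported $(h,z)$.

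Realizability is what makes the characterization non-vacuous. It guarantees that some $\theta$ achieves $\mathcal L_f=0$, so the zero set of $\mathcal L_f$ coincides with the set of parameters realizing $\pi_f^\star$.

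\textbf{Main point of care.} The only step needing care is the observation that $c_f(h)$ carries no $\theta$-dependence. This is precisely the fixed-normalizer setting, and it contrasts with the profiled version later in the paper, where the intercept depends on $\theta$. I would state this explicitly.

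A secondary technicality is normalization. The pointwise equality $\pi_\theta=\pi_f^\star$ is required on each supported outcome, but the argument does not need to invoke normalization at all. Zero residuals give the pointwise equality directly, so no separate intercept or normalization argument is required.
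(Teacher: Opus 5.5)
Your proposal is correct and matches the paper's proof. The paper likewise rewrites the residual as $r_\theta=\beta\log(\pi_\theta/\pi_f^\star)$, so the target attains zero loss and positive replay mass and weights force every residual to vanish at a zero-loss fit, and it obtains the gradient from $\nabla_\theta r_\theta=\beta\nabla_\theta\log\pi_\theta$ with $\nu$, $a$, and $c_f$ held fixed.
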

Any replay law that covers the support identifies the same target, with no importance multiplier in the gradient. The regularized-return gradient differs by a single factor: differentiating Equation~\eqref{eq:framework-objective} at $\pi=\pi_\theta$ and rewriting the expectation under the sampler gives
\begin{equation}
 -\nabla_\theta\Big[\EE_{\pi_\theta}f-\beta\KL(\pi_\theta\,\|\,\mu)\Big]
 =\EE_{z\sim \mu}\Big[\frac{\pi_\theta(z\mid h)}{\mu(z\mid h)}\,r_\theta(h,z)\,\nabla_\theta\log \pi_\theta(z\mid h)\Big],
 \label{eq:framework-return-gradient}
\end{equation}
which is the regression gradient with $\nu=\mu$ and $a=1$, multiplied per sample by the likelihood ratio. The two coincide at $\pi_\theta=\mu$; within a restricted model class their minimizers can differ, because the return minimizes $\KL(\pi_\theta\|\pi_f^\star)$ and the regression minimizes $\EE_{\mu}[\log^2(\pi_\theta/\pi_f^\star)]$ (Appendix~\ref{app:rpg}).

\subsection{Profiling the Normalizer}
\label{sec:profiling}
The residual contains the log-partition $c_f(h)$, an average of $e^{f/\beta}$ over every outcome, which requires the signal for all outcomes or several outcomes at the same context. We instead treat the intercept as a free parameter and minimize over it. With
$\ell_\theta(h,z)=\log(\pi_\theta(z\mid h)/\mu(z\mid h))$, note that $\KL(\mu(\cdot\mid h)\|\pi_\theta(\cdot\mid h))=-\EE_{z\sim \mu}\ell_\theta(h,z)$.
\begin{proposition}[Profiled Regression]
\label{prop:profiling}
Fix a context $h$. Over $c\in\mathbb R$, the loss $\EE_{z\sim \mu}[(\beta\ell_\theta-f+c)^2]$ is minimized by
\begin{equation}
 \blueeqbox{\displaystyle
 c_\theta(h)=\EE_{z\sim \mu}f(h,z)+\beta\KL\big(\mu(\cdot\mid h)\,\|\,\pi_\theta(\cdot\mid h)\big)},
 \label{eq:general-profile}
\end{equation}
so the profiled loss equals $\Var_{\mu}(f-\beta\ell_\theta)/(2\beta)$. Its gradient is that of Theorem~\ref{thm:common-regression} with $c_\theta$ in place of $c_f$, because the profiled residual has zero sampler mean and the derivative of $c_\theta$ cancels. If $\pi_f^\star$ is realizable, the profiled loss vanishes exactly at $\pi_\theta=\pi_f^\star$, where $c_\theta=c_f$. In particular,
\begin{equation}
 c_f(h)=\EE_{z\sim \mu}f(h,z)+\beta\KL\big(\mu(\cdot\mid h)\,\|\,\pi_f^\star(\cdot\mid h)\big).
 \label{eq:normalizer-kl}
\end{equation}
\end{proposition}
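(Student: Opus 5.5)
The loss $c\mapsto \EE_{z\sim\mu}[(\beta\ell_\theta-f+c)^2]$ is a strictly convex quadratic in the scalar $c$, so the plan is to set its derivative to zero, read off the minimizer, and then substitute it back. The first-order condition is $\EE_\mu[\beta\ell_\theta-f+c]=0$, which gives
\[
c_\theta(h)=\EE_\mu f-\beta\EE_\mu\ell_\theta .
\]
Using the identity $\KL(\mu\|\pi_\theta)=-\EE_\mu\ell_\theta$ noted just before the statement, this is Equation~\eqref{eq:general-profile}. Substituting $c_\theta$ back, the profiled residual is
\[
\beta\ell_\theta-f+c_\theta=-\big[(f-\beta\ell_\theta)-\EE_\mu(f-\beta\ell_\theta)\big].
\]
Its mean square is therefore $\Var_\mu(f-\beta\ell_\theta)$. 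With the prefactor $1/(2\beta)$ inherited from Theorem~\ref{thm:common-regression}, this gives the stated profiled loss.

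For the gradient, I will argue the envelope/Danskin step directly rather than cite it. Write $r_\theta^{c}=\beta\ell_\theta-f+c$, and set $\mathcal P(\theta)=\frac1{2\beta}\EE_\mu[(r^{c_\theta(h)}_\theta)^2]$, with $c_\theta(h)$ as above. By the chain rule, $\nabla_\theta\mathcal P$ has two parts:
\begin{itemize}
\item the explicit part $\frac1{2\beta}\EE_\mu[2r_\theta^{c_\theta}\,\beta\nabla\log\pi_\theta]=\EE_\mu[r^{c_\theta}_\theta\nabla\log\pi_\theta]$, using $\nabla\ell_\theta=\nabla\log\pi_\theta$ because $\mu$ is fixed;
\item the implicit part $\frac1\beta\EE_\mu[r^{c_\theta}_\theta]\,\nabla c_\theta(h)$.
\end{itemize}
The implicit part vanishes because the first-order condition says exactly that the profiled residual has zero $\mu$-mean. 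This is the form of Theorem~\ref{thm:common-regression} with $\nu=\mu(\cdot\mid h)$, $a=1$, and $c_\theta$ in place of $c_f$. Since $c_\theta$ is smooth in $\theta$ (it involves finite sums of $\log\pi_\theta$ by Assumption~\ref{ass:finite}), the differentiation is legitimate.

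For the vanishing statement, the variance is zero iff $f-\beta\ell_\theta$ is $\mu$-a.s. constant in $z$. Since $\mu$ is positive on the finite support, this means it is constant on the whole support, i.e. $\pi_\theta(z\mid h)\propto\mu(z\mid h)e^{f(h,z)/\beta}$. Because $\pi_\theta$ is normalized, this forces $\pi_\theta=\pi_f^\star$ by Lemma~\ref{lem:common-target}. Conversely, at $\pi_\theta=\pi_f^\star$, Equation~\eqref{eq:framework-condition} gives $f-\beta\ell_\theta\equiv c_f(h)$, so the variance vanishes; realizability guarantees that this point is attained in the model class. At that point the constant value of $f-\beta\ell_\theta$ is both $c_f$, by \eqref{eq:framework-condition}, and its own $\mu$-mean, which is $c_\theta$ by the first-order condition. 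Hence $c_\theta=c_f$.

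Evaluating \eqref{eq:general-profile} at $\pi_\theta=\pi_f^\star$ then gives \eqref{eq:normalizer-kl}. Alternatively, \eqref{eq:normalizer-kl} follows directly as an identity not requiring realizability: take the $\mu$-expectation of \eqref{eq:framework-condition} and use $\EE_\mu\log(\pi_f^\star/\mu)=-\KL(\mu\|\pi_f^\star)$. I will present this direct route, since it is independent of the model class. The only mildly delicate step is the gradient cancellation, and it reduces to the zero-mean first-order condition.
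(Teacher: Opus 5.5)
Your proposal is correct and follows essentially the paper's argument. The paper completes the square, $\EE_\mu[(X+c)^2]=\Var_\mu(X)+(c+\EE_\mu X)^2$ with $X=\beta\ell_\theta-f$, where you use the first-order condition, and it handles the gradient by the same zero-mean cancellation of the $\nabla_\theta c_\theta$ term. Your direct derivation of \eqref{eq:normalizer-kl}, taking the $\mu$-expectation of \eqref{eq:framework-condition}, is the same computation the paper uses for the token normalizer, and it correctly shows that this identity needs no realizability.
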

Equation~\eqref{eq:normalizer-kl} writes the log-partition at the target as a sampler mean plus a KL divergence, and profiling replaces the unknown target in that divergence by the current trainer. The fixed-normalizer and profiled losses share their realizable minimizer, though their gradients differ away from it. For a signal with zero sampler mean, such as an advantage, the profiled intercept is exactly $\beta\KL(\mu\|\pi_\theta)$.

\subsection{Instances}
\label{sec:instances}
An instance of KLPO is a choice of context, outcome, and signal. We develop two (Table~\ref{tab:granularity} in Appendix~\ref{app:token-supplement} compares them).

\paragraph{Response level (SKLPO).}
With a prompt $x$ as context, a complete response $y$ as outcome, and $f=R$, the target is the response Gibbs policy $\pistar(y\mid x)\propto\pisamp(y\mid x)e^{R(x,y)/\beta}$ and the residual is $d_\theta=\beta\ell_\theta-R+c^*(x)$, where $\ell_\theta=\sum_u\ell_{\theta,u}$ sums the token log-ratios and $c^*(x)=\beta\log\EE_{\pisamp}[e^{R/\beta}\mid x]$. We call this regression SKLPO (sequence-level KLPO) and develop it in Appendix~\ref{sec:sequence}. Its profiled intercept, $\EE_{\pisamp}[R\mid x]+\beta\KL(\pisamp(\cdot\mid x)\|\pi_\theta(\cdot\mid x))$, needs the prompt's mean reward and a complete-response KL, which a single response cannot supply. This is the weak point of SKLPO for asynchronous training: the normalizer depends on the prompt and the sampler version, and profiling within a group fails for a group of one (Section~\ref{sec:klpo-sklpo}).

\paragraph{Token level (local PMD).}
With a history $s$ as context, the next token $a$ as outcome, and $f=A^{\mathrm{sampler}}_{t,k}$, the target is the policy mirror descent (PMD) step $\pi^+_{\mathrm{tok}}(a\mid s)\propto\pisamp(a\mid s)e^{A^{\mathrm{sampler}}_{t,k}(s,a)/\beta}$, in which the sampler is both the KL reference and the policy whose advantages are improved, the starting point of BPO \citep{bpo2026}. Advantages have zero sampler mean, so the profiled intercept is the local KL $\beta\KL(\pisamp(\cdot\mid s)\|\pi_\theta(\cdot\mid s))$, computable from the two next-token distributions. The residual still contains the advantage, which would ordinarily require a critic; Section~\ref{sec:granularity} obtains the gradient from terminal returns instead. The two targets differ: the token-level targets multiply to the response Gibbs policy only when the accumulated local normalizers are constant across responses (Appendix~\ref{app:path-normalizer}), and regressing against optimal soft values instead of sampler advantages recovers the response-level target (Appendix~\ref{sec:soft-token}).

\paragraph{Prior objectives as instances.}
Several existing objectives are KLPO instances once their population objective is separated from their practical loss (Table~\ref{tab:instances}). SPPO \citep{sppo2024} and GPO \citep{gpo2025} are response-level KLPO with a win rate and an expected preference score as signals, REBEL \citep{rebel2024} is the pairwise form of profiled response-level regression (Lemma~\ref{lem:empirical-centering}), and the exact trajectory loss of BPO \citep{bpo2026} equals the profiled token objective (Theorem~\ref{thm:token-bpo}). AWR and RPG \citep{awr2019,rpg2026} share the Gibbs target but fit it differently. Appendices~\ref{app:rpg} and~\ref{app:related} state the matching conditions.

\begin{table}[t]
\centering
\small
\renewcommand{\arraystretch}{1.15}
\begin{tabularx}{\textwidth}{@{}lll>{\raggedright\arraybackslash}X@{}}
\toprule
Objective & Level & Signal & Relation to KLPO \\
\midrule
SPPO & response & win rate & Exact update: exact-normalizer regression; the practical loss sets the normalizer to $1/2$ \\
GPO & response & preference score & Exact update: exact-normalizer regression; the practical loss sets the normalizer to zero \\
REBEL & response & reward & Pairwise form of profiled regression on two responses \\
BPO & token & sampler advantage & Exact trajectory loss equals the profiled token objective under Assumption~\ref{ass:deterministic}; the practical loss linearizes and approximates it \\
Score centering & token & reward & Same centered score, with coefficient $R$ in place of $R-\beta\ell_{\theta,u}$ \\
AWR & either & advantage or reward & Same Gibbs target, fitted by weighted maximum likelihood \\
RPG & response & reward & Same Gibbs target, fitted by maximizing the importance-weighted return \\
\bottomrule
\end{tabularx}
\caption{Prior objectives viewed through KLPO. ``Exact'' refers to a method's population objective before its practical approximations.}
\label{tab:instances}
\end{table}

%% file: contents/granularity.tex
\section{Token-Level KLPO without a Critic}
\label{sec:granularity}
This section follows Figure~\ref{fig:overview}. We write down the profiled token-level objective and then compute its gradient from terminal rewards along two routes: token regression, which centers the trainer's score under the sampler, and sequence regression, which telescopes the token residuals into one trajectory residual. Throughout, $x\sim\mathcal D$ and $Y\sim\pisamp(\cdot\mid x)$ are complete rollouts from one collection version, and
\begin{equation}
 \ell_{\theta,u}=\log\frac{\pi_\theta(a_u\mid s_u)}{\pisamp(a_u\mid s_u)},\qquad
 k_\theta(s)=\KL(\pisamp(\cdot\mid s)\|\pi_\theta(\cdot\mid s)).
 \label{eq:localnotation}
\end{equation}
We write $g_\theta(s,a)=\nabla_\theta\log\pi_\theta(a\mid s)$ for the trainer's score. The sampler's action value, value, and advantage are $Q^{\mathrm{sampler}}_{t,k}(s,a)=\EE_{\pisamp}[R\mid s,a]$, $V^{\mathrm{sampler}}_{t,k}(s)=\EE_{\pisamp}[R\mid s]$, and $A^{\mathrm{sampler}}_{t,k}=Q^{\mathrm{sampler}}_{t,k}-V^{\mathrm{sampler}}_{t,k}$, with expectations over the sampler's continuation. Proofs are in Appendix~\ref{app:main-proofs}.

\subsection{The Profiled Token Objective}
\label{sec:local-objective}
\label{sec:token-target}
\label{sec:token-fixed}
\label{sec:token-centering}
Step~1 of Figure~\ref{fig:overview} applies the RPG--URKL objective \citep{rpg2026} to the next-token distribution at a history $s$:
\begin{equation}
 J_s(\pi)
 =\EE_{a\sim\pi(\cdot\mid s)}A^{\mathrm{sampler}}_{t,k}(s,a)
   -\beta\UKL\big(\pi(\cdot\mid s)\|\pisamp(\cdot\mid s)\big).
 \label{eq:local-rpg-urkl}
\end{equation}
Both policies are normalized, so the unnormalized KL $\UKL(p\|q)=\sum_a[p_a\log(p_a/q_a)+q_a-p_a]$ equals the KL, and this is Equation~\eqref{eq:framework-objective} with $f=A^{\mathrm{sampler}}_{t,k}$. Steps~2--5 solve it by Lemma~\ref{lem:common-target} and fit the logarithmic condition with the fixed-normalizer residual $d^{\mathrm{fix}}_{\theta,u}=\beta\ell_{\theta,u}-A^{\mathrm{sampler}}_{t,k}(s_u,a_u)+c^*_{\mathrm{tok}}(s_u)$, the squared log-ratio construction of SPPO and GPO \citep{sppo2024,gpo2025} moved to tokens. GPO's practical loss drops the normalizer; Step~6 profiles it.

\begin{lemma}[Token PMD Target and Profiled Objective]
\label{prop:token-pmd}
\label{lem:token-centering}
The unique maximizer of Equation~\eqref{eq:local-rpg-urkl} and its normalizer are
\begin{align}
 \pi^+_{\mathrm{tok}}(a\mid s)
 &=\pisamp(a\mid s)e^{(A^{\mathrm{sampler}}_{t,k}(s,a)-c^*_{\mathrm{tok}}(s))/\beta},
 \label{eq:localtarget}\\
 c^*_{\mathrm{tok}}(s)
 &=\beta\log\EE_{a\sim\pisamp(\cdot\mid s)}
     e^{A^{\mathrm{sampler}}_{t,k}(s,a)/\beta}
 =\beta\KL(\pisamp(\cdot\mid s)\|\pi^+_{\mathrm{tok}}(\cdot\mid s)).
 \label{eq:localnormalizer}
\end{align}
Profiling the intercept at each history gives $c_\theta(s)=\beta k_\theta(s)$ and the profiled token objective
\begin{equation}
 \delta_u=
 \frac{A^{\mathrm{sampler}}_{t,k}(s_u,a_u)}{\beta}
 -\ell_{\theta,u}-k_\theta(s_u),\qquad
 \blueeqbox{\displaystyle
 \mathcal L_{\mathrm{tok}}
 =\frac{\beta}{2}\EE_{x,Y\sim\pisamp}\sum_u\delta_u^2}.
 \label{eq:tokenloss}
\end{equation}
Each residual has zero conditional mean, $\EE_{\pisamp}[\delta_u\mid s_u]=0$, and
\begin{equation}
 \nabla_\theta\mathcal L_{\mathrm{tok}}
 =-\beta\EE_{x,Y\sim\pisamp}\sum_u
 \delta_u g_\theta(s_u,a_u).
 \label{eq:tokengradient}
\end{equation}
If $\pi^+_{\mathrm{tok}}$ is realizable, it is the unique zero-loss policy of both $\mathcal L_{\mathrm{tok}}$ and the fixed-normalizer loss $\frac1{2\beta}\EE\sum_u(d^{\mathrm{fix}}_{\theta,u})^2$.
\end{lemma}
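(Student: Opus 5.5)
The plan is to specialize the generic results of Section~\ref{sec:framework} to the token instance, with context $h=s$, outcome $z=a$, sampler $\mu=\pisamp(\cdot\mid s)$ and signal $f=A^{\mathrm{sampler}}_{t,k}(s,\cdot)$.

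\textbf{Target and normalizer.} Both policies are normalized, so $\UKL$ coincides with $\KL$ and Equation~\eqref{eq:local-rpg-urkl} is Equation~\eqref{eq:framework-objective}. Lemma~\ref{lem:common-target} then gives Equation~\eqref{eq:localtarget} and the log-partition form of $c^*_{\mathrm{tok}}$ directly. For the KL form, I will use the identity~\eqref{eq:normalizer-kl} of Proposition~\ref{prop:profiling}. Advantages have zero sampler mean, since $\EE_{a\sim\pisamp}Q^{\mathrm{sampler}}_{t,k}(s,a)=V^{\mathrm{sampler}}_{t,k}(s)$. The mean term therefore drops and $c^*_{\mathrm{tok}}(s)=\beta\KL(\pisamp\|\pi^+_{\mathrm{tok}})$. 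As a check, the same follows by taking the $\pisamp$-expectation of the log of~\eqref{eq:localtarget}.

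\textbf{Profiled objective and zero mean.} By the same proposition, the profiled intercept is $c_\theta(s)=\EE_{\pisamp}A+\beta k_\theta(s)=\beta k_\theta(s)$. The profiled residual $\beta\ell_{\theta,u}-A+c_\theta$ equals $-\beta\delta_u$, so its square is $\beta^2\delta_u^2$. Summing $\frac1{2\beta}\beta^2\delta_u^2$ over the visited histories gives $\mathcal L_{\mathrm{tok}}$. The zero conditional mean follows by conditioning on $s_u$. Under Assumption~\ref{ass:matched}, $a_u\sim\pisamp(\cdot\mid s_u)$ holds, and then $\EE[A\mid s_u]=0$ and $\EE[\ell_{\theta,u}\mid s_u]=-k_\theta(s_u)$, so $\EE[\delta_u\mid s_u]=0$.

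\textbf{Gradient.} The trajectory law depends only on the sampler, so the expectation is fixed under differentiation. Differentiating $\delta_u^2$ gives
\[
\nabla\delta_u=-g_\theta(s_u,a_u)-\nabla k_\theta(s_u).
\]
The first term yields~\eqref{eq:tokengradient}. The second contributes $-\beta\,\EE[\delta_u\nabla k_\theta(s_u)]$. Because $\nabla k_\theta(s_u)$ is a function of $s_u$ alone, conditioning on $s_u$ and using $\EE[\delta_u\mid s_u]=0$ makes this term vanish. This is the cancellation stated in Proposition~\ref{prop:profiling}, applied per history inside the trajectory sum via the tower property.

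\textbf{Uniqueness, and the main obstacle.} If $\pi_\theta=\pi^+_{\mathrm{tok}}$, then by~\eqref{eq:localtarget}, $\beta\ell=A-c^*_{\mathrm{tok}}$ and $k_\theta=c^*_{\mathrm{tok}}/\beta$, so both $\delta_u$ and $d^{\mathrm{fix}}$ vanish identically. Conversely, zero loss forces the residual to vanish at every history–action pair the sampler reaches with positive probability. By Assumption~\ref{ass:finite} together with positivity, this covers every feasible history and action.
\begin{itemize}
\item For the fixed loss, vanishing residuals give $\pi_\theta=\pi^+_{\mathrm{tok}}$ pointwise.
\item For the profiled loss, they give $\pi_\theta(a\mid s)\propto\pisamp(a\mid s)e^{A/\beta}$ with a constant $e^{-k_\theta(s)}$. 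Normalization of $\pi_\theta$ forces that constant to equal $e^{-c^*_{\mathrm{tok}}/\beta}$.
\end{itemize}
This coverage and normalization step is where I expect the care to be needed, particularly in confirming that every history is charged. The remaining steps are direct substitutions.
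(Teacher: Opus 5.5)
Your proposal is correct and follows essentially the paper's proof. You specialize Lemma~\ref{lem:common-target} and Proposition~\ref{prop:profiling} at each history, use the zero sampler mean of the advantage to get $c^*_{\mathrm{tok}}=\beta\KL(\pisamp\|\pi^+_{\mathrm{tok}})$ and $c_\theta=\beta k_\theta$, remove the $\nabla_\theta k_\theta(s_u)$ term by conditioning on $s_u$, and settle profiled uniqueness by normalization. The only cosmetic differences are two. You reach the KL form through Equation~\eqref{eq:normalizer-kl}, whereas the paper takes the sampler expectation of the log-ratio directly, which is your ``check''. You also argue fixed-loss uniqueness by hand instead of citing Theorem~\ref{thm:common-regression}.
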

The profiled objective needs no normalizer, but it still needs $A^{\mathrm{sampler}}_{t,k}$, the advantage of a policy the trainer has already left. The next two subsections remove it.

\subsection{Token Regression with Sampler-Centered Scores}
\label{sec:token-sc}
Step~7 builds on BPO's critic-free reformulation of PMD \citep{bpo2026} and replaces the sampler advantage with the terminal return. Define the trainer's score centered under the sampler and a per-token return coefficient,
\begin{equation}
 \widetilde g_\theta(s,a)
 =g_\theta(s,a)-\EE_{v\sim\pisamp(\cdot\mid s)}g_\theta(s,v)
 =g_\theta(s,a)+\nabla_\theta k_\theta(s),\qquad
 h_{\theta,u}=R(x,Y)-\beta\ell_{\theta,u},
 \label{eq:token-sc-signal}
\end{equation}
let $\bar h_\theta(s)=\EE_{\pisamp}[h_\theta\mid s]$ over the next action and the sampler's continuation, and define the token-regression loss $\mathcal L_{\mathrm{TR}}(\theta)=\frac1{2\beta}\EE_{x,Y\sim\pisamp}\sum_u(h_{\theta,u}-\bar h_\theta(s_u))^2$.

\begin{theorem}[Critic-Free Token-Regression Gradient]
\label{thm:token-sc}
Under Assumptions~\ref{ass:finite} and~\ref{ass:matched}, with exact conditional score means,
\begin{equation}
 \blueeqbox{\displaystyle
 \mathcal L_{\mathrm{TR}}
 =\mathcal L_{\mathrm{tok}}
 +\frac1{2\beta}\EE_{x,Y\sim\pisamp}\sum_u
 \big(R-Q^{\mathrm{sampler}}_{t,k}(s_u,a_u)\big)^2}.
 \label{eq:token-sc-decomposition}
\end{equation}
The second term does not depend on $\theta$, so
\begin{equation}
 \blueeqbox{\displaystyle
 \nabla_\theta\mathcal L_{\mathrm{tok}}
 =\nabla_\theta\mathcal L_{\mathrm{TR}}
 =-\EE_{x,Y\sim\pisamp}\sum_u
 (R-\beta\ell_{\theta,u})\,\widetilde g_\theta(s_u,a_u)}.
 \label{eq:token-sc-gradient}
\end{equation}
If $\pi^+_{\mathrm{tok}}$ is realizable, it is the unique minimizer of both losses. Subtracting a fixed baseline $b(s_u)$ from the coefficient $R-\beta\ell_{\theta,u}$ leaves the expected gradient unchanged.
\end{theorem}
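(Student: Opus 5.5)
The plan is to prove the decomposition \eqref{eq:token-sc-decomposition} by a conditional-expectation computation. The gradient identity and the remaining claims then follow from it almost immediately. First I would identify the centering term $\bar h_\theta(s_u)$. Under Assumption~\ref{ass:matched}, the continuation after $s_u$ is drawn from the same sampler, so $\EE_{\pisamp}[R\mid s_u]=V^{\mathrm{sampler}}_{t,k}(s_u)$. Also $\ell_{\theta,u}$ depends only on $(s_u,a_u)$, and the identity noted before Proposition~\ref{prop:profiling} gives $\EE_{a\sim\pisamp(\cdot\mid s_u)}\ell_\theta(s_u,a)=-k_\theta(s_u)$. Hence $\bar h_\theta(s_u)=V^{\mathrm{sampler}}_{t,k}(s_u)+\beta k_\theta(s_u)$. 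Inserting $\pm Q^{\mathrm{sampler}}_{t,k}(s_u,a_u)$ and using the definition of $\delta_u$ in Lemma~\ref{prop:token-pmd}, I get the samplewise identity
\[
h_{\theta,u}-\bar h_\theta(s_u)=\big(R-Q^{\mathrm{sampler}}_{t,k}(s_u,a_u)\big)+\beta\,\delta_u .
\]
No telescoping is used here, so Assumption~\ref{ass:deterministic} is not needed.

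Next I square this identity and take expectations. This gives $(R-Q)^2+\beta^2\delta_u^2$ plus the cross term $2\beta(R-Q)\delta_u$. The factor $\delta_u$ is a function of $(s_u,a_u)$ alone. Conditioning on $(s_u,a_u)$, the tower property gives $\EE[R-Q^{\mathrm{sampler}}_{t,k}(s_u,a_u)\mid s_u,a_u]=0$, because $Q$ is the sampler's conditional expected return and the rollout continues under that same sampler. So the cross term vanishes position by position. Finiteness (Assumption~\ref{ass:finite}) justifies exchanging the finite sum and the expectation. Dividing by $2\beta$ yields \eqref{eq:token-sc-decomposition}. The residual term involves only $R$, $Q^{\mathrm{sampler}}_{t,k}$ and the sampler law, so it is independent of $\theta$, and therefore $\nabla\mathcal L_{\mathrm{TR}}=\nabla\mathcal L_{\mathrm{tok}}$.

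For the explicit form \eqref{eq:token-sc-gradient}, I differentiate $\mathcal L_{\mathrm{TR}}$ directly, keeping in mind that $\bar h_\theta$ also depends on $\theta$. The derivatives are $\nabla_\theta h_{\theta,u}=-\beta g_\theta(s_u,a_u)$ and $\nabla_\theta\bar h_\theta(s_u)=\beta\nabla_\theta k_\theta(s_u)=-\beta\,\EE_{v\sim\pisamp}g_\theta(s_u,v)$; the second holds because $k_\theta=-\EE_{\pisamp}\log\pi_\theta+\text{const}$. Together these give $\nabla(h-\bar h)=-\beta\widetilde g_\theta$, and so $\nabla\mathcal L_{\mathrm{TR}}=-\EE\sum_u(h_{\theta,u}-\bar h_\theta(s_u))\widetilde g_\theta(s_u,a_u)$. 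The term in $\bar h_\theta(s_u)$ drops out because $\EE_{\pisamp}[\widetilde g_\theta(s_u,a_u)\mid s_u]=0$ and $\bar h_\theta(s_u)$ is $s_u$-measurable. That leaves coefficient $R-\beta\ell_{\theta,u}$. The identical argument shows that subtracting any fixed baseline $b(s_u)$ leaves the expectation unchanged.

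Finally, the two losses differ by a $\theta$-independent constant, so they have the same minimizers. Lemma~\ref{prop:token-pmd} says that under realizability $\pi^+_{\mathrm{tok}}$ is the unique zero of $\mathcal L_{\mathrm{tok}}\ge 0$, so it is the unique minimizer of both. The step needing most care is the cross term. One must check that the expectation defining $Q^{\mathrm{sampler}}_{t,k}$ is over exactly the same continuation law that generated $R$ in the data, which is precisely what Assumption~\ref{ass:matched} supplies. Random tool outputs are absorbed into that continuation law, so they cause no difficulty.
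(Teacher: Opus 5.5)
Your proposal is correct and follows the paper's proof essentially step for step. Both identify $\bar h_\theta(s)=V^{\mathrm{sampler}}_{t,k}(s)+\beta k_\theta(s)$, write $h_{\theta,u}-\bar h_\theta(s_u)=\beta\delta_u+(R-Q^{\mathrm{sampler}}_{t,k}(s_u,a_u))$, and remove the cross term by conditioning on $(s_u,a_u)$; the only cosmetic difference is in the gradient step, where the paper differentiates the conditional variance and discards the mean's derivative via $\EE[h-\bar h_\theta\mid s]=0$, while you discard $\bar h_\theta$ via $\EE[\widetilde g_\theta\mid s]=0$, which is the same computation.
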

Terminal returns thus replace the sampler's critic: the continuation noise $R-Q^{\mathrm{sampler}}_{t,k}$ adds variance but no bias, provided the score is centered under the sampler that generated both the action and its continuation. The term $-\beta\ell_{\theta,u}$ makes this a regression rather than a policy gradient. It feeds the current log-ratio back into the coefficient, so the update weakens as the trainer approaches the target; with the reward alone as coefficient, the estimator is the score-centered policy gradient of \citet{scorecentering2026}.

\label{sec:token-surrogate}
Step~8 computes Equation~\eqref{eq:token-sc-gradient} by automatic differentiation, without evaluating $\bar h_\theta$, through the per-token surrogate
\begin{equation}
 L^{\mathrm{bp,TR}}_{\theta,u}
 =-\sg(h_{\theta,u})\,z_{\theta,u},\qquad
 z_{\theta,u}
 =\log\pi_\theta(a_u\mid s_u)
   -\sum_v\pisamp(v\mid s_u)\log\pi_\theta(v\mid s_u).
 \label{eq:token-sc-surrogate}
\end{equation}
With $h_{\theta,u}$ and the sampler probabilities detached, each token contributes $-h_{\theta,u}\widetilde g_\theta(s_u,a_u)$, so the expected surrogate gradient is $\nabla_\theta\mathcal L_{\mathrm{TR}}=\nabla_\theta\mathcal L_{\mathrm{tok}}$ (Proposition~\ref{prop:token-sc-surrogate} in Appendix~\ref{app:token-supplement}). One complete response per prompt suffices, provided the score mean is computed independently of the sampled action.

\subsection{Sequence Regression by Bellman Telescoping}
\label{sec:tokenbpo}
Each token residual has zero conditional mean, so the token loss equals the expected square of the summed residual. Under Assumption~\ref{ass:deterministic}, the advantages in that sum also telescope into the terminal reward, which gives the exact trajectory objective of BPO \citep[Equations~(18)--(24)]{bpo2026} (Figure~\ref{fig:score-alternative} in Appendix~\ref{app:token-supplement}).
\begin{theorem}[Population Equivalence to Exact BPO]
\label{thm:token-bpo}
Under Assumptions~\ref{ass:finite} and~\ref{ass:matched},
\begin{equation}
 \EE_{\pisamp}\big[\big(\textstyle\sum_u\delta_u\big)^2\big]
 =\EE_{\pisamp}\sum_u\delta_u^2.
 \label{eq:orthogonality}
\end{equation}
If Assumption~\ref{ass:deterministic} also holds, then
\begin{equation}
 \sum_u\delta_u
 =\frac{R-V^{\mathrm{sampler}}_{t,k}(x)}{\beta}
  -\sum_u[\ell_{\theta,u}+k_\theta(s_u)]
 =\delta_{\mathrm{BPO}},
 \qquad\text{and}\qquad
 \blueeqbox{\displaystyle
 \mathcal L_{\mathrm{tok}}
 =\frac{\beta}{2}\EE_{\pisamp}[\delta_{\mathrm{BPO}}^2]}.
 \label{eq:tokentobpo}
\end{equation}
\end{theorem}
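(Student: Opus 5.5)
The plan has two parts, one per claim of Theorem~\ref{thm:token-bpo}.

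\textbf{Orthogonality.} Expand $(\sum_u\delta_u)^2=\sum_u\delta_u^2+2\sum_{u<w}\delta_u\delta_w$ and show that every cross term has zero expectation under $\pisamp$. The length $T$ is random, so first extend the sum to a fixed horizon $T_{\max}$ by setting $\delta_u=0$ after termination; Assumption~\ref{ass:finite} guarantees such a bound exists. Terminated histories then contribute zero, and the martingale-difference property still holds.

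For $u<w$, the residual $\delta_u$ is a function of $(s_u,a_u)$, and $s_w$ contains that pair. By the tower property,
\[
\EE[\delta_u\delta_w]=\EE\big[\delta_u\,\EE[\delta_w\mid s_w]\big]=0,
\]
using the zero-conditional-mean property $\EE_{\pisamp}[\delta_w\mid s_w]=0$ of Lemma~\ref{prop:token-pmd}. This requires the expectation to be under the same sampler, which is Assumption~\ref{ass:matched}. It holds even with random tool outputs inside $s_w$, because the lemma conditions only on the history and $a_w\sim\pisamp(\cdot\mid s_w)$. Finiteness from Assumption~\ref{ass:finite} makes all expectations finite, and Equation~\eqref{eq:orthogonality} follows.

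\textbf{Telescoping.} Assume in addition Assumption~\ref{ass:deterministic}. Then $s_{u+1}$ is determined by $(s_u,a_u)$, so
\[
Q^{\mathrm{sampler}}_{t,k}(s_u,a_u)=V^{\mathrm{sampler}}_{t,k}(s_{u+1})\quad\text{for } u<T .
\]
At the last token the response is complete and the reward is terminal and deterministic, so $Q^{\mathrm{sampler}}_{t,k}(s_T,a_T)=R(x,Y)$. Summing the advantages,
\[
\sum_u A^{\mathrm{sampler}}_{t,k}(s_u,a_u)=\sum_{u=1}^T\big[Q(s_u,a_u)-V(s_u)\big]=R-V^{\mathrm{sampler}}_{t,k}(s_1),
\]
where $s_1=x$. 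Dividing by $\beta$ and adding $-\sum_u[\ell_{\theta,u}+k_\theta(s_u)]$ gives $\delta_{\mathrm{BPO}}$ samplewise.

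Finally, combine this identity with Equation~\eqref{eq:orthogonality} and the definition of $\mathcal L_{\mathrm{tok}}$ in Equation~\eqref{eq:tokenloss}, taking the outer expectation over $x\sim\mathcal D$. This yields the boxed equality.

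\textbf{Main obstacle.} The delicate step is the orthogonality argument with random lengths and random tool outputs. The fix is the zero-padding device above together with careful conditioning on $s_w$, which includes $\delta_u$'s arguments. The telescoping itself is routine once the terminal identity $Q(s_T,a_T)=R$ is justified by determinism of both the reward and the transitions.
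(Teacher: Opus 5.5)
Your proposal is correct and follows essentially the same argument as the paper's proof: pad with zero residuals after termination, remove each cross term by conditioning on the later history and using $\EE_{\pisamp}[\delta_w\mid s_w]=0$, and, under deterministic transitions, telescope the advantages through $Q^{\mathrm{sampler}}_{t,k}(s_u,a_u)=V^{\mathrm{sampler}}_{t,k}(s_{u+1})$ with $V^{\mathrm{sampler}}_{t,k}=R$ at termination. Treating the last token separately via $Q^{\mathrm{sampler}}_{t,k}(s_T,a_T)=R$ is simply the paper's terminal-boundary convention written out explicitly.
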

The prompt value $V^{\mathrm{sampler}}_{t,k}(x)$ in $\delta_{\mathrm{BPO}}$ seems to require a critic, but it does not affect the gradient. Drop it:
\begin{equation}
 D_\theta(x,Y):=R(x,Y)-\beta\sum_u[\ell_{\theta,u}+k_\theta(s_u)],\qquad
 \mathcal L_{\mathrm{SR}}(\theta):=\frac1{2\beta}\EE_{x,Y\sim\pisamp}D_\theta(x,Y)^2.
 \label{eq:bpo-no-baseline}
\end{equation}
\begin{proposition}[Baseline-Free Trajectory Regression]
\label{prop:bpo-no-baseline}
Under Assumptions~\ref{ass:finite} and~\ref{ass:matched}, let $\varepsilon_u=V^{\mathrm{sampler}}_{t,k}(s_{u+1})-Q^{\mathrm{sampler}}_{t,k}(s_u,a_u)$ be the transition noise after token $u$, where $s_{u+1}$ includes any tool output that follows $a_u$ and $V^{\mathrm{sampler}}_{t,k}=R$ at termination. Then
\begin{equation}
 \blueeqbox{\displaystyle
 \mathcal L_{\mathrm{SR}}
 =\mathcal L_{\mathrm{tok}}
 +\frac1{2\beta}\EE\Big[V^{\mathrm{sampler}}_{t,k}(x)^2+\Big(\sum_u\varepsilon_u\Big)^2\Big],
 \qquad
 \nabla_\theta\mathcal L_{\mathrm{SR}}
 =\nabla_\theta\mathcal L_{\mathrm{tok}}}.
 \label{eq:bpo-baseline-identity}
\end{equation}
Under Assumption~\ref{ass:deterministic}, every $\varepsilon_u$ is zero. Replacing $R$ by $R-b(x)$ for a fixed prompt baseline replaces $V^{\mathrm{sampler}}_{t,k}(x)$ by $V^{\mathrm{sampler}}_{t,k}(x)-b(x)$ in the constant.
\end{proposition}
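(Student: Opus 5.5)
The plan is to decompose the trajectory residual $D_\theta$ into the summed token residuals plus $\theta$-independent noise terms. Then, by martingale orthogonality, the cross terms vanish in expectation, and Theorem~\ref{thm:token-bpo}'s first identity converts the square of the sum into $\mathcal L_{\mathrm{tok}}$.

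\textbf{Step 1 (telescoping without determinism).} Write $R=V^{\mathrm{sampler}}_{t,k}(s_{T+1})$ with the convention $V=R$ at termination. Then
\[
R-V^{\mathrm{sampler}}_{t,k}(x)=\sum_u\big[V(s_{u+1})-V(s_u)\big]=\sum_u\big[A^{\mathrm{sampler}}_{t,k}(s_u,a_u)+\varepsilon_u\big],
\]
since $V(s_{u+1})-V(s_u)=(Q(s_u,a_u)-V(s_u))+(V(s_{u+1})-Q(s_u,a_u))$. This uses only the definitions of $Q,V$ under the sampler's continuation, including tool outputs (Assumption~\ref{ass:matched}). Dividing by $\beta$ and using the definition of $\delta_u$ in Lemma~\ref{prop:token-pmd} gives
\[
D_\theta=\beta\sum_u\delta_u+V^{\mathrm{sampler}}_{t,k}(x)+\sum_u\varepsilon_u.
\]

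\textbf{Step 2 (expand the square).} $\beta^2\EE(\sum_u\delta_u)^2/(2\beta)=\mathcal L_{\mathrm{tok}}$ by equation~\eqref{eq:orthogonality}. The cross terms to kill are
\[
\EE\Big[V(x)\sum_u\delta_u\Big],\qquad \EE\Big[V(x)\sum_u\varepsilon_u\Big],\qquad \EE\Big[\sum_u\delta_u\sum_{u'}\varepsilon_{u'}\Big].
\]
The first vanishes by $\EE[\delta_u\mid s_u]=0$ (Lemma~\ref{prop:token-pmd}) and the tower rule, since $V(x)$ is $s_u$-measurable. The second vanishes because $\EE[\varepsilon_u\mid s_u,a_u]=0$ by definition of $Q$ as the expectation of $V(s_{u+1})$ over the next tool output.

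For the third, take pairs $(u,u')$ separately:
\begin{itemize}
\item If $u'\ge u$, condition on $(s_{u'},a_{u'})$. Then $\delta_u$ is measurable with respect to this, and $\varepsilon_{u'}$ has conditional mean zero.
\item If $u'<u$, condition on $s_u$. This contains $s_{u'+1}$, so $\varepsilon_{u'}$ is measurable, while $\delta_u$ has zero conditional mean.
\end{itemize}
What remains is the constant $\frac1{2\beta}\EE[V(x)^2+(\sum\varepsilon_u)^2]$, which involves only the sampler and not $\theta$. Hence the gradients agree.

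\textbf{Step 3 (remarks).} Under Assumption~\ref{ass:deterministic}, $s_{u+1}$ is determined by $(s_u,a_u)$, so $V(s_{u+1})=Q(s_u,a_u)$ and each $\varepsilon_u=0$. For the baseline statement, replacing $R$ by $R-b(x)$ shifts every $V$ and $Q$ by $-b(x)$. The $\varepsilon_u$ and $\delta_u$ are unchanged, so the same computation goes through with $V(x)-b(x)$ in place of $V(x)$.

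\textbf{Main obstacle.} The step needing the most care is the cross term between $\delta_u$ and $\varepsilon_{u'}$ with variable, random length $T$. I would handle it by padding trajectories with zeros after termination up to the bounded maximal length from Assumption~\ref{ass:finite}, so that all sums run over a fixed index set and the filtration arguments above apply termwise. I would also check that $\varepsilon$ at the terminal step correctly uses $V=R$.
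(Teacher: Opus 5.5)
Your proposal is correct and follows the paper's proof essentially step for step. It uses the same telescoping decomposition $D_\theta=V(x)+\beta\sum_u\delta_u+\sum_u\varepsilon_u$, the same two-case conditioning argument for the $\delta$--$\varepsilon$ cross terms (on $(s_{u'},a_{u'})$ when $u'\ge u$, on $s_u$ when $u'<u$), and Equation~\eqref{eq:orthogonality} to identify the $\delta$ block with $\mathcal L_{\mathrm{tok}}$; your zero-padding device is the same one the paper uses in the proof of Theorem~\ref{thm:token-bpo}.
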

Sequence regression therefore needs no value estimate, no group of responses, and no deterministic transitions: random tool outputs add the noise $\varepsilon_u$ to the residual, but its contribution to the loss does not depend on $\theta$, and Assumption~\ref{ass:deterministic} adds only the samplewise telescope $\sum_u\delta_u=\delta_{\mathrm{BPO}}$. The identity holds in population, so a baseline estimated from the same trajectory does not count as fixed, and $\mathcal L_{\mathrm{SR}}$ itself need not vanish at the PMD target.

\label{sec:bpo-backward}
Step~$8^\dagger$ differentiates Equation~\eqref{eq:bpo-no-baseline}. Writing $g_{\theta,u}=g_\theta(s_u,a_u)$,
\begin{equation}
 \nabla_\theta\mathcal L_{\mathrm{SR}}
 =-\EE_{x,Y\sim\pisamp}D_\theta
   \sum_u[g_{\theta,u}+\nabla_\theta k_\theta(s_u)],
 \label{eq:bpo-exact-gradient}
\end{equation}
which each trajectory produces through the backward surrogate
\begin{equation}
 L^{\mathrm{bp,SR}}_\theta(x,Y)
 :=-\sg(D_\theta)\sum_u
 [\log\pi_\theta(a_u\mid s_u)+k_\theta(s_u)].
 \label{eq:bpo-backward-surrogate}
\end{equation}
Only $D_\theta$ is detached; gradients flow through the KL terms because $g_{\theta,u}+\nabla_\theta k_\theta(s_u)=\widetilde g_\theta(s_u,a_u)$ is the centered score. BPO's Equation~(25) is this full-KL gradient before its later linearization \citep{bpo2026}. Under Assumptions~\ref{ass:finite} and~\ref{ass:matched}, the two routes therefore agree in population,
\begin{equation}
 \blueeqbox{\displaystyle
 \nabla_\theta\mathcal L_{\mathrm{SR}}
 =\nabla_\theta\mathcal L_{\mathrm{tok}}
 =\nabla_\theta\mathcal L_{\mathrm{TR}}},
 \label{eq:bpo-sc-equivalence}
\end{equation}
but they are different estimators: token regression weights each centered score by its own coefficient $R-\beta\ell_{\theta,u}$, and sequence regression weights all of them by the shared residual $D_\theta$. Appendix~\ref{sec:sr-vs-sklpo} contrasts sequence regression with SKLPO, which also squares one residual per response but has a different target.

%% file: contents/estimators.tex
\section{Estimating the Local KL Correction}
\label{sec:kl-estimators}
\label{sec:token-update}
At every visited history, both routes need the sampler mean of the trainer's score, which equals $-\nabla_\theta k_\theta(s)$, and sequence regression also needs $k_\theta(s)$ itself. Both are sums over the sampler's next-token distribution, which is expensive to store for every generated token. We study three cheaper alternatives, each usable with either route: independent Monte Carlo draws, which preserve the gradients in expectation, and top-$K$ and binary aggregations, which change the objective by an amount we compute exactly. At a history $s_u$ we write $q(v)=\pisamp(v\mid s_u)$ and $p(v)=\pi_\theta(v\mid s_u)$.

\subsection{Monte Carlo KL}
\label{sec:mc-kl}
Monte Carlo KL (MC-KL) draws $M$ auxiliary tokens from the sampler at each visited prefix, independently of the rollout and of one another:
\begin{equation}
 \widehat k^M_{\theta,u}:=\frac1M\sum_{j=1}^M\log\frac{q(v_{u,j})}{p(v_{u,j})},
 \qquad v_{u,j}\overset{\mathrm{iid}}{\sim}q.
 \label{eq:mc-kl}
\end{equation}
For a fixed trainer, $\widehat k^M_{\theta,u}$ and its gradient are unbiased for $k_\theta(s_u)$ and $\nabla_\theta k_\theta(s_u)$; a single estimate can be negative, and clipping it at zero destroys the unbiasedness. The draws are single tokens rather than continuations, so they need no further generation or reward.

\paragraph{Token regression (Step~9).}
Given the rollout, the coefficient $h_{\theta,u}$ is independent of the auxiliary draws, so replacing the vocabulary sum in Equation~\eqref{eq:token-sc-surrogate} by the sample mean leaves the expected gradient unchanged for every $M\geq1$:
\begin{equation}
 L^{\mathrm{bp,TR\text{-}MC}}_\theta
 =-\sum_u\sg(h_{\theta,u})\Big[
 \log p(a_u\mid s_u)-\frac1M\sum_j\log p(v_{u,j}\mid s_u)\Big].
 \label{eq:mc-token-backward}
\end{equation}
Independence is essential: if the rollout action served as its own auxiliary draw, the bracket would vanish identically.

\paragraph{Sequence regression (Step~$9^\dagger$).}
Here an unbiased KL estimate is not enough: the residual is squared, so reusing the same draws inside the residual and its derivative correlates the two factors and biases the gradient (Appendix~\ref{app:mc-kl-proof}). Instead, we form one residual $D_j$ per draw index $j$ and weight the score from draw $j$ by the mean residual of the other draws:
\begin{equation}
 \begin{aligned}
 D_j&=R-\beta\sum_u\Big[\ell_{\theta,u}
                   +\log\frac{q(v_{u,j}\mid s_u)}{p(v_{u,j}\mid s_u)}\Big],\qquad
 \overline D_{-j}=\frac1{M-1}\sum_{l\ne j}D_l,\\
 L^{\mathrm{bp,SR\text{-}MC}}_\theta
 &=-\frac1M\sum_j\sg(\overline D_{-j})
       \sum_u[\log p(a_u\mid s_u)-\log p(v_{u,j}\mid s_u)].
 \end{aligned}
 \label{eq:mc-sequence-backward}
\end{equation}
This requires $M\geq2$ draws per prefix, not $M$ responses.

\begin{proposition}[Independent MC-KL Gradients]
\label{prop:mc-gradients}
Fix the trainer and a complete rollout. Averaged over independent auxiliary draws, the gradients of Equations~\eqref{eq:mc-token-backward} and~\eqref{eq:mc-sequence-backward} equal the corresponding full-KL sample gradients. The sequence surrogate is also the gradient of the U-statistic
\begin{equation}
 \widehat{\mathcal L}_{\mathrm{SR,MC}}
 =\frac1{2\beta M(M-1)}\sum_{j\ne l}D_jD_l,
 \qquad
 \EE_{\mathrm{MC}}\widehat{\mathcal L}_{\mathrm{SR,MC}}
 =\frac{D_\theta^2}{2\beta}.
 \label{eq:mc-sequence-ustat}
\end{equation}
Hence the population identities of Section~\ref{sec:granularity} hold after averaging over the auxiliary draws and the rollouts.
\end{proposition}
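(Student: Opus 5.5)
We fix the trainer parameter $\theta$ and a complete rollout $(x,Y)$, so that $R$, every history $s_u$, every action $a_u$ and every log-ratio $\ell_{\theta,u}$ are constants. The only randomness left is the auxiliary draws $v_{u,j}\overset{\mathrm{iid}}{\sim}q(\cdot\mid s_u)$, which are independent across $u$ and $j$. The tool used throughout is linearity: for a fixed function $\phi$ of $\theta$ and a token, $\EE_{v\sim q}\nabla_\theta\log p(v\mid s_u)=\sum_v q(v)\,g_\theta(s_u,v)$. Since the sampler probabilities are detached, $\nabla_\theta\log(q(v)/p(v))=-g_\theta(s_u,v)$, and averaging gives $\nabla_\theta k_\theta(s_u)$.

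\textbf{Token regression.} By automatic differentiation of Equation~\eqref{eq:mc-token-backward},
\[
-\sum_u h_{\theta,u}\Big[g_\theta(s_u,a_u)-\tfrac1M\textstyle\sum_j g_\theta(s_u,v_{u,j})\Big],
\]
because $h_{\theta,u}$ sits inside $\sg$ and does not depend on the draws. Taking $\EE_{\mathrm{MC}}$ turns each bracket into $\widetilde g_\theta(s_u,a_u)$. This is the gradient of the full surrogate in Equation~\eqref{eq:token-sc-surrogate}. The argument works for every $M\ge1$.

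\textbf{Sequence regression.} Write $D_j=D_\theta+\beta\sum_u\big(k_\theta(s_u)-\hat k_{u,j}\big)$ with $\hat k_{u,j}=\log\big(q(v_{u,j})/p(v_{u,j})\big)$. Then $\EE_{\mathrm{MC}}D_j=D_\theta$. Each $D_j$ depends only on the draws with index $j$, so $D_1,\dots,D_M$ are i.i.d. given the rollout. Set
\[
S_j=\sum_u\big[g_\theta(s_u,a_u)-g_\theta(s_u,v_{u,j})\big].
\]
The gradient of Equation~\eqref{eq:mc-sequence-backward} is $-\frac1M\sum_j\overline D_{-j}S_j$. Because $\overline D_{-j}$ is built from draws independent of those in $S_j$, the expectation factorizes:
\[
\EE\big[\overline D_{-j}S_j\big]=D_\theta\sum_u\widetilde g_\theta(s_u,a_u).
\]
This is the per-sample gradient in Equation~\eqref{eq:bpo-exact-gradient}, i.e.\ the gradient of $L^{\mathrm{bp,SR}}_\theta$. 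This factorization is the step that needs care. One has to confirm that $S_j$ depends on the draws only through index $j$, which holds because the KL term and the score term of draw $j$ are the same tokens. One also needs $M\ge2$ so that $\overline D_{-j}$ is defined.

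\textbf{U-statistic.} Differentiate $\widehat{\mathcal L}_{\mathrm{SR,MC}}$ directly, noting that $\nabla_\theta D_j=-\beta S_j$. This gives
\[
\frac{1}{2\beta M(M-1)}\sum_{j\ne l}\big(-\beta S_jD_l-\beta D_jS_l\big)
=-\frac{1}{M(M-1)}\sum_j S_j\sum_{l\ne j}D_l
=-\frac1M\sum_j\overline D_{-j}S_j,
\]
which matches the surrogate gradient term by term; no detachment is needed. For the expectation, independence of $D_j$ and $D_l$ for $j\ne l$ gives $\EE[D_jD_l]=D_\theta^2$. There are $M(M-1)$ ordered pairs, so $\EE_{\mathrm{MC}}\widehat{\mathcal L}_{\mathrm{SR,MC}}=D_\theta^2/(2\beta)$.

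\textbf{Population identities.} The conditional MC expectations equal the full-KL sample gradients. Taking the outer expectation over $x$ and $Y\sim\pisamp$ (tower property) then recovers the identities of Section~\ref{sec:granularity}: Theorem~\ref{thm:token-sc}, Proposition~\ref{prop:bpo-no-baseline}, and Equation~\eqref{eq:bpo-sc-equivalence}. Exchanging gradient and expectation is justified by the finite support in Assumption~\ref{ass:finite}.
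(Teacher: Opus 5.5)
Your proof is correct and follows essentially the same route as the paper's. Both condition on the rollout and a trainer independent of the draws. Both use that column $j$ enters $D_j$ and $S_j$ (the paper's $G_j$) only through index $j$, so $\overline D_{-j}$ factorizes against $S_j$. Both then differentiate the ordered cross-products via $\nabla_\theta D_j=-\beta S_j$, use $\EE_{\mathrm{MC}}D_jD_l=D_\theta^2$ for $j\ne l$, and finish by averaging over rollouts.
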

Since these statements condition on a trainer chosen independently of the draws, a conditionally unbiased gradient at every update uses fresh draws (Appendix~\ref{app:mc-reuse}).

\subsection{TopK-KL and Binary KL}
\label{sec:topk}
\label{sec:bpo-topk}
\label{sec:token-topk}
\label{sec:binary}
\label{sec:bpo-binary}
\label{sec:sc-binary}
Top-$K$ aggregated KL (TopK-KL) keeps the sampler's $K$ most probable tokens at each history, with their unrenormalized probabilities, and merges the rest into one tail category ($K$ is unrelated to $M$ and to the maximum lag $K_t$). With head $H_u$, $q_{\mathrm{tail}}=1-\sum_{v\in H_u}q_v$, and $p_{\mathrm{tail}}=1-\sum_{v\in H_u}p_v$,
\begin{equation}
 k^K_{\theta,u}:=\sum_{v\in H_u}q_v\log\frac{q_v}{p_v}
 +q_{\mathrm{tail}}\log\frac{q_{\mathrm{tail}}}{p_{\mathrm{tail}}}
 \label{eq:topk-kl}
\end{equation}
is the KL between the coarsened distributions, and by the chain rule for KL it falls short of the full KL by exactly the divergence inside the tail:
\begin{equation}
 \blueeqbox{\displaystyle
 k_\theta(s_u)-k^K_{\theta,u}
 =q_{\mathrm{tail}}\KL\big(q(\cdot\mid\mathrm{tail})\,\|\,p(\cdot\mid\mathrm{tail})\big)\geq0}.
 \label{eq:topk-kl-gap}
\end{equation}
TopK-KL is therefore exact when the tail holds a single token, and its gradient models the sampler's tail as proportional to the trainer's (Equation~\eqref{eq:topk-kl-derivative}). Binary KL is the coarsest case: BPO's partition \citep[Equation~(27)]{bpo2026} takes the sampled token as the head, needs nothing beyond the rollout's own records, and corrects the score by the complementary-probability factor $(1-q(a_u))/(1-p(a_u))$ rather than an importance ratio. Appendix~\ref{app:topk-binary} gives the surrogates for both routes (Propositions~\ref{prop:bpo-topk} and~\ref{prop:bpo-binary}).

Both aggregations give up the zero means on which every exact result of Section~\ref{sec:granularity} rests. Because of the tail gap, the corrected log-ratio has sampler mean $k^K_\theta-k_\theta\leq0$ instead of zero, so the update is no longer the gradient of $\mathcal L_{\mathrm{tok}}$, a reward offset can change it, the PMD target need not be stationary, and the two routes can disagree in population (Appendix~\ref{app:token-sc-topk}). Table~\ref{tab:estimators} summarizes the trade-off: MC-KL keeps the exact identities in expectation at the cost of variance and $M$ records per position, TopK-KL is deterministic and nearly exact when the sampler's mass is concentrated, and binary KL is the cheapest and the crudest.

\begin{table}[t]
\centering
\small
\renewcommand{\arraystretch}{1.2}
\begin{tabularx}{\textwidth}{@{}l>{\raggedright\arraybackslash}Xlll@{}}
\toprule
KL correction & Records per generated token & Token route & Sequence route & Exact gradient \\
\midrule
Full KL & the sampler's next-token distribution & Eq.~\eqref{eq:token-sc-surrogate} & Eq.~\eqref{eq:bpo-backward-surrogate} & yes \\
MC-KL & $M$ sampled token IDs and their sampler log-probabilities & Eq.~\eqref{eq:mc-token-backward}, $M\geq1$ & Eq.~\eqref{eq:mc-sequence-backward}, $M\geq2$ & in expectation \\
TopK-KL & top-$K$ token IDs and their sampler probabilities & Eq.~\eqref{eq:token-sc-topk} & Eq.~\eqref{eq:bpo-topk-surrogate} & if the tail is one token \\
Binary KL & nothing beyond the sampled token & Eq.~\eqref{eq:sc-binary-surrogate} & Eq.~\eqref{eq:bpo-binary-surrogate} & no \\
\bottomrule
\end{tabularx}
\caption{Estimators of the local KL correction. Every row also stores the sampled token's own sampler log-probability. ``Exact gradient'' refers to the population identity of Equation~\eqref{eq:bpo-sc-equivalence}.}
\label{tab:estimators}
\end{table}

%% file: contents/agentic.tex
\section{Learning from Lagged Samplers}
\label{sec:agentic}
We return to the asynchronous setting, where a batch $\mathcal B_t=\{(x_i,y_i,T_i,R_i,v_i)\}_{i=1}^{B}$ mixes trajectories from several sampler versions, with response $i$ generated by $\pisampler_{\theta_{v_i}}$ and stored with its version, its sampler log-probabilities, and the records its KL estimator needs. The learner trains on completed trajectories without waiting for a second response to the same prompt (Algorithm~\ref{alg:token-klpo} in Appendix~\ref{app:implementation}).

\subsection{Mixtures of Sampler Versions}
\label{sec:token-batch}
Let $\ell_{i,u,\theta}$, $k_{i,\theta}$, and $\widetilde g_{i,u,\theta}$ be the log-ratio, local KL, and centered score of token $y_{i,u}$, all computed against its own collection sampler $\pisampler_{\theta_{v_i}}$.
\begin{proposition}[Version-Conditioned Updates]
\label{prop:token-batch}
With $h_{i,u,\theta}=R_i-\beta\ell_{i,u,\theta}$ and $D_{i,\theta}=R_i-\beta\sum_u[\ell_{i,u,\theta}+k_{i,\theta}(s_{i,u})]$, the token-regression and sequence-regression batch gradients are
\begin{equation}
 \nabla_\theta\widehat{\mathcal L}^{\mathrm{bp,TR}}_{\mathrm{tok},t}
 =-\frac1B\sum_i\sum_{u=1}^{T_i}
 h_{i,u,\theta}\,\widetilde g_{i,u,\theta},
 \qquad
 \nabla_\theta^{\mathrm{AD}}\widehat{\mathcal L}^{\mathrm{bp,SR}}_t
 =-\frac1B\sum_iD_{i,\theta}\sum_{u=1}^{T_i}\widetilde g_{i,u,\theta}.
 \label{eq:token-mixture-regression}
\end{equation}
If each version contributes complete, unfiltered rollouts from its recorded sampler, both expected gradients equal the gradient of the mixture of per-version objectives $\mathcal L_{\mathrm{tok}}$, weighted by the version frequencies.
\end{proposition}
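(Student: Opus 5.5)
Two things need proving: the batch gradient formulas, which are direct automatic-differentiation computations, and the claim that their expectations equal the version-weighted mixture of per-version gradients of $\mathcal L_{\mathrm{tok}}$. I would reduce both to the single-version results already established, namely Theorem~\ref{thm:token-sc}, Proposition~\ref{prop:bpo-no-baseline} and the equivalence~\eqref{eq:bpo-sc-equivalence}.

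\textbf{Per-trajectory gradients.} For a trajectory $i$, apply the surrogates~\eqref{eq:token-sc-surrogate} and~\eqref{eq:bpo-backward-surrogate} with the sampler set to $\pisampler_{\theta_{v_i}}$. The sampler probabilities are logged constants, so they are detached. Differentiating $z_{\theta,u}$ gives $g_\theta(s_u,a_u)-\sum_v q_v g_\theta(s_u,v)$, which is the centered score $\widetilde g_{i,u,\theta}$. For the sequence surrogate, the score of the sampled token plus $\nabla_\theta k_{i,\theta}$ is again $\widetilde g_{i,u,\theta}$, as noted after~\eqref{eq:bpo-backward-surrogate}. Averaging over $i$ with weight $1/B$ yields~\eqref{eq:token-mixture-regression}. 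The point to state explicitly is that the centering distribution and the log-ratio denominator for trajectory $i$ come from its own version $v_i$, not from the current trainer.

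\textbf{Expectation.} I would model the batch as drawing each element's version $v$ with frequency $w_v$, then $x\sim\mathcal D$ and $Y\sim\pisampler_{\theta_v}(\cdot\mid x)$, complete and unfiltered. Conditioning on the version, the tower property splits each expected gradient into $\sum_v w_v\,\EE_{x,Y\sim\pisampler_{\theta_v}}[\cdot]$.

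Within version $v$, Assumptions~\ref{ass:finite} and~\ref{ass:matched} hold with collection pair $(t,k)=(t,t-v)$. For the token route, Theorem~\ref{thm:token-sc} then identifies the inner expectation $-\EE\sum_u h\widetilde g$ with $\nabla_\theta\mathcal L^{(v)}_{\mathrm{tok}}$. For the sequence route, \eqref{eq:bpo-exact-gradient} with Proposition~\ref{prop:bpo-no-baseline} identifies $-\EE[D\sum_u\widetilde g]$ with $\nabla_\theta\mathcal L^{(v)}_{\mathrm{SR}}=\nabla_\theta\mathcal L^{(v)}_{\mathrm{tok}}$. Because the weights $w_v$ do not depend on $\theta$, linearity gives $\nabla_\theta\sum_v w_v\mathcal L^{(v)}_{\mathrm{tok}}$ for both routes.

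\textbf{Main obstacle.} The delicate step is justifying the conditioning. The version frequencies and the prompts assigned to each version must not depend on outcomes, since that would violate Assumption~\ref{ass:matched} within a version. Reuse of a trajectory across steps changes its lag but not its sampler, so its version is unchanged. I would also note that mixing versions does not average samplers into one mixture policy: that would break the zero-mean property $\EE[\delta_u\mid s_u]=0$ on which the earlier theorems rest. This is why the result holds only per version, with every sampler-dependent quantity taken from the recorded sampler.
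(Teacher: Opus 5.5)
Your proposal is correct and follows essentially the same route as the paper. You differentiate the stop-gradient surrogates per trajectory against its own recorded sampler to obtain $-h_{i,u,\theta}\widetilde g_{i,u,\theta}$ and $-D_{i,\theta}\sum_u\widetilde g_{i,u,\theta}$, then apply Theorem~\ref{thm:token-sc} and Proposition~\ref{prop:bpo-no-baseline} version by version and average with $\theta$-independent version frequencies; your caveat about outcome-dependent version assignment matches the paper's closing remark on outcome-dependent replay selection.
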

The proof is in Appendix~\ref{app:batch-proof}, and every KL estimator of Section~\ref{sec:kl-estimators} substitutes into each component. The proposition is Assumption~\ref{ass:matched} applied version by version: sampler quantities must never be mixed across versions, while trainer-dependent ones ($h_{i,u,\theta}$ or $D_{i,\theta}$, and the KL correction) are recomputed at every reuse. Dividing each response's loss by its length breaks the cancellation of cross terms, so the batch loss sums over tokens and averages over responses. Each version defines its own PMD target, so a trainer fitted to several versions settles on a compromise that depends on the distribution of lags, even with unlimited capacity (Proposition~\ref{prop:offpolicy-target} proves the response-level case).

\subsection{Consequences for Asynchronous Training}
\label{sec:no-is}
\paragraph{No ratio to clip.}
The regression gradient is the regularized-return gradient without the trainer-to-sampler ratio (Section~\ref{sec:regression-principle} and Proposition~\ref{prop:token-is}), so KLPO has nothing to clip as checkpoints age or trajectories grow. The sampler probability still enters through the logarithm, so errors in logged probabilities or stored conditionals shift the log-ratio or the KL correction. Appendix~\ref{app:flash} compares KLPO with the importance-weighted update of FlashREINFORCE \citep{flashreinforce2026}.

\paragraph{One rollout per prompt.}
\label{sec:klpo-sklpo}
With full KL or MC-KL, one complete rollout gives an unbiased stochastic gradient of $\mathcal L_{\mathrm{tok}}$ (Theorem~\ref{thm:token-sc} and Proposition~\ref{prop:mc-gradients}), with no value function, prompt normalizer, or second response. SKLPO instead needs a normalizer for every prompt and sampler version, and with one response both natural estimators degenerate (Proposition~\ref{prop:singleton}): self-centering zeroes the residual, and a log-mean-exp estimate of $c^*(x)$ returns the observed reward. Its normalizer must come from independent data or a learned predictor (Appendix~\ref{sec:normalizer}), and an error $\varepsilon(x)$ biases the gradient by $\varepsilon(x)\EE_\nu[g_\theta\mid x]$ (Equation~\eqref{eq:normalizer-gradient-error}). Table~\ref{tab:single-rollout} lists what each instance needs.

\paragraph{Interactive trajectories.}
\label{sec:interactive}
Neither gradient route needs Assumption~\ref{ass:deterministic}. Token regression uses only the sampler's own continuation values, which a complete return from that sampler estimates without bias, and stochastic tool outputs add only a $\theta$-independent term to sequence regression (Theorem~\ref{thm:token-sc} and Proposition~\ref{prop:bpo-no-baseline}). In both routes the return estimates the ordinary $Q^{\mathrm{sampler}}_{t,k}$ of its own version, not an optimal soft Q-function. The response-level target, by contrast, can reweight tool outcomes that the policy cannot control, so no policy may attain it (Appendix~\ref{app:interactive}).

%% file: contents/appendix.tex
\input{contents/sequence}

\input{contents/normalization}
\FloatBarrier

\input{contents/qstar}
\FloatBarrier

\input{contents/token-supplement}
\FloatBarrier

\section{Proofs for the Framework and Token-Level Results}
\label{app:main-proofs}
\subsection{Gibbs Condition, Regression, and Profiling}
\label{app:common-proof}
\label{app:common-principle}
\begin{proof}[Proof of Lemma~\ref{lem:common-target} and Theorem~\ref{thm:common-regression}]
Fix $h$ and drop it from the notation. Let $Z_f=\sum_z\mu(z)e^{f(z)/\beta}$ and $c_f=\beta\log Z_f$, so that $\pi_f^\star=\mu e^{(f-c_f)/\beta}$ is positive and normalized. For any distribution $\pi$,
\[
 \EE_\pi f-\beta\KL(\pi\|\mu)=c_f-\beta\KL(\pi\|\pi_f^\star).
\]
The KL is nonnegative and vanishes only at $\pi=\pi_f^\star$, which proves the lemma. The residual is $r_\theta=\beta\log(\pi_\theta/\pi_f^\star)$, so $\pi_f^\star$ attains zero loss, and positive replay mass and weights force every residual to vanish at any zero-loss fit. Finally, $\nabla_\theta r_\theta=\beta\nabla_\theta\log \pi_\theta$, and differentiating the squared loss with the replay law fixed and the weights held constant yields Equation~\eqref{eq:common-regression}.

For Equation~\eqref{eq:framework-return-gradient}, write $\ell_\theta=\log(\pi_\theta/\mu)$. Since $\EE_{\pi_\theta}\nabla_\theta\log \pi_\theta=0$, the gradient of the regularized return is $\EE_{\pi_\theta}[(f-\beta\ell_\theta)\nabla_\theta\log \pi_\theta]$. Adding the constant $-c_f$ inside the expectation changes nothing and turns the coefficient into $-r_\theta$. Changing the measure from $\pi_\theta$ to $\mu$ introduces the ratio $\pi_\theta/\mu$.
\end{proof}

\begin{proof}[Proof of Proposition~\ref{prop:profiling}]
Let $X=\beta\ell_\theta-f$. For any constant $c$, $\EE_{\mu}[(X+c)^2]=\Var_{\mu}(X)+(c+\EE_{\mu}X)^2$, so the minimizer is
\[
 c_\theta=-\EE_{\mu}X=\EE_{\mu}f-\beta\EE_{\mu}\ell_\theta=\EE_{\mu}f+\beta\KL(\mu\|\pi_\theta),
\]
and the minimum is $\Var_{\mu}(X)$. Differentiating the profiled loss gives
\[
 \nabla_\theta\frac{1}{2\beta}\EE_{\mu}[(X+c_\theta)^2]
 =\EE_{\mu}[(X+c_\theta)\nabla_\theta\log \pi_\theta]
 +\frac{\nabla_\theta c_\theta}{\beta}\EE_{\mu}[X+c_\theta],
\]
and the last expectation is zero by the choice of $c_\theta$. The profiled loss vanishes exactly when $f-\beta\ell_\theta$ is constant on the support, that is, when $\pi_\theta\propto \mu e^{f/\beta}$; normalization then gives $\pi_\theta=\pi_f^\star$. At that policy $X=-c_f$ identically, so $c_\theta=c_f$, and substituting $\pi_\theta=\pi_f^\star$ into Equation~\eqref{eq:general-profile} gives Equation~\eqref{eq:normalizer-kl}.
\end{proof}

\subsection{Token PMD Target and Profiled Objective}
\label{app:token-pmd-proof}
\begin{proof}[Proof of Equations~\eqref{eq:localtarget} and~\eqref{eq:localnormalizer} in Lemma~\ref{lem:token-centering}]
Apply Lemma~\ref{lem:common-target} at each history with $f=A^{\mathrm{sampler}}_{t,k}$. This gives Equations~\eqref{eq:localnormalizer} and~\eqref{eq:localtarget}. The log-ratio of the target then yields
\[
 \begin{aligned}
 \beta\KL(\pisamp(\cdot\mid s)\|\pi^+_{\mathrm{tok}}(\cdot\mid s))
 &=\EE_{a\sim\pisamp(\cdot\mid s)}
       [c^*_{\mathrm{tok}}(s)-A^{\mathrm{sampler}}_{t,k}(s,a)]\\
 &=c^*_{\mathrm{tok}}(s),
 \end{aligned}
\]
because the exact sampler advantage has zero conditional mean. The identity $d^{\mathrm{fix}}_{\theta,u}=\beta\log(\pi_\theta/\pi^+_{\mathrm{tok}})$ and Theorem~\ref{thm:common-regression} give the regression gradient and the unique zero-loss target.
\end{proof}

\label{app:token-centering-proof}
\begin{proof}[Proof of the remaining claims of Lemma~\ref{lem:token-centering}]
This is Proposition~\ref{prop:profiling} with $f=A^{\mathrm{sampler}}_{t,k}$ at each history. Since the advantage has zero sampler mean,
\begin{equation}
 c_\theta(s)=\EE_{a\sim\pisamp}[A^{\mathrm{sampler}}_{t,k}(s,a)-\beta\log(\pi_\theta(a\mid s)/\pisamp(a\mid s))]
 =\beta k_\theta(s).
 \label{eq:token-profilec}
\end{equation}
The zero conditional means of the advantage and of $\ell_{\theta,u}+k_\theta(s_u)$ give $\EE[\delta_u\mid s_u]=0$. Differentiating the profiled loss yields $-\beta\EE\sum_u\delta_u[g_\theta(s_u,a_u)+\nabla_\theta k_\theta(s_u)]$, and the term with $\nabla_\theta k_\theta(s_u)$ vanishes after conditioning on $s_u$, which proves Equation~\eqref{eq:tokengradient}. At zero loss, the local residual is constant over actions, and normalization fixes the intercept to $c^*_{\mathrm{tok}}$ and the policy to $\pi^+_{\mathrm{tok}}$.
\end{proof}

\subsection{Critic-Free Token Regression}
\label{app:token-sc-proof}
\begin{proof}[Proof of Theorem~\ref{thm:token-sc}]
Fix a visited history $s$ and write $q$ for its sampler. Let $g_a=\nabla_\theta\log\pi_\theta(a\mid s)$, $m_\theta=\EE_q g_a$, $h=R-\beta\log(\pi_\theta(a\mid s)/q(a\mid s))$, and $\bar h_\theta(s)=\EE_q[h\mid s]$. The law of the action and its continuation does not depend on $\theta$, so
\[
 \nabla_\theta\frac{\Var_q(h\mid s)}{2\beta}
 =-\EE_q[(h-\EE_q[h\mid s])g_a\mid s]
 =-\EE_q[h(g_a-m_\theta)\mid s].
\]
Summing over the sampler's visitation distribution, which is also fixed, proves Equation~\eqref{eq:token-sc-gradient}. Since $\EE_q[g_a-m_\theta\mid s]=0$, any fixed action-independent $b(s)$ can be subtracted from $h$. A baseline fitted to the same sampled action or continuation would need a separate argument.

Let $\xi=R-Q^{\mathrm{sampler}}_{t,k}(s,a)$. Then $\EE_q[\xi\mid s,a]=0$ and $\bar h_\theta(s)=V^{\mathrm{sampler}}_{t,k}(s)+\beta k_\theta(s)$, so $h-\bar h_\theta=\beta\delta+\xi$. The cross term vanishes after conditioning on $(s,a)$, and
\begin{equation}
 \mathcal L_{\mathrm{TR}}-\mathcal L_{\mathrm{tok}}
 =\frac1{2\beta}\EE_{x,Y\sim\pisamp}\sum_u
 \Var_{\pisamp}(R\mid x,y_{\le u}).
 \label{eq:token-sc-variance-gap}
\end{equation}
The right side does not depend on $\theta$, so Lemma~\ref{lem:token-centering} gives the minimizers; under realizability, the unique one is $\pi^+_{\mathrm{tok}}$. The token-regression loss generally does not vanish there, because terminal returns carry continuation noise.

With the stop-gradients of Equation~\eqref{eq:token-sc-surrogate}, each sample contributes $-h(g_a-m_\theta)$. A single complete rollout therefore gives a stochastic estimate of the population gradient, provided the conditional score mean is supplied independently of the sampled action. The surrogate's value is neither the variance objective nor an estimate of it. Differentiating through $h$ would add a product-rule term, and replacing $h$ by the terminal reward gives the ordinary score-centered policy gradient, without the regression feedback.

The fixed-normalizer and profiled objectives have different gradients away from the PMD target. Score centering computes the gradient of the profiled objective without a critic. It does not evaluate the unknown $c^*_{\mathrm{tok}}$, and the zero-mean identity does not make this gradient an importance-corrected gradient of the current policy's return.
\end{proof}

\subsection{The Backward Surrogate}
\label{app:token-surrogate-proof}
\begin{proof}[Proof of Proposition~\ref{prop:token-sc-surrogate}]
Because $h_{\theta,u}$ and $\pisamp$ are detached,
\[
 \begin{aligned}
 \nabla_\theta^{\mathrm{AD}}L^{\mathrm{bp,TR}}_{\theta,u}
 &=-h_{\theta,u}\left[
 \nabla_\theta\log\pi_\theta(a_u\mid s_u)
 -\sum_v\pisamp(v\mid s_u)\nabla_\theta\log\pi_\theta(v\mid s_u)
 \right]\\
 &=-h_{\theta,u}\widetilde g_\theta(s_u,a_u).
 \end{aligned}
\]
Taking the sampler expectation gives Equation~\eqref{eq:token-sc-gradient}, and Theorem~\ref{thm:token-sc} gives the remaining equalities.
\end{proof}

\subsection{Bellman Telescoping and the Prompt Baseline}
\label{app:bpo-proof}
\begin{proof}[Proof of Theorem~\ref{thm:token-bpo}]
For $v<u$, the residual $\delta_v$ is determined by $(x,y_{<u})$, so
\[
\EE_{\pisamp}[\delta_v\delta_u]
=\EE_{\pisamp}[\delta_v\EE_{\pisamp}(\delta_u\mid x,y_{<u})]=0.
\]
Padding with zero residuals after termination and expanding the square proves Equation~\eqref{eq:orthogonality}. Under Assumption~\ref{ass:deterministic}, $A^{\mathrm{sampler}}_{t,k}(x,y_{<u},y_u)=V^{\mathrm{sampler}}_{t,k}(x,y_{\le u})-V^{\mathrm{sampler}}_{t,k}(x,y_{<u})$, so the advantages along a response sum to $R-V^{\mathrm{sampler}}_{t,k}(x)$. This proves Equation~\eqref{eq:tokentobpo} and the loss identity for every smooth parameterized policy. A fixed prompt weight can be applied to both sides. The trajectory form uses only the terminal reward and the prompt value, whereas the token form uses intermediate advantages; in a finite batch, the cross-terms remain.
\end{proof}

\label{app:bpo-baseline-proof}
\begin{proof}[Proof of Proposition~\ref{prop:bpo-no-baseline}]
Fix a prompt and a collection sampler, and write $V=V^{\mathrm{sampler}}_{t,k}$ and $Q=Q^{\mathrm{sampler}}_{t,k}$. Since $V=R$ at termination,
\[
 R-V(x)=\sum_u\big[V(s_{u+1})-V(s_u)\big]=\sum_u\big[A^{\mathrm{sampler}}_{t,k}(s_u,a_u)+\varepsilon_u\big],
\]
so that
\[
 D_\theta=V(x)+\beta\sum_u\delta_u+\sum_u\varepsilon_u.
\]
Every cross term vanishes in expectation. Both $\delta_u$ and $\varepsilon_u$ have zero mean given $x$, which removes the terms with $V(x)$. For $u\le v$, $\delta_u$ is determined by $(s_v,a_v)$ and $\EE[\varepsilon_v\mid s_v,a_v]=0$; for $u>v$, $\varepsilon_v$ is determined by $s_u$ and $\EE[\delta_u\mid s_u]=0$. With Equation~\eqref{eq:orthogonality}, this gives
\[
 \EE[D_\theta^2]=\beta^2\EE\sum_u\delta_u^2+\EE\Big[V(x)^2+\Big(\sum_u\varepsilon_u\Big)^2\Big],
\]
and dividing by $2\beta$ proves Equation~\eqref{eq:bpo-baseline-identity}. The last expectation involves only the sampler and the environment, so it does not depend on $\theta$. Replacing $R$ by $R-b(x)$ replaces $V(x)$ by $V(x)-b(x)$. Under Assumption~\ref{ass:deterministic}, $V(s_{u+1})=Q(s_u,a_u)$, so every $\varepsilon_u$ is zero. All sampler quantities and $b$ stay fixed during differentiation. Together with Theorem~\ref{thm:token-sc}, this proves Equation~\eqref{eq:bpo-sc-equivalence} for every smooth parameterized trainer, without realizability, which is needed only for the claim about the common target.

Finally, $D_\theta=V(x)+\beta\delta_{\mathrm{BPO}}$ holds by the definition of $\delta_{\mathrm{BPO}}$, so $\delta_{\mathrm{BPO}}=\sum_u\delta_u+\beta^{-1}\sum_u\varepsilon_u$, and the same cross-term argument gives $(\beta/2)\EE[\delta_{\mathrm{BPO}}^2]=\mathcal L_{\mathrm{tok}}+\EE[(\sum_u\varepsilon_u)^2]/(2\beta)$.
\end{proof}

\input{contents/mc-proof}

\subsection{TopK-KL}
\label{app:bpo-topk-proof}
\begin{proof}[Proof of Proposition~\ref{prop:bpo-topk}]
At a fixed history, $\nabla p_{\mathrm{tail}}=-\sum_{v\in H}p_vg_v$. Differentiating Equation~\eqref{eq:topk-kl} with the sampler records fixed gives
\[
 \nabla k^K_\theta
 =-\sum_{v\in H}q_vg_v
 -\frac{q_{\mathrm{tail}}}{p_{\mathrm{tail}}}\nabla p_{\mathrm{tail}}
 =-\sum_{v\in H}(q_v-\rho p_v)g_v,
\]
which is Equation~\eqref{eq:topk-kl-derivative}. With the whole coefficient detached, $\nabla_\theta C_\theta=\sum_{v\in H}(q_v-\rho p_v)g_v=-\nabla_\theta k^K_\theta$. The chain rule gives
\[
 \nabla_\theta\frac{(D^K_\theta)^2}{2\beta}
 =-D^K_\theta\sum_u[g_{\theta,u}+\nabla_\theta k^K_{\theta,u}],
\]
which is the gradient of Equation~\eqref{eq:bpo-topk-surrogate}, and differentiating Equation~\eqref{eq:token-sc-topk} gives the token-route expression in Equation~\eqref{eq:sequence-token-feedback}.
\end{proof}

Equation~\eqref{eq:topk-kl-gap} is the chain rule for KL applied to the partition of the vocabulary into the head tokens and one tail. As a consequence, $\EE_{a\sim q}[\log(p_a/q_a)+k^K_\theta]=k^K_\theta-k_\theta$ is generally nonzero, and the conditional-centering, baseline-invariance, and route-equivalence arguments do not extend to a finite head. A singleton tail gives the full KL, and a singleton head consisting of the sampled action gives the binary partition. These statements concern unfloored probabilities.

\label{app:token-sc-topk}
\paragraph{Gradient error of the top-$K$ score correction.}
At a fixed history, take the head $H$ from the stored sampler with unrenormalized probabilities $q_v$. Let $T=\mathcal V\setminus H$ and $\rho=q_{\mathrm{tail}}/p_{\mathrm{tail}}$, and set $\widehat q_v=q_v$ on $H$ and $\widehat q_v=\rho p_v$ on $T$. Since $\sum_vp_vg_v=0$,
\begin{equation}
 \widehat m_\theta(s):=\sum_v\widehat q_vg_v
 =\sum_{v\in H}(q_v-\rho p_v)g_v,
 \label{eq:token-sc-head-mean}
\end{equation}
so detaching the whole coefficient in Equation~\eqref{eq:token-sc-topk} gives $\nabla_\theta C_\theta=\widehat m_\theta$. Detaching $\rho$ alone is not enough. Token regression keeps the original logged $q_{y_u}$ in $h_{\theta,u}$, even when the sampled action lies outside $H$. With rollouts from $q$, the conditional gradient error of the top-$K$ update is
\begin{equation}
 \EE_q[-h(g_a-\widehat m_\theta)\mid s]
 -\EE_q[-h(g_a-m_\theta)\mid s]
 =\bar h_\theta(s)(\widehat m_\theta(s)-m_\theta(s)).
 \label{eq:token-sc-head-error}
\end{equation}
With a fixed baseline, $\bar h_\theta$ is replaced by $\bar h_\theta-b$. The error vanishes for exact centering, and in particular when the modeled and the actual tails agree. Because it is proportional to $\bar h_\theta$, a finite head makes the update depend on reward offsets. The top-$K$ surrogate is therefore an approximation, not the gradient of the exact variance objective.

\subsection{Binary KL}
\label{app:bpo-binary-proof}
\begin{proof}[Proof of Proposition~\ref{prop:bpo-binary}]
For one sampled action, keep $q$ fixed. Differentiating the Bernoulli KL gives
\[
 \nabla_\theta k^{\mathrm{bin}}
 =\left[-q+\frac{(1-q)p}{1-p}\right]\nabla_\theta\log p,
\]
and adding $\nabla_\theta\log p$ multiplies the score by $(1-q)/(1-p)$. Therefore
\[
 \nabla_\theta\frac{\widehat D_\theta^2}{2\beta}
 =-\widehat D_\theta\sum_u\omega_{\theta,u}\nabla_\theta\log p_u,
\]
which is the gradient of the stopped surrogate. The token surrogate applies the same corrected score with coefficient $h_{\theta,u}$.
\end{proof}
These are identities for the binary objective, not for the full-KL objective. The binary-corrected log-ratio $\log(p_a/q_a)+k^{\mathrm{bin}}(a)$ generally has nonzero mean under $a\sim q$, so the baseline-invariance argument does not extend to binary KL. For example, take $q=(1/3,1/3,1/3)$, $p=(1/2,1/3,1/6)$, and a constant reward. After adding the full KL, the log-ratio has zero sampler mean. The binary-corrected log-ratio has a negative mean, because coarsening lowers the KL and the binary KL is strictly below the full KL for at least one sampled action. A reward offset can therefore change the binary objective's gradient. With a vocabulary of two tokens, the binary partition is the full vocabulary.

\subsection{Mixtures of Sampler Versions}
\label{app:version-proof}
\begin{proof}[Proof of Proposition~\ref{prop:offpolicy-target}]
For each component, $d_\theta=\beta\log(\pitrainer_\theta/\pistar)$. Positive sampling mass and outer weights force that residual to vanish on its support at zero loss. A positive combination of nonnegative component losses is zero exactly when every component is zero, so a shared trainer must equal every component target. If full-support samplers $p_1,p_2$ have the same Gibbs target for a common reward and $\beta$, then $p_1(y)e^{R(y)/\beta}/Z_1=p_2(y)e^{R(y)/\beta}/Z_2$ for every $y$. Their ratio is the constant $Z_1/Z_2$, and normalization makes it one.
\end{proof}

\label{app:batch-proof}
\begin{proof}[Proof of Propositions~\ref{prop:sequence-batch} and~\ref{prop:token-batch}]
For SKLPO, the collection scores, lengths, and fixed normalizers are constant, so $\nabla_\theta d_{i,\theta}=\beta\nabla_\theta\log\pitrainer_\theta(y_i\mid x_i)$, and differentiation gives the weighted batch gradient. For token regression, the stop-gradients in Equation~\eqref{eq:token-sc-surrogate} give $-h_{i,u,\theta}\widetilde g_{i,u,\theta}$ for each token; applying Theorem~\ref{thm:token-sc} to each collection version and averaging with fixed version frequencies gives the expected mixture gradient. For sequence regression, apply Proposition~\ref{prop:bpo-no-baseline} to each version and differentiate its surrogate. Outcome-dependent replay selection can break these identities.
\end{proof}

\section{Proofs for the Response-Level and Value Results}
\label{app:value-proofs}
\subsection{Response Targets and Positive Weights}
\label{app:sequence-proof}
\begin{proof}[Proof of Theorems~\ref{prop:gibbs} and~\ref{thm:sequence-regression}]
For normalized policies the mass correction in $\UKL$ vanishes, and Lemma~\ref{lem:common-target} with $\mu=\pisamp$ and $f=R$ gives Equations~\eqref{eq:gibbs}--\eqref{eq:gap}. Equivalently, stationarity under $\sum_y\pi(y)=1$ gives $R(y)-\beta\log(\pi(y)/\pisamp(y))=\lambda$, and normalization fixes $\lambda=c^*$. Equation~\eqref{eq:mirror} uses $\eta^{-1}=\beta$. Replacing $R$ by $R-b(x)$ multiplies $Z$ by $e^{-b(x)/\beta}$ and leaves the normalized target unchanged.

Taking the logarithm of Equation~\eqref{eq:gibbs} gives $d_\theta=\beta\log(\pi_\theta/\pistar)$, and Theorem~\ref{thm:common-regression} identifies the unique zero-loss response distribution. Equation~\eqref{eq:gap} gives the reverse-KL criterion of the regularized return, and Equation~\eqref{eq:klpo} the squared log-ratio criterion of the regression. Sharing a zero on the full simplex does not make their projections onto a restricted model class equal.
\end{proof}

\begin{proof}[Proof of Proposition~\ref{prop:positive-weights}]
At $\pistar$ every residual is zero. Conversely, positivity of the sampler and of $a$ forces every residual to vanish at zero weighted loss, so the weight leaves both the target and its normalizer unchanged. With $a=1/T$ fixed, differentiation gives
\[
 \nabla_\theta\mathcal L_a=\EE_{\pisamp}[a\,d_\theta\score(Y)]
 =\EE_{\pisamp}\left[d_\theta\frac1T\sum_u\nabla_\theta\log\pi_\theta(y_u\mid x,y_{<u})\right],
\]
which is Equation~\eqref{eq:length-weighted}.
\end{proof}

\subsection{Regularized Bellman Values}
\label{app:bellman-proof}
\begin{proof}[Proof of Proposition~\ref{prop:soft-bellman}]
At a nonterminal state, condition on the first action and the successor state in Equation~\eqref{eq:qstar-objective}. Once the optimal continuation values are fixed by backward induction, the remaining problem is
\[
 \max_{\pi(\cdot\mid s)}\sum_a\pi(a\mid s)
 \left[\EE_P V^\star_{\beta,t,k}(S')-\beta\log\frac{\pi(a\mid s)}{\pisamp(a\mid s)}\right].
\]
Lemma~\ref{lem:common-target} gives the value and policy in Equation~\eqref{eq:qstar-bellman}, starting from the terminal boundary $V^\star_{\beta,t,k}=R$. The maximization changes the action law and keeps $P$.

The unregularized values satisfy $Q^\star(s,a)=\EE_P V^\star(S')$ and $V^\star(s)=\max_a Q^\star(s,a)$ with the same terminal boundary. For a fixed positive reference, let $p_{\min}=\min_{s,a}\pisamp(a\mid s)>0$ over feasible pairs and let $H$ bound the remaining horizon. The unregularized optimal value bounds the regularized value from above, and evaluating a deterministic unregularized optimal policy in the regularized objective gives the lower bound
\[
 V^\star(s)-\beta H\log(1/p_{\min})
 \leq V^\star_{\beta,t,k}(s)\leq V^\star(s).
\]
The state values therefore converge as $\beta\downarrow0$, and taking the fixed transition expectation gives convergence of the action values.
\end{proof}

\subsection{Implicit Soft Advantages}
\label{app:implicit-proof}
\begin{proof}[Proof of Theorem~\ref{prop:implicit-qstar}]
For deterministic transitions, $Q^\star_{\beta,t,k}(s,a)=V^\star_{\beta,t,k}(sa)$. Starting from the terminal value $F=R$, backward induction gives
\[
 e^{F(s)/\beta}=\sum_a\pisamp(a\mid s)e^{F(sa)/\beta}
 =\EE_{\pisamp}[e^{R/\beta}\mid s].
\]
Hence $V^\star_{\beta,t,k}=F$ and $Q^\star_{\beta,t,k}(s,a)=F(sa)$, and the product of the optimal token conditionals telescopes to the response Gibbs distribution. For SKLPO, zero loss and coverage imply $\pi_\theta(y\mid x)=\pistar(y\mid x)$ for every feasible response, so the token conditionals agree at every prefix with positive mass. For soft-value token regression, each zero residual gives those conditionals directly. Substituting into the Bellman policy proves Equation~\eqref{eq:implicit-recovery}, and fixed positive outer weights do not affect these zero-residual arguments.
\end{proof}

\subsection{Regression against Optimal Soft Q-Values}
\label{app:q-regression-proof}
\begin{proof}[Proof of Proposition~\ref{prop:q-regression}]
The Bellman policy gives $d^Q_\theta=\beta\log(\pi_\theta/\pi^\star_{\beta,t,k})$, and Theorem~\ref{thm:common-regression} identifies the zero-loss conditional policy and the gradient. Taking the sampler expectation of the identity
\[
 \beta\log\frac{\pi^\star_{\beta,t,k}(a\mid s)}{\pisamp(a\mid s)}=Q^\star_{\beta,t,k}(s,a)-V^\star_{\beta,t,k}(s)
\]
yields Equation~\eqref{eq:qstar-normalizer}. Under deterministic transitions, Equation~\eqref{eq:qstar-softvalue} gives $d^Q_\theta=\beta\ell_\theta-F(sa)+F(s)=-\beta\epsilon_u$.
\end{proof}

\subsection{Soft-Value Token Factorization}
\label{app:soft-token-proof}
\begin{proof}[Proof of Theorem~\ref{thm:soft-token}]
Under deterministic generation, the Gibbs probability of a prefix is proportional to its sampler probability times $\EE_{\pisamp}[e^{R/\beta}\mid s]$. The ratio of the masses of $sa$ and $s$ gives Equation~\eqref{eq:softfactorization}. Summing $F(x,y_{\le u})-F(x,y_{<u})$ telescopes to $R-F(x)=R-c^*(x)$, which proves Equation~\eqref{eq:softtokenresidual}. With $F$ fixed, $\nabla_\theta\epsilon_u=-\nabla_\theta\log\pi_\theta(y_u\mid x,y_{<u})$, which gives Equation~\eqref{eq:soft-token-loss}. Positive replay mass forces all local residuals to vanish at zero loss, and the factorization then identifies $\pistar$.
\end{proof}

\subsection{Terminal-Reward Token Regression}
\label{app:token-mc-proof}
\begin{proof}[Proof of Proposition~\ref{prop:token-mc}]
Write $\xi_u=R-Q^{\mathrm{sampler}}_{t,k}(x,y_{<u},y_u)$ and $m_u=V^{\mathrm{sampler}}_{t,k}(x,y_{<u})-b(x,y_{<u})$. Then $\widehat\delta_u=\delta_u+(\xi_u+m_u)/\beta$. Given $(x,y_{\le u})$, $\xi_u$ has zero mean, and given $(x,y_{<u})$, $\delta_u$ has zero mean. Expanding the square gives
\begin{equation}
 \mathcal L_{\mathrm{MC}}-\mathcal L_{\mathrm{tok}}
 =\frac{1}{2\beta}\EE_{x,Y\sim\pisamp}\sum_u
 \left[\Var_{\pisamp}(R\mid x,y_{\le u})+(V^{\mathrm{sampler}}_{t,k}(x,y_{<u})-b(x,y_{<u}))^2\right].
 \label{eq:tokenmcgap}
\end{equation}
Every term on the right depends only on the fixed sampler and baseline, so its derivative in $\theta$ is zero. The identity differentiates the full residual, including $k_\theta$. If $b\ne V^{\mathrm{sampler}}_{t,k}$, then $\EE[\widehat\delta_u\mid x,y_{<u}]=m_u/\beta$ is generally nonzero, and the KL derivative cannot be dropped.
\end{proof}

\section{RPG--URKL and Gradient Identities}
\label{app:rpg}
\label{sec:rpg}
We compare unweighted SKLPO with RPG on unfiltered responses from a fixed historical sampler $\pisamp$, which is off-policy relative to the trainer. Equation~\eqref{eq:gradklpo} gives the unweighted SKLPO gradient, and Section~\ref{sec:length-weighting} the length-weighted version. Write $w_\theta=e^{\ell_\theta}=\pi_\theta/\pisamp$. For normalized $\pisamp$, the fully differentiable URKL surrogate of \citet[Proposition~3.6]{rpg2026} is
\begin{equation}
 \mathcal L_{\mathrm{RPG}}(\theta)
 =\EE_{\pisamp}[-w_\theta R+\beta(w_\theta\log w_\theta-w_\theta)],
 \label{eq:rpg}
\end{equation}
which equals $-J(\pi_\theta)-\beta$. No clipping is applied in the identities below.

\begin{proof}[Proof of Proposition~\ref{prop:gradients}]
Differentiating Equation~\eqref{eq:klpo} gives Equation~\eqref{eq:gradklpo}. Differentiating Equation~\eqref{eq:rpg} gives $\EE_{\pisamp}[w_\theta(\beta\log w_\theta-R)\score]$, and adding $c^*$ leaves this expectation unchanged because $\EE_{\pisamp}[w_\theta\score]=\nabla_\theta\sum_y\pi_\theta(y)=0$.
\end{proof}

For a common response, residual, and score, $g_{\mathrm{RPG}}=e^{\ell_\theta}g_{\mathrm{SKLPO}}$. The SKLPO coefficient $d_\theta$ is affine in the accumulated log-ratio, and RPG multiplies it by the product of token ratios $e^{\ell_\theta}$. At $\pi_\theta=\pisamp$ the population gradients agree, although different baseline estimates can give different finite-sample gradients. With outer weight $1/T$, the sample identity becomes $g_{\mathrm{RPG}}=w_\theta T\,g_{\mathrm{SKLPO}}^{\mathrm{len}}$, so the matched-policy equality applies only to the unweighted regression. Multiplying the weighted loss by $\sg(w_\theta T)$ would recover the RPG gradient, but KLPO does not use that multiplier.

A stop-gradient importance weight also recovers the RPG gradient:
\begin{equation}
 \mathcal L_{\mathrm{sg}}(\theta)
 =\frac{1}{2\beta}\EE_{\pisamp}[\sg(w_\theta)d_\theta^2],
 \qquad
 \nabla_\theta\mathcal L_{\mathrm{sg}}=-\nabla_\theta J(\pi_\theta).
 \label{eq:sg}
\end{equation}
This is a surrogate for automatic differentiation, not an equality of scalar objectives. Differentiating through the weight as well would add
\begin{equation}
 \frac{1}{2\beta}\EE_{\pisamp}[w_\theta d_\theta^2\,\score(y)].
 \label{eq:extra}
\end{equation}
Replacing $c^*$ by any fixed prompt baseline in Equation~\eqref{eq:sg} still gives the population RPG gradient. The unweighted regression lacks this baseline invariance, because its zero-loss target depends on the correct normalizer.

\begin{table}[H]
\centering
\small
\begin{tabularx}{\textwidth}{@{}lXX@{}}
\toprule
 & RPG--URKL & Unweighted SKLPO with exact $c^*$ \\
\midrule
Criterion, up to constants & $\beta\KL(\pi_\theta\|\pistar)$ & $\frac{\beta}{2}\EE_{\pisamp}[\log^2(\pi_\theta/\pistar)]$ \\
Gradient weight on $\pisamp$ data & $w_\theta d_\theta$ & $d_\theta$ \\
Unrestricted optimum & $\pistar$ & $\pistar$ \\
Neural approximation & Optimizes regularized return & Fits the optimality condition \\
\bottomrule
\end{tabularx}
\caption{SKLPO removes the current-policy importance weight. The two objectives share the unrestricted optimum but have different gradients and, in general, different best approximations within a model class.}
\label{tab:comparison}
\end{table}

\section{Normalizer Identities and Estimation}
\label{app:normalizer}

\subsection{The Normalizer as a KL Divergence}
\label{app:normalizer-proof}
\begin{proof}[Proof of Lemma~\ref{lem:normalizer}]
Equation~\eqref{eq:softvalue} is Equation~\eqref{eq:normalizer-kl} with $(\mu,f)=(\pisamp,R)$. For the estimator, $e^{\widehat c/\beta}=G^{-1}\sum_i e^{R_i/\beta}$ is unbiased for $Z$, and Jensen's inequality gives $\EE\widehat c\leq\beta\log Z=c^*$.
\end{proof}

With centered rewards $A^{\mathrm{sampler}}_{t,k}=R-V^{\mathrm{sampler}}_{t,k}$, the normalizer is a pure KL term:
\begin{equation}
 c_A^*(x):=\beta\log\EE_{\pisamp}[e^{A^{\mathrm{sampler}}_{t,k}(x,Y)/\beta}]
 =c^*(x)-V^{\mathrm{sampler}}_{t,k}(x)=\beta\KL(\pisamp\|\pistar).
 \label{eq:centerednormalizer}
\end{equation}
Writing $\kappa^*(x)=\KL(\pisamp\|\pistar)$, the fixed-target residual becomes
\begin{equation}
 -\frac{d_\theta(x,y)}{\beta}
 =\frac{A^{\mathrm{sampler}}_{t,k}(x,y)}{\beta}-\ell_\theta(x,y)-\kappa^*(x).
 \label{eq:exactklresidual}
\end{equation}
This removes the response partition function in the same way that BPO removes the local partition function from its PMD optimality condition \citep[Section~3.1]{bpo2026}. The identity still involves the unknown optimum; substituting the current $\pi_\theta$ for $\pistar$ gives the profiled loss and changes the intercept. For large $\beta$, $c^*=V^{\mathrm{sampler}}_{t,k}+\Var_{\pisamp}(R)/(2\beta)+O(\beta^{-2})$, which quantifies the correction that a mean-reward baseline omits.

\subsection{Binary Rewards}
For rewards taking the values $r_-$ and $r_+$, let $\rho_{\pisamp}(x)$ be the success probability under $\pisamp$. Then
\begin{equation}
 c^*(x)=\beta\log\left[(1-\rho_{\pisamp}(x))e^{r_-/\beta}
 +\rho_{\pisamp}(x)e^{r_+/\beta}\right].
 \label{eq:binary}
\end{equation}
This covers both $0/1$ rewards and negative failure rewards, and a success-rate estimate gives a plug-in normalizer. The rate must belong to the prompt and the collection sampler; a global average, or an exponential moving average over changing policies, generally estimates something else.

\subsection{Profiling the Response Intercept}
\label{app:profiling-proof}
\begin{proof}[Proof of Proposition~\ref{prop:sequence-centering}]
This is Proposition~\ref{prop:profiling} with $(\mu,f)=(\pisamp,R)$, which gives Equations~\eqref{eq:profilec} and~\eqref{eq:centered}. Since $\kappa_\theta=-\EE_{\pisamp}\ell_\theta$, the residual in Equation~\eqref{eq:klresidual} is $(h_\theta-\EE_{\pisamp} h_\theta)/\beta$. Under sampling from the sampler, the fixed-normalizer and centered losses differ by
\begin{equation}
 \mathcal L_{\mathrm{SKLPO},\pisamp}-\mathcal L_{\mathrm{ctr},\pisamp}
 =\frac{1}{2\beta}\EE_x[(c^*(x)-c_\theta(x))^2].
 \label{eq:profilegap}
\end{equation}
Since $\EE_{\pisamp}\delta_{\mathrm{KL}}=0$ for each prompt, the derivative of $\kappa_\theta$ cancels after expectation:
\begin{equation}
 \nabla_\theta\mathcal L_{\mathrm{ctr}}
 =-\beta\EE_x\EE_{Y\sim\pisamp(\cdot\mid x)}
 [\delta_{\mathrm{KL}}(x,Y;\theta)\,\score(Y)].
 \label{eq:gradcentered}
\end{equation}
Neither the loss nor the gradient contains an importance weight.
\end{proof}

For a group from $\pisamp$, the estimates $\widehat V=\bar R$ and $\widehat\kappa_\theta=-\bar\ell$ give $\widehat\delta_i=(R_i-\bar R)/\beta-(\ell_i-\bar\ell)$, which recovers Equation~\eqref{eq:pairwise}. The finite-group bias statement there assumes fixed parameters and independent samples; it is not a generalization guarantee after fitting the batch.

\subsection{Empirical Centering and Singletons}
\label{app:empirical-proof}
\begin{proof}[Proof of Lemma~\ref{lem:empirical-centering}]
Expanding the pairwise square gives $\sum_{i,j}(h_i-h_j)^2=2G\sum_i(h_i-\bar h)^2$, which proves Equation~\eqref{eq:pairwise}. In the gradient, every term containing $\nabla_\theta\bar h$ is multiplied by $\sum_i(h_i-\bar h)=0$, so detaching the mean gives the same gradient. At fixed $\theta$, independence gives $\EE[G^{-1}\sum_i(h_i-\bar h)^2]=(G-1)\Var(h_\theta)/G$.
\end{proof}

\label{app:singleton-proof}
\begin{proof}[Proof of Proposition~\ref{prop:singleton}]
For $G=1$, the log-mean-exp estimator is $\widehat c=\beta\log e^{R_1/\beta}=R_1$. At matched policies $\ell_\theta=0$, so $d_\theta=0$. Empirical centering sets $\bar h=h_1$ at every policy, and a weighted singleton also has $\bar h_a=h_1$, so the centered residuals and losses are identically zero.
\end{proof}

\subsection{Data Selection and the Normalizer}
Fix a sampler $\pisamp$ and let $\nu$ be its response law after outcome-dependent selection. Unfiltered collection gives $\nu=\pisamp$ even when $k>0$ and the current trainer differs from the sampler; selection can break this equality. With the exact $c^*$, consider
\begin{equation}
 \mathcal L_{\mathrm{SKLPO},\nu}(\theta)
 =\frac{1}{2\beta}\EE_{Y\sim\nu}[d_\theta(Y)^2].
 \label{eq:offpolicy}
\end{equation}
If $\nu$ is positive on the support of $\pisamp$ and the target is realizable, this loss has the same unique zero-loss policy $\pistar$, and it needs no importance weight to estimate \emph{its own} gradient. Under model misspecification, though, its best approximation can differ from that of the sampler-conditioned loss $\mathcal L_{\mathrm{SKLPO},\pisamp}$, and finite data do not guarantee coverage.

Estimating $c^*$ from $\nu$ is a separate problem, for which the general importance-sampling identity is
\begin{equation}
 Z=\EE_{Y\sim\nu}\left[\frac{\pisamp(Y)}{\nu(Y)}e^{R(Y)/\beta}\right].
 \label{eq:offpolicyz}
\end{equation}
This weight concerns normalizer estimation only; the SKLPO loss and algorithm do not use it. Unfiltered data from the corresponding sampler need no correction, while an uncorrected exponential average under a different $\nu$ estimates a different normalizer. Alternatively, profiling the intercept under $\nu$ gives $c_{\theta,\nu}=\EE_\nu[R-\beta\ell_\theta]$ and a $\nu$-variance loss. Under the same support and realizability assumptions, that loss identifies $\pistar$ without a density ratio, with finite-sample behavior that depends on $\nu$.

\subsection{Weighted Profiling}
\label{app:weighted-profiling}
\begin{proof}[Proof of Proposition~\ref{prop:weighted-centering}]
Fix a prompt and a sampler version. Let $\nu$ be the possibly selected response law and $a(y)>0$ a fixed weight, and set $Z_a=\EE_\nu a$, $\nu_a(y)=a(y)\nu(y)/Z_a$, and $c_{\theta,a}=\EE_{\nu_a}h_\theta$. Completing the square gives
\begin{equation}
 \EE_\nu[a(c-h_\theta)^2]
 =\EE_\nu[a(h_\theta-c_{\theta,a})^2]
   +Z_a(c-c_{\theta,a})^2,
 \label{eq:weighted-profile-gap}
\end{equation}
so the profiled loss is $Z_a\Var_{\nu_a}(h_\theta)/(2\beta)$. The pairwise variance identity under $\nu_a$ gives Equation~\eqref{eq:weighted-pairwise} for an empirical group. The reweighted law changes the regression measure, but the sampler in $\ell_\theta=\log(\pi_\theta/\pisamp)$ and the fixed Gibbs normalizer remain those of the original target. With $g_\theta=\score(Y)$, differentiation gives
\begin{equation}
 \begin{aligned}
 \nabla_\theta\mathcal L_{\mathrm{ctr},a}
 &=-\EE_\nu[a(h_\theta-c_{\theta,a})g_\theta]\\
 &=-Z_a\EE_{\nu_a}
       [h_\theta(g_\theta-\EE_{\nu_a}g_\theta)].
 \end{aligned}
 \label{eq:weighted-centered-gradient}
\end{equation}
The intercept derivative vanishes because $\EE_\nu[a(h_\theta-c_{\theta,a})]=0$; an ordinary, unweighted intercept would not cancel in the weighted loss. Positive $a$ preserves the support, so zero weighted variance still identifies the original Gibbs target under full coverage and realizability.
\end{proof}

\subsection{Learned Normalizers for Single Rollouts}
\label{app:learned-normalizers}
Fix a collection version $v=t-k$ and $\beta$. A predictor takes the prompt and collection-policy information $(x,v)$ as input, and its prediction must describe that sampler rather than the current trainer. One response per prompt supplies one training example. Sharing parameters across prompts allows generalization, but it does not turn a single observed reward into a conditional expectation. We describe two fixed-normalizer estimators and a learned-intercept alternative; all three are implementation choices rather than assumptions of the response-level objective.

\paragraph{Binary rewards: predict the success probability.}
For $R\in\{r_-,r_+\}$, train $p_\phi(x,v)\in(0,1)$ with the Bernoulli negative log-likelihood
\begin{equation}
 \mathcal L_p(\phi)=\EE_{x,Y\sim\pisamp}
 [-z\log p_\phi(x,v)-(1-z)\log(1-p_\phi(x,v))],
 \qquad z=\mathbf1\{R=r_+\}.
 \label{eq:success-predictor}
\end{equation}
The unrestricted conditional minimizer is the sampler's success probability. Substituting the prediction into Equation~\eqref{eq:binary} gives
\begin{equation}
 \widehat c_\phi(x,v)=\beta\log\left[
 (1-p_\phi(x,v))e^{r_-/\beta}+p_\phi(x,v)e^{r_+/\beta}\right],
 \label{eq:predicted-binary-normalizer}
\end{equation}
which should be evaluated in log space. A single success or failure label can update $\phi$, but it does not justify setting $p_\phi$ to that label before scoring the same response.

\paragraph{General bounded rewards: predict an exponential moment.}
Let $M$ be a known upper bound on the reward and set $q(Y)=e^{(R(x,Y)-M)/\beta}\in(0,1]$. A positive predictor $m_\phi(x,v)$ can be trained by
\begin{equation}
 \mathcal L_m(\phi)=\EE_{x,Y\sim\pisamp}
 [(m_\phi(x,v)-q(Y))^2],\qquad
 \widehat c_\phi(x,v)=M+\beta\log m_\phi(x,v).
 \label{eq:moment-predictor}
\end{equation}
Its unrestricted conditional minimizer is $\EE_{\pisamp}[q(Y)\mid x]$, which gives $c^*$. The shift by $M$ bounds the regression labels without changing the normalizer, although small $\beta$ can still make rare high rewards decisive. Regressing $c_\phi$ directly onto $R$ would learn the mean reward instead. Both predictors use unfiltered rewards under the specified sampler and do not use the policy loss's $1/T$ weight. Outcome-dependent selection, length weighting, or pooling across versions changes the conditional quantity being learned, and a numerical floor on $m_\phi$ adds a further approximation.

\paragraph{Using a fixed prediction in policy updates.}
Train the predictor on separate records and freeze a snapshot before scoring the policy-training responses. Compute $\widehat c_\phi(x,v)$ from that snapshot, detach it from the policy computation, and keep the same prediction while reusing those responses; newly observed rewards can train a later snapshot. For replayed data, the independence premise of Appendix~\ref{app:single-rollout} requires separate estimation data or held-out folds, so that the predictor used for a response was not fitted on it. Detaching a prediction, or evaluating it before the current optimizer step, does not undo earlier fitting on the same record. These routes learn a prompt-level value or normalizer: they avoid a token-value critic, not all auxiliary value estimation.

\paragraph{Learning an intercept jointly with the policy.}
Instead of estimating the fixed $c^*$, the intercept can be parameterized inside a weighted regression. With separate parameters $\theta$ and $\phi$, let
\begin{equation}
 \mathcal J(\theta,\phi)=\frac1{2\beta}
 \EE_{x,Y\sim\pisamp}[a(Y)(\beta\ell_\theta-R+c_\phi(x,v))^2],
 \qquad a(Y)=1/T(Y).
 \label{eq:joint-intercept}
\end{equation}
At fixed $\theta$, unrestricted minimization over $c_\phi$ gives $c_{\theta,a}$ of Equation~\eqref{eq:weighted-centering}. A stochastic update first computes $h_i=R_i-\beta\ell_i$ and $c_\phi(x_i,v_i)$ before either parameter moves. The policy step treats $c_\phi$ as fixed, and the auxiliary step minimizes
\begin{equation}
 \widehat{\mathcal L}_c(\phi)=\frac1{2\beta B}
 \sum_i a_i\big(c_\phi(x_i,v_i)-\sg(h_i)\big)^2
 \label{eq:intercept-update}
\end{equation}
with the pre-update $h_i$. These are the two partial gradients of $\mathcal J$ at the pre-update parameters. Auxiliary gradients should stay out of the policy parameters; a small head on fixed prompt features is one option. The intercept must be refreshed as the trainer changes. One response per prompt gives a stochastic update of this joint objective, but an imperfect predictor does not give the exactly profiled policy gradient.

Under the same full-support and realizability assumptions, a zero-loss joint fit satisfies $R-\beta\ell_\theta=c_\phi(x,v)$ on the response support, and normalization then gives $c_\phi=c^*$ and $\pi_\theta=\pistar$ for that sampler. Finite singleton data need not identify this solution: an overly flexible predictor can memorize each prompt's observed residual, and a free intercept for every response cancels its residual altogether. The learned route therefore depends on generalization across prompts; it is not empirical centering of one response.

\section{Additional Off-Policy Results}
\label{app:offpolicy}
\subsection{The Local Importance Multiplier}
\label{app:token-is-proof}
\label{sec:token-is}
Write $g_\theta(s,a)=\nabla_\theta\log\pi_\theta(a\mid s)$. For a token from response $i$, the behavior conditional and the trainer-to-sampler ratio are
\begin{equation}
 \nu_i(a\mid s)=\pisampler_{\theta_{t-k_i(t)}}(a\mid s),\qquad
 \rho_{t,i}(s,a)=\frac{\pitrainer_{\theta_t}(a\mid s)}
 {\pisampler_{\theta_{t-k_i(t)}}(a\mid s)}.
 \label{eq:lagged-token-ratio}
\end{equation}

\begin{proposition}[Local Gradient Comparison]
\label{prop:token-is}
At a fixed history $s$, use the fixed-normalizer local PMD residual $d_{\theta,s}(a)$ of Equation~\eqref{eq:token-fixed-loss}, with the collection sampler $\nu_i$ as reference. For $\rho_\theta=\pi_\theta/\nu_i$, the per-sample contributions to the negative local regularized-return gradient and to the regression gradient are
\begin{equation}
 g_{\mathrm{IS}}(s,a)=\rho_\theta(s,a)d_{\theta,s}(a)g_\theta(s,a),
 \qquad
 g_{\mathrm{KLPO}}(s,a)=d_{\theta,s}(a)g_\theta(s,a),
 \label{eq:token-gradient-comparison}
\end{equation}
and their expectations under $a\sim\nu_i(\cdot\mid s)$ are the respective gradients.
\end{proposition}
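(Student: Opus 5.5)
The plan is to specialize the generic identities of Section~\ref{sec:regression-principle} to a single history, with the context $h=s$, the outcome $z=a$, the reference $\mu=\nu_i(\cdot\mid s)$, and the signal $f=A^{\mathrm{sampler}}$ of the collection version.

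The first step identifies the residual. The fixed-normalizer local residual is
\[
 d_{\theta,s}(a)=\beta\log\frac{\pi_\theta(a\mid s)}{\nu_i(a\mid s)}-A(s,a)+c^*_{\mathrm{tok}}(s).
\]
By Equation~\eqref{eq:localtarget} this equals $\beta\log(\pi_\theta/\pi^+_{\mathrm{tok}})$, which is exactly the residual $r_\theta$ of Theorem~\ref{thm:common-regression} in this setting.

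The second step handles the regression gradient. I apply Theorem~\ref{thm:common-regression} with replay law $a\sim\nu_i(\cdot\mid s)$ and weight $a\equiv 1$. The loss $\frac1{2\beta}\EE_{\nu_i}[d_{\theta,s}^2]$ then has gradient $\EE_{a\sim\nu_i}[d_{\theta,s}(a)g_\theta(s,a)]$. This uses only the facts that $\nabla_\theta d_{\theta,s}=\beta g_\theta$ and that $c^*_{\mathrm{tok}}$, $A$ and $\nu_i$ are held fixed. So the per-sample contribution is $g_{\mathrm{KLPO}}$, and its $\nu_i$-expectation is the regression gradient.

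The third step handles the importance-weighted contribution, using Equation~\eqref{eq:framework-return-gradient}. Write the local regularized return as $J_s(\pi_\theta)=\EE_{\pi_\theta}A-\beta\KL(\pi_\theta\|\nu_i)$; for normalized policies the UKL reduces to the KL. Its gradient is $\EE_{\pi_\theta}[(A-\beta\ell_\theta)g_\theta]$, because the $-\beta\EE_{\pi_\theta}\nabla\ell_\theta=-\beta\EE_{\pi_\theta}g_\theta$ term vanishes. Since $\EE_{\pi_\theta}g_\theta=0$, I may add the constant $-c^*_{\mathrm{tok}}(s)$ to the coefficient, which turns it into $-d_{\theta,s}$. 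Changing measure from $\pi_\theta$ to $\nu_i$ then multiplies each sample by $\rho_\theta=\pi_\theta/\nu_i$. This gives $-\nabla J_s=\EE_{\nu_i}[\rho_\theta d_{\theta,s}g_\theta]$, so the per-sample contribution is $g_{\mathrm{IS}}$.

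The change of measure needs positivity of $\nu_i$ on the support of $\pi_\theta$, which Assumption~\ref{ass:finite} supplies. Applied with the collection sampler of response $i$, Assumption~\ref{ass:matched} then yields the per-version statement for each $i$.

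I expect the main obstacle to be bookkeeping rather than analysis. First, Equation~\eqref{eq:framework-return-gradient} holds for a general $\mu$, and I must check that $c^*_{\mathrm{tok}}$ is the normalizer computed with reference $\nu_i$ and not with some other version. Second, the added constant must be justified under $\pi_\theta$ before the change of measure, because $\EE_{\nu_i}g_\theta\neq0$ in general.
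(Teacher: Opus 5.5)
Your proof is correct and follows the paper's own argument. The regression gradient comes straight from the fixed-normalizer loss via $\nabla_\theta d_{\theta,s}=\beta g_\theta$, and the importance-weighted form comes from Equation~\eqref{eq:framework-return-gradient} (which you re-derive by adding the constant $-c^*_{\mathrm{tok}}(s)$ under $\pi_\theta$) followed by a change of measure to $\nu_i$; your worry about which normalizer appears does not affect the gradient identities, since any action-independent constant may be added to the coefficient before changing measure.
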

Writing $\Delta=\log\rho_{t,i}$, importance sampling multiplies each contribution by $e^\Delta$, and KLPO has neither this multiplier nor a clipping threshold. Older checkpoints can be farther from the trainer, and longer trajectories offer more chances for an outlying token ratio. Outcome-dependent admission can change the action distribution that the return-gradient identity requires. FlashREINFORCE multiplies its token ratio by a batch advantage, while the length-weighted SKLPO update puts $d_i/T_i$ on each token score with no ratio. Both average scores within a response, but their residuals differ; Appendix~\ref{app:flash} gives the comparison.

\begin{proof}[Proof of Proposition~\ref{prop:token-is}]
At a fixed history, the conditional regression loss and gradient are
\begin{equation}
 \begin{aligned}
 d_{\theta,s}(a)
 &=\beta\log\frac{\pi_\theta(a\mid s)}{\pisamp(a\mid s)}
      -A^{\mathrm{sampler}}_{t,k}(s,a)+c^*_{\mathrm{tok}}(s),\\
 \mathcal L_{\mathrm{tok}}^{\mathrm{fix}}(\theta;s)
 &=\frac1{2\beta}\EE_{a\sim\pisamp(\cdot\mid s)}[d_{\theta,s}(a)^2],\\
 \nabla_\theta\mathcal L_{\mathrm{tok}}^{\mathrm{fix}}(\theta;s)
 &=\EE_{a\sim\pisamp(\cdot\mid s)}[d_{\theta,s}(a)g_\theta(s,a)].
 \end{aligned}
 \label{eq:token-offpolicy-regression}
\end{equation}
For the local return $J_s=\EE_{\pi_\theta}A^{\mathrm{sampler}}_{t,k}-\beta\KL(\pi_\theta\|\nu_i)$, Equation~\eqref{eq:framework-return-gradient} gives $-\nabla_\theta J_s=\EE_{\pi_\theta}[d_{\theta,s}g_\theta]$. Changing the action measure to $\nu_i$ introduces $\rho_\theta=\pi_\theta/\nu_i$ and proves Equation~\eqref{eq:token-gradient-comparison}.
\end{proof}

\subsection{Comparison with FlashREINFORCE}
\label{app:flash}
Let $\nu_i=\pisampler_{\theta_{t-k_i(t)}}$ be the behavior policy that generated response $i$, and $g_{i,u}=\nabla_\theta\log\pi_\theta(y_{i,u}\mid x_i,y_{i,<u})$. With the batch advantage $A_i$, sequence mask $m_i$, and behavior probabilities held fixed, the FlashREINFORCE policy-loss gradient is
\begin{equation}
 \widehat g_{\mathrm{Flash}}
 =-\frac1B\sum_i\frac{m_iA_i}{T_i}\sum_u
 \underbrace{\frac{\pi_\theta(y_{i,u}\mid x_i,y_{i,<u})}
 {\nu_i(y_{i,u}\mid x_i,y_{i,<u})}}_{\rho_{i,u}\text{: token IS}}
 g_{i,u}.
 \label{eq:flash-token-is}
\end{equation}
Both methods use the actual collection probability: FlashREINFORCE places the ratio in a multiplicative weight, and SKLPO puts its logarithm in the response residual. Equation~\eqref{eq:mixture-regression} gives $\widehat g_{\mathrm{SKLPO}}^{\mathrm{len}}=B^{-1}\sum_i(d_i/T_i)\sum_u g_{i,u}$, so both updates average token scores within a response, but the SKLPO coefficient is the complete-response optimality residual rather than a batch advantage. Equations~\eqref{eq:token-offpolicy-regression} and~\eqref{eq:token-gradient-comparison} isolate the multiplier at a single history with a local regularized residual; FlashREINFORCE's batch advantage defines a different residual.

\subsection{Centering under the Behavior Distribution}
\label{app:behavior-centering}
Let $\nu$ be the response distribution of a specified sampler version, possibly after outcome-dependent selection as in Appendix~\ref{app:normalizer}. Unweighted profiling under $\nu$ gives an off-policy score identity, and Equation~\eqref{eq:weighted-centered-gradient} gives its weighted counterpart. At a fixed prompt and sampler version, write
\begin{equation}
 \begin{aligned}
 h_\theta&=R-\beta\log(\pi_\theta/\pisamp),
 &c_{\theta,\nu}&=\EE_{\nu}h_\theta,\\
 \mathcal L_{\mathrm{ctr},\nu}
 &=\frac{1}{2\beta}\Var_{\nu}(h_\theta),
 &g_\theta&=\score(Y).
 \end{aligned}
 \label{eq:behavior-centered-loss}
\end{equation}
Its gradient can be written in either form
\begin{equation}
 \begin{aligned}
 \nabla_\theta\mathcal L_{\mathrm{ctr},\nu}
 &=-\EE_{\nu}[(h_\theta-c_{\theta,\nu})g_\theta]\\
 &=-\EE_{\nu}[h_\theta(g_\theta-\EE_{\nu}g_\theta)],
 \end{aligned}
 \label{eq:behavior-centered-score}
\end{equation}
both obtained by differentiating the variance and using the zero mean of the centered residual. The second form connects response-level regression to score centering \citep[Equation~(4)]{scorecentering2026} under the actual replay distribution, and a prompt-dependent reward offset cancels from it. The centering mean must match the sampling measure: $c_{\theta,\nu}$ is generally not $\EE_{\nu}R+\beta\KL(\pisamp\|\pi_\theta)$, since that expression requires sampling from $\pisamp$. Neither form is in general an unbiased on-policy return gradient.

\subsection{Single Rollouts and Signed Feedback}
\label{app:single-rollout}
With a fixed exact response normalizer $c_{t,k}^*(x)$, each completed response gives an SKLPO regression gradient. For two reward levels with $0<\rho_{\pisamp}(x)<1$, the initial residuals at $\pi_\theta=\pisamp$ are
\begin{equation}
 d_+=c_{t,k}^*-r_+<0,\qquad d_-=c_{t,k}^*-r_->0.
 \label{eq:signed-initial-feedback}
\end{equation}
Gradient descent therefore pushes up the likelihood of successes and down that of failures, without a token-value critic, and the log-ratio term moderates this feedback as the policy approaches its target. At the exact response Gibbs target, the success odds satisfy
\begin{equation}
 \frac{\rho_{t,k}^*}{1-\rho_{t,k}^*}
 =\frac{\rho_{\pisamp}}{1-\rho_{\pisamp}}
   e^{(r_+-r_-)/\beta}.
 \label{eq:success-odds}
\end{equation}
The signs hold at the matched-policy starting point of one component and can fail for a stale response at the current trainer; likewise, the odds identity describes the target, which a finite update with a shared trainer need not reach.

An independently estimated prompt success rate, additional data from the same prompt and version, or a learned prompt normalizer can approximate $c_{t,k}^*$, with some estimation error. A reward mean across prompts does not provide the required soft normalizer. Condition on a prompt-only estimate $\widehat c=c_{t,k}^*+\varepsilon(x)$, obtained independently of the response being scored and held fixed during policy updates. Writing $\nu$ for the response distribution at this sampler version, the conditional gradient error is
\begin{equation}
 \nabla_\theta\mathcal L_{\nu,\widehat c}
 -\nabla_\theta\mathcal L_{\nu,c_{t,k}^*}
 =\varepsilon(x)\EE_{\nu}[g_\theta\mid x],
 \label{eq:normalizer-gradient-error}
\end{equation}
which is nonzero off-policy in general. With outer weight $1/T$, the error becomes $\varepsilon(x)\EE_\nu[g_\theta/T\mid x]$, which can be nonzero even at $\nu=\pi_\theta$, since $\EE_{\pi_\theta}[g_\theta/T]=\nabla_\theta\EE_{\pi_\theta}[1/T]$. Empirical centering of a single response gives zero loss. SKLPO can therefore train from asynchronous single rollouts only when a normalization estimator is supplied. FlashREINFORCE's batch reward baseline serves a different objective.

\paragraph{History coverage and trajectory length.}
Removing token importance sampling leaves the distribution of stored histories unchanged. Their sampling frequencies can still affect a restricted model's best fit, and hard filters can remove the coverage that Proposition~\ref{prop:offpolicy-target} requires. A fixed positive outer weight such as $a(x,y)=1/|y|$ preserves each realizable component target for nonempty responses, while changing the gradients and possibly the best fit across versions. Averaging the log-ratio \emph{inside} the residual changes the target equation itself. Response-level feedback from a terminal failure also leaves the responsible intermediate tool call unidentified.

\subsection{Feasible Updates in Interactive Environments}
\label{app:interactive}
When tools are deterministic and initial conditions are fixed, a full-history action policy can realize the response Gibbs target on the reference trajectory support. Tool outputs enter the conditioning history without contributing likelihood factors. Stochastic transition factors that are common to both policies also cancel from a trajectory likelihood ratio, but this cancellation does not ensure feasibility, because the unrestricted Gibbs tilt can change observation probabilities that the policy does not control.

A statewise regression gives a separate route for stochastic environments. At a stored history $s$, take the local advantage $A^{\mathrm{sampler}}_{t,k}(s,a)$ and the PMD target of Equation~\eqref{eq:localtarget}. For any covering behavior distribution $\nu(\cdot\mid s)$, profiling
\begin{equation}
 \min_{\pi,c}\ \EE_{a\sim\nu(\cdot\mid s)}
 \left[\beta\log\frac{\pi(a\mid s)}{\pisamp(a\mid s)}
       -A^{\mathrm{sampler}}_{t,k}(s,a)+c(s)\right]^2
 \label{eq:statewise-offpolicy}
\end{equation}
has the same realizable local PMD target without action importance weights, by the zero-residual argument. The update fits a feasible action distribution and allows covered off-policy state sampling. It does require continuation values for the same sampler as its denominator. A complete return collected under that sampler estimates its ordinary $Q^{\mathrm{sampler}}_{t,k}$, a return from another version generally does not, and neither is an optimal regularized soft-Q target. Behavior centering does not correct a mismatch of continuations across versions, and it does not extend the deterministic telescoping identity to stochastic tools.

\subsection{A Stochastic-Tool Counterexample}
\label{app:stochastic-counterexample}
Consider a single forced action followed by a fair random terminal reward in $\{0,1\}$. Every feasible policy has value $1/2$ and zero policy KL. The unrestricted trajectory normalizer is instead $\beta\log[(1+e^{1/\beta})/2]>1/2$, by strict Jensen. The tilt raises the success probability even though the only action cannot change it, whereas the regularized Bellman equation keeps the environment law and gives value $1/2$.

\subsection{Learning Soft Bellman Targets}
\label{app:learning-soft-targets}
A critic extension could fit $Q_\psi$ to one-step targets $\widehat q=V_{\bar\psi}(S')$, with $V_{\bar\psi}(s)=\beta\log\sum_a\pisamp(a\mid s)e^{Q_{\bar\psi}(s,a)/\beta}$ at nonterminal states and $V_{\bar\psi}(s_T)=R(s_T)$. With $\bar\psi$ frozen, a transition sampled conditional on $(s,a)$ gives an unbiased Bellman target regardless of the behavior policy, provided replay selection preserves the transition law. This extension needs action values or a justified estimator of the action log-sum-exp, and a separate convergence analysis under function approximation. A sampler's terminal return estimates $Q^{\mathrm{sampler}}_{t,k}$ and cannot in general replace this optimal regularized continuation target.

\section{Token-Level Identities}
\label{app:token-details}
\subsection{Response and Token Gradients}
The centered response residual multiplies every token score:
\begin{equation}
 \nabla_\theta\mathcal L_{\mathrm{seq}}
 =-\beta\EE_{x,Y\sim\pisamp}\left[\delta_{\mathrm{seq}}
 \sum_u\nabla_\theta\log\pi_\theta(y_u\mid x,y_{<u})\right].
 \label{eq:seqtokencredit}
\end{equation}
Equation~\eqref{eq:tokengradient} gives the token PMD gradient, in which the local KL derivative cancels in population because $\EE_{\pisamp}[\delta_u\mid x,y_{<u}]=0$. Neither gradient has a multiplicative importance weight; the first uses one residual per response and the second one residual per token.

\subsection{Token Reductions and Masking}
\label{app:token-reductions}
Equations~\eqref{eq:tokenloss} and~\eqref{eq:orthogonality} sum over tokens and then average over responses. In population, a global token mean divides by the constant $\EE_{\pisamp}|Y|$ and changes nothing. Dividing each response's loss by its own length instead reweights trajectories and can invalidate the cancellation of cross terms, and replacing a sum of log-ratios by a length average changes the residual itself. Termination should be counted consistently, and prompt, padding, and external tool-output tokens excluded from policy likelihoods.

\subsection{BPO Equation (25) and Score Centering}
\label{sec:scorecentering}
\citet[Equation~(25)]{bpo2026} derive the per-token gradient before approximating the full KL. Write $g_\theta(s,a)=\nabla_\theta\log\pi_\theta(a\mid s)$ and $g_{i,u,\theta}=\nabla_\theta\log\pi_\theta(y_{i,u}\mid x,y_{i,<u})$, and let $\phi(x)$ be a fixed prompt weight. With $\eta=1/\beta$, their token contribution to the trajectory-loss gradient is
\begin{equation}
 \begin{aligned}
 \nabla_\theta\mathcal L^{\mathrm{BPO}}_{i,u}
 &=-\phi(x)\beta\delta_{\mathrm{BPO}}(x,y_i)
 \big[g_{i,u,\theta}+\nabla_\theta k_\theta(x,y_{i,<u})\big]\\
 &\approx-\phi(x)\big[R(x,y_i)-V^{\mathrm{sampler}}_{t,k}(x)\big]
 \big[g_{i,u,\theta}+\nabla_\theta k_\theta(x,y_{i,<u})\big].
 \end{aligned}
 \label{eq:bpo25}
\end{equation}
The first line is exact; the second freezes the residual at $\pi_\theta=\pisamp$. Since $\pisamp$ is fixed, the bracket is a centered score:
\begin{equation}
 \begin{aligned}
 \widetilde g_\theta^{\pisamp}(s,a)
 &:=g_\theta(s,a)+\nabla_\theta k_\theta(s)\\
 &=g_\theta(s,a)-\EE_{a'\sim\pisamp(\cdot\mid s)}g_\theta(s,a'),
 \qquad \EE_{a\sim\pisamp(\cdot\mid s)}\widetilde g_\theta^{\pisamp}(s,a)=0.
 \end{aligned}
 \label{eq:scorecentering}
\end{equation}
This is the operation of \citet[Equation~(4)]{scorecentering2026}, with differentiated policy $\pi_\theta$ and sampling distribution $\nu=\pisamp$. BPO's linearized gradient weights the centered score by the reward, while its exact gradient keeps the current trajectory residual. The full-KL correction is additive and involves no importance ratio; the binary-KL approximation, smoothing, and clipping then change the update.

For KLPO, Equation~\eqref{eq:tokengradient} can equivalently use $\widetilde g_\theta^{\pisamp}$, because $\EE_{\pisamp}[\delta_u\mid x,y_{<u}]=0$, and the corresponding response-level correction centers the complete-response score by its prompt-conditional mean. These cancellations hold in population; on a single trajectory the local KL derivatives do not vanish. If the actual behavior is $\nu\ne\pisamp$, score centering uses the mean under $\nu$, which a KL correction based on $\pisamp$ does not supply. Even with the matching mean, centering does not change the sampling distribution into $\pi_\theta$ or recover its importance-weighted policy gradient.

\subsection{The Path Normalizer}
\label{app:path-normalizer}
Under Assumption~\ref{ass:deterministic}, the token target is
\begin{equation}
 \pi^+_{\mathrm{tok}}(a\mid s)
 =\frac{\pisamp(a\mid s)e^{A^{\mathrm{sampler}}_{t,k}(s,a)/\beta}}{Z_s},
 \qquad Z_s=\EE_{a\sim \pisamp(\cdot\mid s)}e^{A^{\mathrm{sampler}}_{t,k}(s,a)/\beta}.
\end{equation}
Using $\sum_u A^{\mathrm{sampler}}_{t,k}(x,y_{<u},y_u)=R(x,y)-V^{\mathrm{sampler}}_{t,k}(x)$, the induced response distribution is
\begin{equation}
 \pi^+_{\mathrm{tok}}(y\mid x)
 =\pisamp(y\mid x)\exp\left(
 \frac{R(x,y)-V^{\mathrm{sampler}}_{t,k}(x)}{\beta}-\sum_u\log Z_{(x,y_{<u})}\right).
\end{equation}
The response Gibbs update uses a single $\log Z(x)$ instead. The two agree only when the accumulated local normalization is constant across the supported responses to each prompt, which the Bellman telescope alone does not imply.

\section{Detailed Comparisons with Prior Objectives}
\label{app:related}

\paragraph{Self-Play Preference Optimization.}
\label{app:sppo}
Write $p=\pisamp$ for one frozen sampler and define its response win rate
\[
 w_p(x,y)=\EE_{Y'\sim p(\cdot\mid x)}
             [\mathbb P(y\succ Y'\mid x)].
\]
With $\eta=1/\beta$, the exact update and regression of SPPO \citep[Equations~(4.2)--(4.4)]{sppo2024} become
\begin{equation}
 \begin{aligned}
 c_w(x)&=\beta\log\EE_{Y\sim p(\cdot\mid x)}e^{w_p(x,Y)/\beta},\\
 \pi_w^+(y\mid x)&=p(y\mid x)e^{(w_p(x,y)-c_w(x))/\beta},\\
 \mathcal L_{\mathrm{SPPO}}^{\mathrm{exact}}
 &=\EE_{x,Y\sim p}
   \left[\log\frac{\pi_\theta(Y\mid x)}{p(Y\mid x)}
         -\frac{w_p(x,Y)-c_w(x)}{\beta}\right]^2.
 \end{aligned}
 \label{eq:sppo-comparison}
\end{equation}
Setting $R=w_p$ therefore gives the fixed-normalizer SKLPO loss up to a positive scale, with matched sampling and outer weights. This is a one-step identity: SPPO updates the opponent in self-play, whereas a verifier reward does not depend on the sampler. The practical SPPO loss \citep[Equation~(4.6)]{sppo2024} replaces $\log Z$ by $\eta/2$, equivalently $c_w$ by $1/2$, and estimates $w_p$ from sampled opponents. For reciprocal preferences $\EE_p w_p=1/2$, so Equation~\eqref{eq:normalizer-kl} gives
\[
 c_w(x)=\frac12+\beta\KL(p(\cdot\mid x)\|\pi_w^+(\cdot\mid x)).
\]
The half-win baseline is therefore not the exact normalizer in general. GPO instead uses preference log-odds scores, and even under Bradley--Terry preferences their expected sigmoid win rate is generally not affine in the scalar reward. SPPO therefore need not share the verifier-reward Gibbs target, and its self-play convergence guarantee does not by itself carry over to asynchronous SKLPO replay.

\paragraph{General Preference Optimization.}
GPO fits log-likelihood ratios specified by an exponential policy update \citep[Equations~(5.1)--(5.3)]{gpo2025}. For a frozen opponent and reference $\pisamp$, let
\begin{equation}
 u_{\mathrm{pref}}(x,y)=\EE_{Y'\sim \pisamp(\cdot\mid x)}[s(y\succ Y'\mid x)],
 \qquad c_{\mathrm{pref}}^*(x)=\beta\log\EE_{\pisamp}[e^{u_{\mathrm{pref}}(x,Y)/\beta}],
 \label{eq:preference}
\end{equation}
where $s$ is an antisymmetric preference score. GPO estimates this expected score with sampled opponents and fits the target $\pisamp\exp[(u_{\mathrm{pref}}-c_{\mathrm{pref}}^*)/\beta]$, which is response-level KLPO with signal $u_{\mathrm{pref}}$. Its practical implementation approximates the log-normalization term by zero; the exact comparison keeps it. For reward-difference scores $s(y\succ y'\mid x)=R(x,y)-R(x,y')$, we have $u_{\mathrm{pref}}=R-\EE_{\pisamp}R$ and $c_{\mathrm{pref}}^*=c^*-\EE_{\pisamp}R$, so $u_{\mathrm{pref}}-c_{\mathrm{pref}}^*=R-c^*$. The Gibbs targets then agree, and the exact-normalizer population losses agree up to a positive scale when their data distributions and outer weights match. This does not identify the practical GPO loss, with sampled opponents and a zero normalizer, with SKLPO. SKLPO uses the verifier directly without a preference opponent, while general pairwise scores can encode cyclic preferences that no scalar reward difference represents.

\Needspace{5\baselineskip}
\paragraph{Bellman Policy Optimization.}
BPO starts from a token-level PMD update with $A^{\mathrm{sampler}}_{t,k}(s,a)$ \citep{bpo2026}. With $\eta=1/\beta$, its exact trajectory residual is
\begin{equation}
 \begin{aligned}
 \delta_{\mathrm{BPO}}(x,y)
 &=\frac{R(x,y)-V^{\mathrm{sampler}}_{t,k}(x)}{\beta}\\
 &\quad-\sum_u\left[\log\frac{\pi_\theta(y_u\mid x,y_{<u})}{\pisamp(y_u\mid x,y_{<u})}
 +\KL(\pisamp(\cdot\mid x,y_{<u})\|\pi_\theta(\cdot\mid x,y_{<u}))\right].
 \end{aligned}
 \label{eq:bpo}
\end{equation}
Let $S_\theta(x,y)=\sum_u\KL(\pisamp(\cdot\mid x,y_{<u})\|\pi_\theta(\cdot\mid x,y_{<u}))$. The KL chain rule and Equation~\eqref{eq:sequence} give
\begin{equation}
 \kappa_\theta(x)=\EE_{Y\sim\pisamp(\cdot\mid x)}[S_\theta(x,Y)],\qquad
 \delta_{\mathrm{KL}}-\delta_{\mathrm{BPO}}=S_\theta-\kappa_\theta.
 \label{eq:bpodifference}
\end{equation}
The profiled response residual thus subtracts the expected sum of token KLs, while BPO subtracts the path sum, and the two agree only when the path sum is constant on the reference support. Separately, Theorem~\ref{thm:token-bpo} shows that the exact BPO trajectory loss equals the token PMD regression loss in population, and Appendix~\ref{sec:scorecentering} connects BPO's Equation~(25) to score centering. BPO's practical algorithm approximates the residual further and uses a smoothed complementary-probability weight, which falls outside the exact equality.

\paragraph{Reward-weighted maximum likelihood.}
Another way to fit Equation~\eqref{eq:gibbs} is to minimize $\KL(\pistar\|\pi_\theta)$, which gives
\begin{equation}
 \mathcal L_{\mathrm{MLE}}(\theta)
 =-\EE_{\pisamp}\left[\frac{e^{R/\beta}}{Z}\log\pi_\theta\right].
 \label{eq:awr}
\end{equation}
This is the reward-weighted regression behind AWR \citep{awr2019}, and the weighted maximum-likelihood step of REPS and MPO \citep{peters2010reps,abdolmaleki2018mpo} has the same form. The exponential weight is fixed within a round, whereas the RPG weight depends on the current policy. Log-ratio regression, weighted maximum likelihood, and direct RPG optimization share a realizable target and take different steps toward it.

\section{Additional Identities}
\label{app:identities}
\subsection{An Unnormalized Reference Measure}
For this algebraic extension only, let $m$ be a positive finite measure with mass $M=\sum_ym(y)$, while the optimized $\pi_\theta$ remains normalized. Set $Z=\sum_ym(y)e^{R(y)/\beta}$. Then
\begin{equation}
 \pistar(y)=\frac{m(y)e^{R(y)/\beta}}{Z},\qquad
 J(\pi_\theta)=\beta\log Z+\beta(1-M)-\beta\KL(\pi_\theta\|\pistar).
\end{equation}
Scaling $m$ only adds a constant to the objective and leaves its normalized optimizer unchanged. Without the normalization constraint, the stationary measure is $\widetilde\pi(y)=m(y)e^{R(y)/\beta}$, whose mass need not be one, so it cannot directly serve as the trainable policy.

\subsection{Coverage and Uniqueness}
Fix a prompt and let $\nu$ share the support of $\pisamp$. The nonnegative exact-normalizer loss vanishes at $\pistar$, and any zero-loss policy satisfies $\beta\log(\pi_\theta/\pisamp)-R+c^*=0$ on that support, which gives uniqueness. Profiling gives $R-\beta\log(\pi_\theta/\pisamp)=a$ instead, and normalization fixes $a=c^*$.

Full coverage is sufficient but not always necessary. With a fixed exact normalizer, the observed probabilities are fixed, and normalization determines a single omitted probability but only the total mass of two or more omitted responses. With a profiled intercept only the observed probability ratios are fixed, so even one omitted response can make the solution nonunique. Finite data can omit responses even when the population support is full.

\subsection{Profiling and Relative-Reward Regression}
For $h_\theta=R-\beta\log(\pi_\theta/\pisamp)$,
\begin{equation}
 \EE_\nu[(c-h_\theta)^2]
 =\Var_\nu(h_\theta)+(c-\EE_\nu h_\theta)^2,
\end{equation}
which gives the optimal intercept, and for independent $Y,Y'\sim\nu$,
\begin{equation}
 \Var_\nu(h_\theta)=\frac12\EE[(h_\theta(Y)-h_\theta(Y'))^2].
\end{equation}
The pairwise form uses differences of scalar rewards; it does not represent general pairwise preferences.

\Needspace{20\baselineskip}
\section{Sequence Length Weighting and Gradient Scale}
\label{app:length-gradient}
Write $g_u=\nabla_\theta\log\pi_\theta(y_u\mid x,y_{<u})$. Under length weighting, each token receives the coefficient $d_\theta/T$. At matched trainer and sampler, $d_\theta=c^*-R$ has no direct length factor. If $|\ell_{\theta,u}|\leq M$ and $\|g_u\|\leq G$, the per-response weighted gradient has norm at most $(\beta TM+|R-c^*|)G$, and the unweighted bound has one more factor of $T$. Averaging the scores does not bound the accumulated residual: a nonvanishing mean token log-ratio can still make the weighted loss and this bound grow linearly in $T$.

The return-gradient contribution is $w_\theta T$ times the weighted regression contribution, where $w_\theta=\pi_\theta/\pisamp$. When lengths vary, their population gradients generally differ even at matched policies, since $\EE_{\pi_\theta}[T^{-1}\score(Y)]=\nabla_\theta\EE_{\pi_\theta}[T^{-1}]$ need not vanish. Preserving the target therefore does not imply the first-step gradient equality of the unweighted formulation.

\section{Implementation Details}
\label{app:implementation}
This appendix collects what is needed to run the updates of Sections~\ref{sec:granularity}--\ref{sec:agentic} and Appendix~\ref{sec:sequence}. Algorithm~\ref{alg:token-klpo} gives the token-level update and Algorithm~\ref{alg:klpo} the response-level one. Code for both gradient routes and all four KL corrections is available from the project page.

\begin{algorithm}[!t]
\caption{Token-Level KLPO from a Replay Queue}
\label{alg:token-klpo}
\small
\begin{algorithmic}[1]
\Require Completed responses with sampled-token sampler log-probabilities and collection versions
\Require Route: token or sequence regression; KL correction: MC-KL, TopK-KL, binary, or full
\Require MC-KL: $M\geq1$ for token, $M\geq2$ for sequence; TopK-KL: head size $K$
\For{optimizer steps $t=0,1,\ldots$}
 \State Read $B$ responses $(x_i,y_i,T_i,R_i,v_i)$ and the sampler records the KL correction needs.
 \State Score policy tokens with $\pitrainer_{\theta_t}$; compute $\ell_{i,u}$ against version $v_i$.
 \State For MC-KL, draw $M$ fresh auxiliary tokens per prefix from version $v_i$; score their log-ratios.
 \For{responses $i=1,\ldots,B$}
  \If{token regression}
   \State Set $h_{i,u}=R_i-\beta\ell_{i,u}$; form $L_i$ by Eq.~\eqref{eq:mc-token-backward}, \eqref{eq:token-sc-topk}, \eqref{eq:sc-binary-surrogate}, or \eqref{eq:token-sc-surrogate}.
  \Else
   \State Form $L_i$ by Eq.~\eqref{eq:mc-sequence-backward}, \eqref{eq:bpo-topk-surrogate}, \eqref{eq:bpo-binary-surrogate}, or \eqref{eq:bpo-backward-surrogate}.
  \EndIf
 \EndFor
 \State Compute $G_t=\nabla_\theta^{\mathrm{AD}}[B^{-1}\sum_iL_i]$ with the specified stop-gradients.
 \State Apply the optimizer update with $G_t$ to obtain $\theta_{t+1}$.
 \State Publish checkpoints; keep the original rewards, sampler records, and collection versions.
\EndFor
\end{algorithmic}
\end{algorithm}

\subsection{Records, Reductions, and Reuse}
Each response stores its prompt, policy tokens, reward, collection version, and the sampler log-probability of every generated token. The KL correction adds the following: $M$ auxiliary token IDs per position with their sampler log-probabilities for MC-KL, stored as arrays of shape $[B,T,M]$; the head IDs and their sampler probabilities for TopK-KL; the sampler's next-token distributions for full KL; and nothing for binary KL. The rollout action's log-probability is stored separately in every case, including when the action lies outside a top-$K$ head. Correctly logged sampler probabilities make a separate reference-model scoring pass unnecessary. They should include the sampling transforms used at collection, such as temperature and truncation, because inference and training can produce different probabilities at identical parameters.

At every update, including reuse of the same batch, everything that depends on the trainer is recomputed: the log-ratios, the coefficient $h_{i,u}$ or residual $D_{i,\theta}$, and the KL correction. Sampler records, rewards, and collection versions stay fixed. The loss sums over policy tokens and averages over responses; prompt, padding, and tool-output tokens are excluded and termination is counted consistently. An outer weight $1/T_i$ breaks the exact identities.

For MC-KL, repeated token IDs are valid. Token regression subtracts the mean auxiliary log-probability. Sequence regression uses the leave-one-out coefficients of Equation~\eqref{eq:mc-sequence-backward} and reports the U-statistic of Equation~\eqref{eq:mc-sequence-ustat}, which can be negative and is not clipped. A conditionally unbiased gradient at each update requires fresh draws from the historical sampler (Section~\ref{sec:mc-kl}); the labs-molt integration consumes each set of draws once before updating the trainer.

For sequence regression with TopK-KL, $k^K$ and $D^K_\theta$ are recomputed at every reuse, and Equation~\eqref{eq:bpo-topk-surrogate} is differentiated with only the residual coefficient detached, keeping gradients through both the head and the tail terms. With binary KL, $k^{\mathrm{bin}}$, $\widehat D_\theta$, and $\omega$ are recomputed, and both $\widehat D_\theta$ and $\omega$ are detached; the token route detaches $h_{\theta,u}$ and $\omega$. No group normalization or residual linearization is applied. For exact sequence regression, the full local KL is
\begin{equation}
 k_{i,\theta}(s)=\sum_{a\in\mathcal V}\pisampler_{\theta_{v_i}}(a\mid s)
 \left[\log\pisampler_{\theta_{v_i}}(a\mid s)-\log\pitrainer_\theta(a\mid s)\right],
 \label{eq:full-vocabulary-kl}
\end{equation}
and its derivative is kept in Equation~\eqref{eq:bpo-backward-surrogate}. Token regression needs only the score mean, not the KL value. Using $-\ell_{i,u}$ from the rollout action as the entire KL estimate cancels the log-ratio and removes the learning signal; it is not an MC-KL draw, a TopK-KL head, or a binary partition.

\subsection{Numerical Safeguards}
\label{app:topk-stability}
For TopK-KL, log-probabilities and tail masses are computed in float32 or higher precision. With $\epsilon=10^{-6}$, write $\bar q_{\mathrm{tail}}=\max(q_{\mathrm{tail}},\epsilon)$ and $\bar p_{\mathrm{tail}}=\max(p_{\mathrm{tail}},\epsilon)$. Sequence regression uses the stabilized scalar
\begin{equation}
 k^{K,\epsilon}_\theta
 =\sum_{v\in H}q_v\log\frac{q_v}{p_v}
 +\bar q_{\mathrm{tail}}\log\frac{\bar q_{\mathrm{tail}}}{\bar p_{\mathrm{tail}}}
 \label{eq:topk-stabilized}
\end{equation}
in both the residual and the differentiated bracket. Below the floor, the trainer-tail derivative is zero away from the clamp boundary, and differentiating the literal stabilized scalar keeps its own squared-residual identity, although the stabilized expression need not be a KL between normalized distributions. Token regression follows the convention of \citet{scorecentering2026} and uses $\bar q_{\mathrm{tail}}/\bar p_{\mathrm{tail}}$ in the detached head coefficients. When either floor is active, this correction need not equal $-\nabla k^{K,\epsilon}_\theta$; the correspondence of Equation~\eqref{eq:topk-kl-derivative} assumes inactive floors. A full-vocabulary record uses the exact sum, with no tail bucket or ratio.

For binary KL, complementary log-probabilities are computed with stable \texttt{log1mexp} arithmetic in float32 or higher precision, rather than by subtracting a rounded probability from one. Any additional smoothing or clipping defines a further approximation.

\subsection{The SKLPO Update}
The SKLPO trainer consumes completed responses without a collection barrier. In the supplied-normalizer branch of Algorithm~\ref{alg:klpo}, $\widehat c_{v_i}$ estimates $c^*_{t,k_i(t)}$ for the response's collection policy and stays fixed across trainer updates; group means and learned intercepts instead track the current trainer. Stored denominators and lengths stay fixed in every branch. Both the log-ratio sum and $T_i$ count only generated policy tokens with a consistent termination convention.

For unfiltered responses to one prompt from one sampler, Equation~\eqref{eq:mc} estimates the normalizer without length weights. Selection or pooling across versions needs a separate justification, because a pooled estimate generally fails to recover each component's normalizer. Group centering recomputes weighted means within prompt and version groups after every trainer update; each group supplies an intercept, and the loss averages over all $B$ responses rather than over groups. With one response per prompt, use an independently supplied normalizer or the shared learned intercept of Appendix~\ref{app:learned-normalizers}, never a singleton group mean. Removing importance weights does not restore support lost through truncation or admission filters.

%% file: contents/sequence.tex
\section{Response-Level KLPO (SKLPO)}
\label{sec:sequence}
This appendix develops the response-level instance of KLPO, which we call SKLPO (Sequence KL-Regularized Policy Optimization). SKLPO regresses the optimality condition of the response Gibbs target with one normalizer per prompt and sampler version, and the response residual multiplies every token score. Given normalization information it can train from single responses, but unlike token-level KLPO it cannot do without that information (Section~\ref{sec:klpo-sklpo}). Appendix~\ref{sec:normalizer} covers normalization and Appendix~\ref{sec:qstar} the soft-value interpretation. We fix a collection version and abbreviate $c^*=c^*_{t,k}$ and $Z=Z_{t,k}$.

\subsection{The Response Gibbs Target}
For $\beta>0$, the response-level objective at step $t$ is
\begin{equation}
 \begin{aligned}
 J(\pi_\theta)&=\EE_{y\sim \pi_\theta}[R(y)]-\beta\UKL(\pi_\theta\|\pisamp),\\
 \UKL(\pi_\theta\|\pisamp)&=\sum_y\left[\pi_\theta(y)\log\frac{\pi_\theta(y)}{\pisamp(y)}+\pisamp(y)-\pi_\theta(y)\right].
 \end{aligned}
 \label{eq:objective}
\end{equation}
For normalized policies the mass correction vanishes and the unnormalized KL equals the ordinary KL; the URKL formulation keeps the normalization constraint on the policy.

\begin{theorem}[Response Gibbs Improvement]
\label{prop:gibbs}
Under Assumption~\ref{ass:finite}, define
\[
 Z(x)=\EE_{y\sim\pisamp(\cdot\mid x)}e^{R(x,y)/\beta},\qquad c^*(x)=\beta\log Z(x).
\]
The unique maximizer of Equation~\eqref{eq:objective} is
\begin{equation}
 \pistar(y\mid x)=\frac{\pisamp(y\mid x)e^{R(x,y)/\beta}}{Z(x)},
 \label{eq:gibbs}
\end{equation}
and for every feasible $\pi_\theta$,
\begin{equation}
 J(\pi_\theta)=c^*-\beta\KL(\pi_\theta\|\pistar).
 \label{eq:gap}
\end{equation}
Equivalently, $\pistar$ is the entropic mirror-ascent update
\begin{equation}
 \pistar=\argmax_{\pi\in\Delta(\mathcal Y_x)}
 \left\{\langle\pi,R\rangle-\eta^{-1}\KL(\pi\|\pisamp)\right\},
 \qquad \eta=\beta^{-1}.
 \label{eq:mirror}
\end{equation}
Subtracting a prompt-dependent baseline $b(x)$ from $R$ changes $c^*$ to $c^*-b$ and leaves the target unchanged.
\end{theorem}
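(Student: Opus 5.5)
The plan is to reduce everything to Lemma~\ref{lem:common-target} with context $h=x$, outcome $z=y$, sampler $\mu=\pisamp(\cdot\mid x)$ and signal $f=R$. Assumption~\ref{ass:finite} supplies what that lemma needs: the response set $\mathcal Y_x$ is finite because the vocabulary is finite and lengths are bounded, $\pisamp$ is positive on the common feasible support, and $R$ is bounded. Hence $Z(x)\in(0,\infty)$ and $\pistar$ is a positive, normalized distribution on that support.

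\textbf{Reducing $\UKL$ to $\KL$.} For normalized $\pi_\theta$ and $\pisamp$, the mass correction $\sum_y[\pisamp(y)-\pi_\theta(y)]$ vanishes, so $\UKL(\pi_\theta\|\pisamp)=\KL(\pi_\theta\|\pisamp)$. Equation~\eqref{eq:objective} is then exactly Equation~\eqref{eq:framework-objective} with these choices.

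\textbf{Gibbs form and gap identity.} Lemma~\ref{lem:common-target} gives the unique maximizer $\pisamp e^{(R-c_f)/\beta}$ with $c_f=\beta\log\EE_{\pisamp}e^{R/\beta}=c^*$, which is Equation~\eqref{eq:gibbs}. The lemma's identity, objective $=c_f-\beta\KL(\pi\|\pi_f^\star)$, is Equation~\eqref{eq:gap}. I would also verify this directly as a sanity check: substituting $\log(\pi_\theta/\pisamp)=\log(\pi_\theta/\pistar)+R/\beta-c^*/\beta$ into $\EE_{\pi_\theta}[R-\beta\log(\pi_\theta/\pisamp)]$ leaves $c^*-\beta\KL(\pi_\theta\|\pistar)$. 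Uniqueness follows because $\KL(\pi_\theta\|\pistar)=0$ only when $\pi_\theta=\pistar$, given positivity on the support.

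\textbf{Mirror-ascent form.} Equation~\eqref{eq:mirror} is the same maximization written with $\langle\pi,R\rangle=\EE_\pi R$ and $\eta^{-1}=\beta$ over the simplex $\Delta(\mathcal Y_x)$, so no separate argument is needed.

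\textbf{Baseline invariance.} Replacing $R$ by $R-b(x)$ multiplies $e^{R/\beta}$ by the constant $e^{-b(x)/\beta}$. This multiplies $Z$ by the same factor, so $c^*$ becomes $c^*-b$, and the constant cancels in the ratio defining $\pistar$.

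\textbf{Main obstacle.} The only delicate point is support bookkeeping. Feasible $\pi_\theta$ must be absolutely continuous with respect to $\pisamp$ for the KL to be finite, and Assumption~\ref{ass:finite} gives a common positive support. With that settled, every step is a direct invocation of the earlier lemma.
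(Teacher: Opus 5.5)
Your proposal is correct and matches the paper's proof. Like the paper, you drop the mass correction in $\UKL$ for normalized policies, apply Lemma~\ref{lem:common-target} with $\mu=\pisamp$ and $f=R$ to get Equations~\eqref{eq:gibbs}--\eqref{eq:gap}, read off Equation~\eqref{eq:mirror} from $\eta^{-1}=\beta$, and observe that replacing $R$ by $R-b(x)$ rescales $Z$ by $e^{-b(x)/\beta}$. The paper also notes, as an equivalent route, a Lagrange stationarity argument in which $R-\beta\log(\pi/\pisamp)=\lambda$ and normalization fixes $\lambda=c^*$; your direct check of the gap identity makes that unnecessary.
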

The proof is in Appendix~\ref{app:sequence-proof}. The target is defined relative to the collection sampler, and a finite trainer step need not reach it.

\subsection{Regressing the Response Condition}
Taking logarithms of the Gibbs policy gives the condition that SKLPO fits. The response log-ratio is a sum of token log-ratios,
\begin{equation}
 \begin{aligned}
 \ell_{\theta,u}(x,y)
 &=\log\pi_\theta(y_u\mid x,y_{<u})-\log\pisamp(y_u\mid x,y_{<u}),\\
 \ell_\theta(x,y)
 &=\log\frac{\pi_\theta(y\mid x)}{\pisamp(y\mid x)}
 =\sum_{u=1}^{T}\ell_{\theta,u}(x,y),
 \end{aligned}
 \label{eq:sequence}
\end{equation}
and the response residual and unweighted loss are
\begin{equation}
 d_\theta(x,y)=\beta\ell_\theta(x,y)-R(x,y)+c^*(x),
 \label{eq:residual}
\end{equation}
\begin{equation}
 \begin{aligned}
 \mathcal L_{\mathrm{seq}}^{\mathrm{fix}}(\theta)
 &=\frac{1}{2\beta}\EE_{x\sim\mathcal D,\,Y\sim\pisamp(\cdot\mid x)}[d_\theta(x,Y)^2]\\
 &=\frac{1}{2\beta}\EE_{x,Y}\left[
 \left(\beta\sum_{u=1}^{|Y|}\ell_{\theta,u}(x,Y)-R(x,Y)+c^*(x)\right)^2\right].
 \end{aligned}
 \label{eq:klpo}
\end{equation}
The token log-ratios are summed \emph{before} squaring, and the reward and normalizer each enter once, so one residual multiplies every token score. We write $\kappa_\theta(x)=\KL(\pisamp(\cdot\mid x)\|\pi_\theta(\cdot\mid x))$ for the complete-response KL. Sequence regression in Section~\ref{sec:tokenbpo} also shares one residual across a response, but its correction is the path sum $\beta\sum_u k_\theta(s_u)$ rather than this prompt normalizer, and it targets local PMD rather than the response Gibbs policy (Section~\ref{sec:sr-vs-sklpo}).

\begin{theorem}[SKLPO Regression Target]
\label{thm:sequence-regression}
Under Assumption~\ref{ass:finite}, $d_\theta=\beta\log(\pi_\theta/\pistar)$. If $\pistar$ is realizable, Equation~\eqref{eq:klpo} vanishes exactly at $\pi_\theta=\pistar$. In a restricted model class, the regression minimizes a squared log-ratio under $\pisamp$ while the regularized return minimizes $\KL(\pi_\theta\|\pistar)$, and the two minimizers can differ.
\end{theorem}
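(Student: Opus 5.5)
The plan is to obtain all three claims of Theorem~\ref{thm:sequence-regression} as a specialization of Theorem~\ref{thm:common-regression}. The context is $h=x$, the outcome is $z=y$, the sampler is $\mu=\pisamp(\cdot\mid x)$, the signal is $f=R$, the replay law is $\nu=\mathcal D\times\pisamp$, and the weight is $a=1$.

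The first step is the identity $d_\theta=\beta\log(\pi_\theta/\pistar)$. By Theorem~\ref{prop:gibbs}, $\pistar$ is positive and normalized under Assumption~\ref{ass:finite}, and
\[
\beta\log\frac{\pistar(y\mid x)}{\pisamp(y\mid x)}=R(x,y)-c^*(x).
\]
Subtracting this from $\beta\ell_\theta(x,y)=\beta\log(\pi_\theta/\pisamp)$ gives
\[
\beta\ell_\theta-R+c^*=\beta\log\frac{\pi_\theta}{\pistar}.
\]
Here we use the factorization in Equation~\eqref{eq:sequence}: by the chain rule for both autoregressive policies, the token log-ratios sum to the response log-ratio.

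The second step is the zero-loss characterization, which is the last clause of Theorem~\ref{thm:common-regression}. Assumptions~\ref{ass:finite} and~\ref{ass:matched} make $\pisamp$ positive on the common feasible support, so $\nu$ charges every feasible pair, at least for prompts in the support of $\mathcal D$. If $\pistar$ is realizable, then $d_\theta\equiv0$ at $\pi_\theta=\pistar$, so the loss is zero there. Conversely, zero loss forces $d_\theta=0$ on every charged pair, which means $\pi_\theta=\pistar$ on the support. I would state the conclusion per prompt in the support of $\mathcal D$.

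The third step contrasts the two criteria. Equation~\eqref{eq:gap} shows that maximizing $J$ is equivalent to minimizing $\KL(\pi_\theta\|\pistar)$. By the first step, the loss equals $\frac{\beta}{2}\EE_{\pisamp}[\log^2(\pi_\theta/\pistar)]$, a different functional. Both functionals vanish exactly at $\pistar$, but their projections onto a restricted class need not coincide.

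To justify ``can differ'' I would build a small explicit example, and I expect this to be the only nontrivial step. Take one prompt, three responses, $\pisamp$ uniform, and rewards that make $\pistar$ nonuniform. Use a one-parameter family $\pi_\theta$ that excludes $\pistar$, for example the family that ties the probabilities of two responses together. The reverse-KL minimizer is a closed-form Gibbs-type projection. The squared-log-ratio minimizer solves a different first-order condition. Checking that the two stationary points differ is routine arithmetic, and I would keep the example small enough to solve by hand.
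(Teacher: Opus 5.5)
Your argument follows the paper's proof. The identity is the logarithm of Equation~\eqref{eq:gibbs}, the zero-loss claim is the last clause of Theorem~\ref{thm:common-regression}, and the contrast is Equation~\eqref{eq:gap} set against Equation~\eqref{eq:klpo} rewritten via $d_\theta=\beta\log(\pi_\theta/\pistar)$. The paper only asserts that sharing a zero on the simplex does not force equal projections, so your explicit example is a genuine addition.

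However, the conditions you give for the example (a nonuniform $\pi^\star$ excluded from the family) are not sufficient. Take $\pisamp$ uniform and $\pi_\theta=(\theta,\theta,1-2\theta)$. The reverse-KL projection satisfies $\theta/\sqrt{\pi^\star_1\pi^\star_2}=(1-2\theta)/\pi^\star_3=\lambda$. Since $\pi^\star_1\ne\pi^\star_2$, AM--GM gives $\lambda>1$. At that point the derivative of $\sum_i\log^2\bigl(\pi_\theta(y_i)/\pi^\star_i\bigr)$ equals $4\log\lambda\,\bigl(1/\theta-1/(1-2\theta)\bigr)$. This vanishes exactly when $\theta=1/3$, that is, when $R_3=(R_1+R_2)/2$. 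For instance, $R=(0,1,1/2)$ places both first-order conditions at the uniform policy, so the two minimizers need not differ.

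Pick rewards off that line, such as $R=(1,0,0)$. To close the argument, note that the squared-log criterion diverges as $\theta\to0$ or $\theta\to1/2$. Its minimizer is therefore an interior stationary point. The reverse-KL projection is not stationary for it, so it cannot be that minimizer.
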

The proof is in Appendix~\ref{app:sequence-proof}.

\begin{proposition}[Regression and Return Gradients]
\label{prop:gradients}
With responses drawn from $\pisamp$, and the sampler and normalizer held fixed during differentiation,
\begin{align}
 \nabla_\theta\mathcal L_{\mathrm{seq}}^{\mathrm{fix}}
 &=\EE_{\pisamp}[d_\theta\,\score(y)],
 \label{eq:gradklpo}\\
 -\nabla_\theta J(\pi_\theta)
 &=\EE_{\pisamp}\left[\frac{\pi_\theta(y)}{\pisamp(y)}d_\theta\,\score(y)\right].
 \label{eq:gradrpg}
\end{align}
The two population gradients agree at $\pi_\theta=\pisamp$ and generally differ elsewhere.
\end{proposition}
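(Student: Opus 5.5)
The plan is to prove the two displays separately and then compare them. Both reduce to results already stated, with the response as the outcome and the prompt as the context. Throughout, the sampler $\pisamp$ and the normalizer $c^*(x)$ are held fixed, so neither contributes to the derivative.

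For Equation~\eqref{eq:gradklpo}, the approach is direct differentiation of Equation~\eqref{eq:klpo}. The expectation is over $\pisamp$, whose law does not depend on $\theta$, so the gradient passes inside the expectation. Since $c^*$ and $\log\pisamp$ are constants, $\nabla_\theta d_\theta=\beta\,\score(y)$, and the chain rule on $\frac1{2\beta}d_\theta^2$ gives $d_\theta\,\score(y)$. Equivalently, this is Theorem~\ref{thm:common-regression} with context $x$, outcome $y$, $\mu=\pisamp$, $f=R$, $\nu=\pisamp$ and $a=1$. Theorem~\ref{thm:sequence-regression} gives $d_\theta=r_\theta$ in that notation.

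For Equation~\eqref{eq:gradrpg}, I would start from the score-function identity on normalized $\pi_\theta$. Here $J=\EE_{\pi_\theta}[R-\beta\ell_\theta]$, because the $\UKL$ mass correction vanishes for normalized policies. Differentiating gives $\nabla J=\EE_{\pi_\theta}[(R-\beta\ell_\theta)\score]-\beta\EE_{\pi_\theta}[\score]$. The last term is zero because $\sum_y\nabla\pi_\theta(y)=0$. Using the same zero-mean fact, I can add the constant $-c^*$ to the coefficient, which turns it into $-d_\theta$. Finally I change measure from $\pi_\theta$ to $\pisamp$, which introduces $\pi_\theta/\pisamp$. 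Positivity on a common support (Assumption~\ref{ass:finite}) makes this legitimate. This is exactly the argument for Equation~\eqref{eq:framework-return-gradient} specialized to $f=R$.

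For the comparison, at $\pi_\theta=\pisamp$ the ratio equals $1$ on the support, so the two integrands coincide pointwise and the gradients agree. To show they generally differ elsewhere, a small example suffices: two responses, a one-parameter logit family, and nonconstant $d_\theta$. In that example the weights $w_\theta d_\theta$ and $d_\theta$ tilt the score expectation differently. The only potential subtlety is this last claim, since it asserts non-equality rather than an identity. I would handle it with that explicit two-point example, or by appealing to Appendix~\ref{app:rpg}, where the extra factor $e^{\ell_\theta}$ is made explicit. Everything else is routine.
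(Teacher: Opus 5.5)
Your proposal is correct and follows the paper's proof. Equation~\eqref{eq:gradklpo} comes from differentiating Equation~\eqref{eq:klpo} directly, and Equation~\eqref{eq:gradrpg} from the zero-mean score identity plus inserting the constant $c^*$. The only difference is the order of steps. The paper changes measure first: it differentiates the importance-weighted surrogate $\mathcal L_{\mathrm{RPG}}=\EE_{\pisamp}[-w_\theta R+\beta(w_\theta\log w_\theta-w_\theta)]=-J(\pi_\theta)-\beta$, with $w_\theta=\pi_\theta/\pisamp$, and uses $\EE_{\pisamp}[w_\theta\score]=0$, whereas you differentiate under $\pi_\theta$ and change measure at the end. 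Your two-response logit example for the ``generally differ'' clause goes beyond the paper, which leaves that claim informal.
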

The proof is in Appendix~\ref{app:rpg}. This is Equation~\eqref{eq:framework-return-gradient} at the response level: the return gradient carries the multiplier $w_\theta=\pi_\theta/\pisamp$. Appendix~\ref{sec:token-is} makes the same comparison at a single history.

\subsection{Outer Length Weighting}
\label{sec:length-weighting}
Let $T(x,y)\geq1$ count generated policy tokens, including termination under a fixed stopping convention. A length weight changes how much each response contributes, while the summed log-ratio stays inside the residual.
\begin{proposition}[Positive Outer Weights]
\label{prop:positive-weights}
Under the assumptions of Theorem~\ref{thm:sequence-regression}, any fixed positive response weight $a(x,y)$ gives $\mathcal L_a=(2\beta)^{-1}\EE_{\pisamp}[a\,d_\theta^2]$ the same unique zero-loss policy $\pistar$ and normalizer $c^*$. For $a=1/T$,
\begin{equation}
 \begin{aligned}
 \mathcal L_{\mathrm{seq}}^{\mathrm{len}}(\theta)
 &=\frac1{2\beta}\EE_{x,Y\sim\pisamp}
       \left[\frac{d_\theta(x,Y)^2}{T(x,Y)}\right],\\
 \nabla_\theta\mathcal L_{\mathrm{seq}}^{\mathrm{len}}
 &=\EE_{x,Y\sim\pisamp}\left[
 d_\theta\,\frac1T\sum_{u=1}^T\nabla_\theta\log\pi_\theta(y_u\mid x,y_{<u})
 \right],
 \end{aligned}
 \label{eq:length-weighted}
\end{equation}
with the sampler law, recorded lengths, and normalizers held fixed during differentiation.
\end{proposition}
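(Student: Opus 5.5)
The first claim, that every fixed positive weight has the same zero-loss policy, is a zero-residual argument. I would apply Theorem~\ref{thm:common-regression} with replay law $\nu=\pisamp$ and weight $a$, using the identity $d_\theta=\beta\log(\pi_\theta/\pistar)$ from Theorem~\ref{thm:sequence-regression}.

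At $\pistar$ every residual is zero, so $\mathcal L_a=0$. Conversely, suppose $\mathcal L_a=0$. Under Assumption~\ref{ass:finite} the sampler is positive on the feasible support, and $a>0$. Every response therefore carries positive weighted mass, so $d_\theta(x,y)=0$ for every feasible $y$. This gives $\pi_\theta=\pistar$ on the support. The weight never enters the target equation, so the normalizer appearing in the residual is still $c^*$, and the zero-loss policy is the same for every $a$.

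For the normalizer claim, I would also note a stronger fact that holds if the intercept were free. A zero residual gives $R-\beta\ell_\theta=c$ on the support. Normalizing $\pi_\theta$ then forces $c=c^*$. So the weight cannot shift the normalizer either.

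The gradient formula for $a=1/T$ is a direct differentiation. Because $T$, the sampler law, and $c^*$ are held fixed, $\nabla_\theta d_\theta=\beta\nabla_\theta\ell_\theta=\beta\score(Y)$. Hence
\[
\nabla_\theta\mathcal L^{\mathrm{len}}_{\mathrm{seq}}=\EE_{\pisamp}\big[d_\theta\,\score(Y)/T\big].
\]
I would then expand the response score as the sum of token scores, $\score(Y)=\sum_u\nabla_\theta\log\pi_\theta(y_u\mid x,y_{<u})$, which gives Equation~\eqref{eq:length-weighted}. Finiteness in Assumption~\ref{ass:finite} justifies exchanging the gradient and the expectation.

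The only delicate point is that $T$ must be a function of the response alone, fixed under differentiation and strictly positive. This is what lets the zero-residual argument go through unchanged. Everything else is routine.
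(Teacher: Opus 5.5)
Your proposal is correct and follows essentially the same route as the paper. The paper also shows that every residual vanishes at $\pistar$, that positivity of the sampler and of $a$ forces every residual to vanish at zero weighted loss (so neither the target nor $c^*$ moves), and it differentiates with $T$, the sampler, and $c^*$ held fixed before expanding the response score into token scores. Your additional free-intercept argument, that normalization forces $c=c^*$, is a harmless strengthening and matches what the paper says in its coverage-and-uniqueness remarks.
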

The proof is in Appendix~\ref{app:sequence-proof}. Each response residual now multiplies the mean token score of that response. Positive weights leave the target alone but change how a restricted model trades off errors across responses: at equal absolute residual, a response of length $100$ contributes ten times as much loss as one of length $1000$. The batch reduction is $B^{-1}\sum_i d_i^2/T_i$; dividing $\sum_i d_i^2$ by the total token count is a different reduction. The accumulated log-ratio can still grow with length, and Appendix~\ref{app:length-gradient} gives the gradient bound and explains why the weighted update differs from the return gradient even at matched policies.

\subsection{Replay across Sampler Versions}
\label{sec:sequence-batch}
In replay, each response also carries its fixed-target normalizer, or enough information to fit one.

\Needspace{10\baselineskip}
\begin{proposition}[Version-Conditioned Sequence Update]
\label{prop:sequence-batch}
Holding these quantities fixed, the empirical length-weighted loss and its gradient are
\begin{equation}
 \begin{aligned}
 d_{i,\theta}
 &=\beta\log\frac{\pitrainer_\theta(y_i\mid x_i)}{\pisampler_{\theta_{t-k_i(t)}}(y_i\mid x_i)}-R_i+c^*_{t,k_i(t)}(x_i),\\
 \widehat{\mathcal L}^{\mathrm{len}}_{\mathrm{seq},t}(\theta)
 &=\frac1{2\beta B}\sum_{i=1}^{B}\frac{d_{i,\theta}^2}{T_i},\\
 \nabla_\theta\widehat{\mathcal L}^{\mathrm{len}}_{\mathrm{seq},t}(\theta)
 &=\frac1B\sum_{i=1}^{B}\frac{d_{i,\theta}}{T_i}\,\nabla_\theta\log\pitrainer_\theta(y_i\mid x_i).
 \end{aligned}
 \label{eq:mixture-regression}
\end{equation}
\end{proposition}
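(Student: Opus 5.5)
The plan is direct differentiation of the empirical loss, following the proof of Theorem~\ref{thm:common-regression} response by response. The one point that needs care is deciding what $\theta$ actually touches. For each $i$, the quantities $R_i$, $T_i$, the fixed normalizer $c^*_{t,k_i(t)}(x_i)$, and the stored sampler log-probability $\log\pisampler_{\theta_{t-k_i(t)}}(y_i\mid x_i)$ are all recorded at collection time and held fixed. So is the replay batch $\{(x_i,y_i,T_i,R_i,v_i)\}$ itself. Hence the only $\theta$-dependent part of $d_{i,\theta}$ is $\beta\log\pitrainer_\theta(y_i\mid x_i)$, and
\[
\nabla_\theta d_{i,\theta}=\beta\nabla_\theta\log\pitrainer_\theta(y_i\mid x_i)
=\beta\sum_{u=1}^{T_i}\nabla_\theta\log\pitrainer_\theta(y_{i,u}\mid x_i,y_{i,<u}).
\]
The last equality uses the autoregressive factorization from Section~\ref{sec:setup}.

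Next, I apply the chain rule to each summand $d_{i,\theta}^2/(2\beta T_i)$. Because $T_i$ is a fixed positive constant, this gives $(d_{i,\theta}/(\beta T_i))\nabla_\theta d_{i,\theta}$. Substituting the previous display cancels the $\beta$ and leaves $(d_{i,\theta}/T_i)\nabla_\theta\log\pitrainer_\theta(y_i\mid x_i)$. Averaging over $i$ with the factor $1/B$ yields Equation~\eqref{eq:mixture-regression}. The loss line of the display is just the definition: it is the empirical counterpart of Proposition~\ref{prop:positive-weights} with $a=1/T$, where each response is scored against its own collection sampler.

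There is no genuine obstacle; the identity is purely algebraic and holds samplewise. The only thing to flag is exactly what "holding these quantities fixed" has to include, since any omission adds a product-rule term. If the normalizer were a trainer-dependent group mean or a learned intercept, it would contribute a term unless it is detached, and Appendix~\ref{app:normalizer} treats those cases separately. Likewise, if the lengths or the sampler denominators were recomputed with the current $\theta$ rather than taken from the logged records, extra gradient terms would appear. I would also note that no expectation is involved here. Population claims, such as recovering each component's Gibbs target or the version-mixture gradient, follow separately from Theorem~\ref{thm:sequence-regression} and Proposition~\ref{prop:positive-weights} applied version by version.
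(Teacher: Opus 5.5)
Your proposal is correct and matches the paper's proof. With the collection sampler log-probabilities, lengths, rewards, and fixed normalizers held constant, $\nabla_\theta d_{i,\theta}=\beta\nabla_\theta\log\pitrainer_\theta(y_i\mid x_i)$, and the chain rule applied to $d_{i,\theta}^2/(2\beta T_i)$, followed by averaging over $i$, gives the stated batch gradient.
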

The proof is in Appendix~\ref{app:batch-proof}. At $\theta=\theta_t$, collection frequencies and response lengths determine each sampler's contribution, with no extra version weight or importance multiplier. The gradient is generally not the on-policy return gradient.

\begin{proposition}[Compatibility across Sampler Versions]
\label{prop:offpolicy-target}
For a fixed version, full response support, exact normalization, and realizability give the unique zero-loss target $\pistar$, with or without positive outer weights. If several versions occur with positive frequency and have full common support, their joint population loss is zero exactly when all their Gibbs targets coincide and the trainer equals that common target. For a common reward and $\beta$, two full-support samplers have the same Gibbs target only if they are equal.
\end{proposition}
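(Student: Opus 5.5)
The argument has three parts, in the order of the statement: the single-version claim, the joint-loss characterization, and the distinctness of Gibbs targets for different samplers.

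\textbf{Single version.} Theorem~\ref{thm:sequence-regression} gives the identity $d_\theta=\beta\log(\pi_\theta/\pistar)$, with $\pistar$ the Gibbs target of that version's sampler. Under full response support, realizability and the exact normalizer $c^*$, the loss is a nonnegative expectation of squared residuals. It vanishes at $\pistar$. Conversely, full support means every response has positive sampler mass, so zero loss forces every residual to vanish, and hence $\pi_\theta=\pistar$. A fixed positive outer weight $a$ multiplies each squared residual by a positive number. It therefore does not change which policies make all residuals zero, and Proposition~\ref{prop:positive-weights} already records this.

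\textbf{Joint loss over versions.} The joint population loss is a combination, with positive version frequencies, of the nonnegative per-version losses. Such a combination is zero exactly when every component is zero. By the single-version step, component $j$ is zero exactly when $\pi_\theta=\pi^\star_j$, the Gibbs target of sampler $j$. Hence the joint loss is zero exactly when a single trainer equals every $\pi^\star_j$, that is, when all the targets coincide and the trainer equals the common target. For this to go through, the common support should be the support on which each $\pi^\star_j$ is defined, and the per-version coverage is what applies the zero-residual argument to each component.

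\textbf{Distinct samplers give distinct targets.} Suppose two full-support samplers $p_1,p_2$ have the same Gibbs target for a common reward $R$ and $\beta$. Writing out Equation~\eqref{eq:gibbs} for both gives
\[
 p_1(y)e^{R(y)/\beta}/Z_1=p_2(y)e^{R(y)/\beta}/Z_2\quad\text{for every }y.
\]
The positive factor $e^{R/\beta}$ cancels, so $p_1/p_2=Z_1/Z_2$ is constant on the support. Since both $p_1$ and $p_2$ sum to one, that constant is one, and $p_1=p_2$.

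The only delicate point is bookkeeping rather than analysis. The residual of each component must use that component's own sampler and its exact normalizer $c^*_{t,k_i(t)}$. If a normalizer were shared across versions, the identity $d_\theta=\beta\log(\pi_\theta/\pistar)$ would fail and the zero-residual argument would not apply. Everything else reduces to Theorem~\ref{thm:sequence-regression}.
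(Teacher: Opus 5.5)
Your proposal is correct and follows essentially the same route as the paper. It uses the residual identity $d_\theta=\beta\log(\pi_\theta/\pistar)$ with positive sampling mass and weights to force zero residuals in each version, notes that a positive combination of nonnegative losses vanishes only when every component does, and cancels $e^{R/\beta}$ to leave the constant ratio $Z_1/Z_2$, which normalization sets to one.
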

The proof is in Appendix~\ref{app:version-proof}. Distinct samplers therefore pull the trainer toward competing targets, and their relative frequencies affect the best fit even with unrestricted capacity. A finite batch does not by itself provide the population coverage assumed here. Each response should use its own $c^*_{t,k_i(t)}(x_i)$, or a weighted intercept fitted within its prompt--version group; pooling normalizers across versions changes the residual (Appendix~\ref{sec:weighted-centering}).

\subsection{The SKLPO Update}
\label{sec:sequence-update}
Algorithm~\ref{alg:klpo} applies the length-weighted loss to completed responses. For response $i$, $\ell_{i,u}$ is the token log-ratio of Equation~\eqref{eq:sequence}, evaluated with the current trainer and the stored collection version $v_i$. The normalizer can be supplied directly, learned as an intercept, or estimated by weighted centering within each prompt and sampler version. Fixed normalizers stay unchanged during reuse, while fitted intercepts track the trainer. Since a one-response group has zero centered loss, single-rollout training needs independent normalization information or a predictor shared across prompts; Appendix~\ref{sec:normalizer} gives the estimators and their assumptions.

\begin{algorithm}[htbp]
\caption{SKLPO with Each Response's Historical Sampler}
\label{alg:klpo}
\small
\begin{algorithmic}[1]
\Require Trainer $\pitrainer_{\theta_0}$, completed-response queue, $\beta>0$, normalization method
\For{optimizer steps $t=0,1,\ldots$}
 \State Read $B$ responses $(x_i,y_i,T_i,R_i,v_i)$ and their stored sampler log-probabilities.
 \State Score with $\pitrainer_{\theta_t}$; compute $\ell_i=\sum_{u=1}^{T_i}\ell_{i,u}$, $a_i=1/T_i$, $h_i=R_i-\beta\ell_i$.
 \If{using a supplied prompt-and-version normalizer}
  \State Set $d_i=\widehat c_{v_i}(x_i)-h_i$; keep $\widehat c_{v_i}$ detached and fixed during reuse.
 \ElsIf{using a learned intercept $c_\phi$}
  \State Set $d_i=\sg(c_\phi(x_i,v_i))-h_i$ before updating either model.
 \Else
  \State Group by $(x_i,v_i)$ with $G_{x,v}\geq2$.
  \State For each group, set $\bar h_a=(\sum_j a_jh_j)/(\sum_j a_j)$ and $d_i=\bar h_a-h_i$.
 \EndIf
 \State Update $\theta_t$ using the gradient of $\frac{1}{2\beta B}\sum_{i=1}^{B}a_id_i^2$.
 \If{using a learned intercept}
  \State Update $\phi$ by Equation~\eqref{eq:intercept-update}, using the pre-update $h_i$ and $a_i$.
 \EndIf
 \State Publish checkpoints on the synchronization schedule; retain historical sampler log-probabilities.
\EndFor
\end{algorithmic}
\end{algorithm}
\FloatBarrier

%% file: contents/normalization.tex
\section{Normalizing the Response-Level Target}
\label{sec:normalizer}
SKLPO needs one normalizer $c^*_{t,k}(x)$ per prompt and collection version. The fixed normalizer depends on neither the sampled response nor its length, whereas a fitted intercept changes with the trainer. This appendix compares the ways to supply it, and closes with the fixed and terminal-reward forms of the token-level objective for comparison. All statements condition on one sampler version.

\subsection{Normalization Choices}
SKLPO does not require a learned value model. Normalization information can come from repeated responses or from independent statistics, and learning a prompt normalizer is one implementation choice among several. A learned normalizer is an auxiliary scalar predictor, not a token-value critic. At the exact target, $c^*(x)$ is the prompt's soft value; a fitted intercept $c_\theta(x)$ generally takes a different value during training.

\begin{table}[ht]
\centering
\small
\renewcommand{\arraystretch}{1.15}
\begin{tabularx}{\textwidth}{@{}lXX@{}}
\toprule
Method & Information used & Policy-update schedule \\
\midrule
Direct estimate & Same-prompt, same-version rewards or independent success statistics & Estimate $\widehat c$; hold it fixed while reusing the data. \\
Learned normalizer & A shared predictor of success probability or exponentiated reward & Predict $\widehat c$ out of sample; hold that prediction fixed during reuse. \\
Group centering & At least two responses for the same prompt and version & Recompute the weighted group mean after each trainer update. \\
Learned intercept & A shared predictor of the conditional mean of $R-\beta\ell_\theta$ & Alternate policy and intercept updates, using the same outer weights. \\
\bottomrule
\end{tabularx}
\caption{Ways to normalize the response-level target. The two learned routes add a prompt-conditioned predictor; the other two do not. All four condition on the collection sampler.}
\label{tab:normalization-routes}
\end{table}

Single-rollout \emph{collection} means one response per prompt per collection event. Such data can still be normalized with independent historical data or with a predictor shared across prompts, but historical rewards estimate the right normalizer only if they came from the same collection policy. With one response and no other information, the plug-in and empirical-centering constructions below cannot produce the intended update. Appendix~\ref{app:learned-normalizers} specifies the learned routes and their update order.

\subsection{Fixed Normalizers}
Let $V^{\mathrm{sampler}}_{t,k}(x)=\EE_{\pisamp}R(x,Y)$ and $A^{\mathrm{sampler}}_{t,k}(x,y)=R(x,y)-V^{\mathrm{sampler}}_{t,k}(x)$.
\begin{lemma}[Response Normalizer and Plug-In Estimation]
\label{lem:normalizer}
The exact prompt normalizer satisfies
\begin{equation}
 c^*(x)=V^{\mathrm{sampler}}_{t,k}(x)+\beta\KL(\pisamp(\cdot\mid x)\|\pistar(\cdot\mid x)).
 \label{eq:softvalue}
\end{equation}
For $G$ independent responses to the same prompt from $\pisamp$, define
\begin{equation}
 \widehat c(x)=\beta\left[\operatorname{logsumexp}_{i=1}^G
 \left(\frac{R(x,y_i)}{\beta}\right)-\log G\right].
 \label{eq:mc}
\end{equation}
Then $e^{\widehat c/\beta}$ is an unbiased estimator of $Z$, while $\EE\widehat c\leq c^*$.
\end{lemma}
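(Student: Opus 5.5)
The lemma has two parts, and I will handle them separately. Equation~\eqref{eq:softvalue} should follow directly from the general profiling result. Take Proposition~\ref{prop:profiling} with $(\mu,f)=(\pisamp,R)$ at the context $h=x$. Equation~\eqref{eq:normalizer-kl} in that proposition then reads
\[
c_f(x)=\EE_{\pisamp}R+\beta\KL(\pisamp\|\pi_f^\star).
\]
It remains to match notation. By Theorem~\ref{prop:gibbs}, $c_f=\beta\log\EE_{\pisamp}e^{R/\beta}=\beta\log Z=c^*$ and $\pi_f^\star=\pistar$. Together with the definition $V^{\mathrm{sampler}}_{t,k}(x)=\EE_{\pisamp}R(x,Y)$, this gives Equation~\eqref{eq:softvalue}.

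As a sanity check I will also do the direct calculation, so the identity does not rest only on the cross-reference. Taking logs of Equation~\eqref{eq:gibbs} gives $\log(\pisamp/\pistar)=(c^*-R)/\beta$. Taking the $\pisamp$-expectation of both sides yields $\KL(\pisamp\|\pistar)=(c^*-V^{\mathrm{sampler}}_{t,k})/\beta$, which rearranges to the claim. Assumption~\ref{ass:finite} guarantees that every quantity here is finite and positive.

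For the estimator, exponentiating Equation~\eqref{eq:mc} gives $e^{\widehat c/\beta}=G^{-1}\sum_{i=1}^G e^{R(x,y_i)/\beta}$. Each summand has expectation $\EE_{\pisamp}e^{R/\beta}=Z$, so by linearity the average is unbiased for $Z$; independence is not even needed for this step. The bias statement then comes from concavity. Since $\widehat c=\beta\log(e^{\widehat c/\beta})$ and $\log$ is concave, Jensen's inequality gives
\[
\EE\widehat c\le\beta\log\EE e^{\widehat c/\beta}=\beta\log Z=c^*.
\]
Bounded rewards make all the expectations finite.

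None of these steps is a serious obstacle. The only points needing care are the bookkeeping that identifies $c_f$ and $\pi_f^\star$ with $c^*$ and $\pistar$, and noting that $\widehat c$ is bounded so that Jensen applies. If it is worth adding, the Jensen inequality is strict whenever $R$ is nondegenerate under $\pisamp$ and $G$ is finite, so the plug-in normalizer is biased downward.
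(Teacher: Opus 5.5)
Your proposal is correct and follows the paper's own proof. The paper obtains the identity as Equation~\eqref{eq:normalizer-kl} with $(\mu,f)=(\pisamp,R)$, and the estimator claims from unbiasedness of $e^{\widehat c/\beta}=G^{-1}\sum_i e^{R_i/\beta}$ together with Jensen's inequality for the concave logarithm; your direct check via the logarithm of Equation~\eqref{eq:gibbs} and your strictness remark (which relies on the independence of the $G$ draws) are harmless additions.
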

The proof is in Appendix~\ref{app:normalizer-proof}. Equation~\eqref{eq:softvalue} is Equation~\eqref{eq:normalizer-kl} with $f=R$; a mean-reward baseline omits its KL term. Smaller $\beta$ makes the exponential average more sensitive to rare high rewards. Holding $\widehat c$ fixed defines a sampled regression target, not an unbiased estimate of the exact-normalizer loss. Appendix~\ref{app:normalizer} gives the binary-reward form and the effect of data selection.

\subsection{Profiling within a Prompt}
For unweighted, unfiltered responses from one sampler, write $h_\theta=R-\beta\ell_\theta$, and let $\kappa_\theta(x)$ be the complete-response KL from Appendix~\ref{sec:sequence}. The following is Proposition~\ref{prop:profiling} with $f=R$.
\begin{proposition}[Profiled SKLPO Regression]
\label{prop:sequence-centering}
At fixed $\theta$, the least-squares prompt intercept is
\begin{equation}
 c_\theta(x)=\EE_{\pisamp}[h_\theta(x,Y)]
 =\EE_{\pisamp}R+\beta\KL(\pisamp\|\pi_\theta).
 \label{eq:profilec}
\end{equation}
The centered residual and loss are
\begin{equation}
 \delta_{\mathrm{KL}}(x,y;\theta)
 =\frac{A^{\mathrm{sampler}}_{t,k}(x,y)}{\beta}-\ell_\theta(x,y)-\kappa_\theta(x),
 \label{eq:klresidual}
\end{equation}
\begin{equation}
 \begin{aligned}
 \mathcal L_{\mathrm{ctr},\pisamp}(\theta)
 &=\frac{\beta}{2}\EE_x\EE_{Y\sim\pisamp(\cdot\mid x)}[\delta_{\mathrm{KL}}(x,Y;\theta)^2]\\
 &=\frac{1}{2\beta}\EE_x\Var_{Y\sim \pisamp(\cdot\mid x)}
 [R(x,Y)-\beta\ell_\theta(x,Y)].
 \end{aligned}
 \label{eq:centered}
\end{equation}
Under full support and realizability, the unique zero-loss policy is $\pistar$, and its fitted intercept equals $c^*$.
\end{proposition}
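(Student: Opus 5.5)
The plan is to read Proposition~\ref{prop:sequence-centering} as Proposition~\ref{prop:profiling} applied at each prompt with context $h=x$, outcome $z=y$, sampler conditional $\mu=\pisamp(\cdot\mid x)$ and signal $f=R$. The steps are routine except the last.

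\emph{Step 1: the intercept.} Fix $x$ and $\theta$, and set $X=\beta\ell_\theta-R=-h_\theta$. Completing the square gives
\[
\EE_{\pisamp}[(X+c)^2]=\Var_{\pisamp}(X)+(c-\EE_{\pisamp}h_\theta)^2,
\]
so the minimizer is $c_\theta(x)=\EE_{\pisamp}h_\theta$. Next, $-\EE_{\pisamp}\ell_\theta=\KL(\pisamp\|\pi_\theta)=\kappa_\theta(x)$; this uses that $\ell_\theta$ is the full response log-ratio, the sum of token log-ratios from Equation~\eqref{eq:sequence}, and that both response distributions are normalized under Assumption~\ref{ass:finite}. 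Together these give Equation~\eqref{eq:profilec}.

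\emph{Step 2: the residual and the loss.} Substitute $c_\theta$ into the residual. Using $R-\EE_{\pisamp}R=A^{\mathrm{sampler}}_{t,k}$,
\[
\frac{h_\theta-c_\theta}{\beta}=\frac{A^{\mathrm{sampler}}_{t,k}}{\beta}-\ell_\theta-\kappa_\theta=\delta_{\mathrm{KL}}.
\]
Hence $\frac{1}{2\beta}\EE_{\pisamp}[(h_\theta-c_\theta)^2]=\frac{\beta}{2}\EE_{\pisamp}\delta_{\mathrm{KL}}^2$, and this equals $\frac{1}{2\beta}\Var_{\pisamp}(h_\theta)$. Averaging over $x\sim\mathcal D$ gives both forms of Equation~\eqref{eq:centered}.

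\emph{Step 3: identification.} This is the only step that needs care. Under full support, zero loss forces $\Var_{\pisamp(\cdot\mid x)}(h_\theta)=0$ for $\mathcal D$-almost every $x$. Then $R-\beta\ell_\theta$ equals a constant $a(x)$ on the response support, which gives $\pi_\theta(y\mid x)=\pisamp(y\mid x)e^{(R(x,y)-a(x))/\beta}$. Summing over $y$ and using normalization forces $e^{a/\beta}=Z(x)$, so $a=c^*$ and $\pi_\theta=\pistar$ by Theorem~\ref{prop:gibbs}. Conversely, at $\pistar$ Theorem~\ref{thm:sequence-regression} gives $d_\theta\equiv0$, so $h_\theta\equiv c^*$; the variance is then zero and $c_\theta=\EE h_\theta=c^*$. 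Realizability guarantees that this zero is attained within the model class.

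The main obstacle is keeping the support and normalization argument clean: the zero-variance conclusion is only pointwise on the support of $\pisamp$, and the normalization of $\pi_\theta$ is what pins the constant to $c^*$ rather than leaving it free.
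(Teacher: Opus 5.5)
Your proposal is correct and follows the paper's own proof. The paper likewise reduces the statement to Proposition~\ref{prop:profiling} with $(\mu,f)=(\pisamp,R)$: it completes the square to get the intercept, uses $\kappa_\theta=-\EE_{\pisamp}\ell_\theta$ to rewrite $(h_\theta-c_\theta)/\beta$ as $\delta_{\mathrm{KL}}$, and identifies $\pistar$ and $c^*$ from the constancy of $R-\beta\ell_\theta$ on the support together with normalization.
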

The proof is in Appendix~\ref{app:profiling-proof}. Away from the target, $c_\theta$ generally differs from $c^*$. For comparison with the token objectives, the same residual and loss can be written as
\begin{equation}
 \delta_{\mathrm{seq}}=\frac{R-V^{\mathrm{sampler}}_{t,k}(x)}{\beta}
 -\sum_u\ell_{\theta,u}-\kappa_\theta(x),\qquad
 \mathcal L_{\mathrm{seq}}=\frac{\beta}{2}\EE_{x,Y\sim\pisamp}[\delta_{\mathrm{seq}}^2],
 \label{eq:sequencegranularity}
\end{equation}
so that $\delta_{\mathrm{seq}}=\delta_{\mathrm{KL}}$ and $\mathcal L_{\mathrm{seq}}=\mathcal L_{\mathrm{ctr},\pisamp}$. The fixed-normalizer version replaces $\kappa_\theta$ by the KL to the Gibbs target, $\KL(\pisamp\|\pistar)$.

In practice, profiling means centering within a group of responses to the same prompt.
\begin{lemma}[Empirical Centering]
\label{lem:empirical-centering}
For $G$ responses in one prompt and sampler-version group, set $h_i=R_i-\beta\ell_i$ and $\bar h=G^{-1}\sum_i h_i$. Then
\begin{equation}
 \widehat{\mathcal L}_{\mathrm{ctr}}
 =\frac{1}{2\beta G}\sum_i(h_i-\bar h)^2
 =\frac{1}{4\beta G^2}\sum_{i,j}(h_i-h_j)^2.
 \label{eq:pairwise}
\end{equation}
Differentiating through $\bar h$ and detaching it give the same gradient. For fixed $\theta$ and i.i.d.\ responses, the expected empirical loss is $(G-1)/G$ times the conditional population loss.
\end{lemma}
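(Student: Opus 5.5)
The statement to prove is Lemma~\ref{lem:empirical-centering}. The plan is to treat it as three elementary facts about a finite sample of the scalars $h_i=R_i-\beta\ell_i$. Each $h_i$ depends on $\theta$ only through $\ell_i$.

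\emph{Pairwise identity.} I would expand
\[
\sum_{i,j}(h_i-h_j)^2 = 2G\sum_i h_i^2 - 2\Big(\sum_i h_i\Big)^2 = 2G\Big(\sum_i h_i^2 - G\bar h^2\Big) = 2G\sum_i (h_i-\bar h)^2 .
\]
Dividing by $4\beta G^2$ then gives the second expression in Equation~\eqref{eq:pairwise}. This is the empirical analogue of the population identity $\Var_\nu(h_\theta)=\frac12\EE[(h_\theta(Y)-h_\theta(Y'))^2]$ recorded in the additional identities.

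\emph{Gradient with or without detaching $\bar h$.} Differentiating the first form by the chain rule gives
\[
\frac{1}{\beta G}\sum_i (h_i-\bar h)\big(\nabla_\theta h_i - \nabla_\theta \bar h\big).
\]
The term containing $\nabla_\theta\bar h$ carries the factor $\sum_i(h_i-\bar h)=0$, so it vanishes identically. The gradient is therefore the same whether or not $\bar h$ is passed through $\sg$. This mirrors the cancellation of $\nabla_\theta c_\theta$ in Proposition~\ref{prop:profiling}, now with the empirical mean in place of the sampler mean.

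\emph{Expectation.} With $\theta$ fixed, the $h_i$ are i.i.d.\ given the prompt, because the responses are i.i.d.\ from $\pisamp$ and $\ell_i$ is a deterministic function of each response. The standard unbiased-variance computation gives $\EE\sum_i(h_i-\bar h)^2=(G-1)\Var(h_\theta\mid x)$. Dividing by $2\beta G$ yields $\frac{G-1}{G}\cdot\frac{1}{2\beta}\Var_{\pisamp}(h_\theta\mid x)$, and by Equation~\eqref{eq:centered} the second factor is the conditional population loss.

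None of these steps is a genuine obstacle. The only point needing care is the last one. The expectation claim requires $\theta$ to be held fixed and chosen independently of the group; otherwise the $h_i$ are not i.i.d.\ draws of a fixed function. I will state this condition explicitly rather than leave it implicit.
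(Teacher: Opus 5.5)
Your proposal is correct and follows the paper's proof essentially line by line. You use the same expansion $\sum_{i,j}(h_i-h_j)^2=2G\sum_i(h_i-\bar h)^2$, the same cancellation of the $\nabla_\theta\bar h$ term via $\sum_i(h_i-\bar h)=0$, and the same unbiased-variance computation at fixed $\theta$, and your caveat that $\theta$ must be chosen independently of the group matches the paper's own remark that the $(G-1)/G$ statement assumes fixed parameters and independent samples.
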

The proof is in Appendix~\ref{app:empirical-proof}. The centered form costs $O(G)$ after scoring, and the pairwise form is the relative-reward regression of REBEL \citep{rebel2024}. The mean $\bar h$ must be recomputed at every optimizer step. Pooling sampler versions changes the component residuals, and outcome-dependent selection changes the centering measure, which can invalidate the KL expression in Equation~\eqref{eq:profilec}; Appendix~\ref{app:behavior-centering} gives the corresponding gradient. The empirical KL estimate $-\bar\ell$ can be negative, and clipping it changes the loss.

\begin{proposition}[Singleton Normalization]
\label{prop:singleton}
For $G=1$, Equation~\eqref{eq:mc} gives $\widehat c=R_1$, and hence a zero fixed-normalizer residual at $\pi_\theta=\pisamp$. Empirical centering of a one-response group gives zero loss at every policy, with or without a positive outer length weight.
\end{proposition}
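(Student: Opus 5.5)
The proof is direct substitution with $G=1$ in the constructions already introduced, with one case per claim.

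\emph{Plug-in normalizer.} For one response, Equation~\eqref{eq:mc} has $\operatorname{logsumexp}$ over a single term, equal to $R_1/\beta$, and $\log G=\log 1=0$. This gives $\widehat c=\beta\cdot R_1/\beta=R_1$. Substituting this value into the residual of Equation~\eqref{eq:residual} gives $d_\theta=\beta\ell_\theta-R_1+R_1=\beta\ell_\theta$. At $\pi_\theta=\pisamp$ every token log-ratio in Equation~\eqref{eq:sequence} is zero, so $\ell_\theta=0$ and the residual vanishes. Hence the matched-policy update carries no signal, which is the claimed degeneracy.

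\emph{Empirical centering.} With $G=1$ the group mean is $\bar h=h_1$ identically in $\theta$. The centered residual $h_1-\bar h$ is therefore zero at every policy, so the loss in Equation~\eqref{eq:pairwise} is zero. The pairwise form agrees, since its only term has $i=j$.

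\emph{Positive outer weight.} For a fixed weight $a_1>0$, such as $1/T_1$, the weighted mean used in Algorithm~\ref{alg:klpo} and Appendix~\ref{app:weighted-profiling} is
\[
 \bar h_a=\frac{a_1h_1}{a_1}=h_1 .
\]
The weight cancels because it appears in both the numerator and the denominator. The weighted residual and loss $a_1(h_1-\bar h_a)^2/(2\beta)$ therefore also vanish identically.

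The main point needing care is the scope of each claim. The plug-in statement holds only at $\pi_\theta=\pisamp$, since away from it $d_\theta=\beta\ell_\theta\neq0$. The centering statement holds at every $\theta$. For the weighted case, the only requirement is that $a_1>0$, so that the weighted mean is well defined.
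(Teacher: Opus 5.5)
Your proof is correct and is essentially the paper's argument: $\widehat c=\beta\log e^{R_1/\beta}=R_1$, $\ell_\theta=0$ at matched policies, and a singleton group has $\bar h=\bar h_a=h_1$ at every policy, so the centered residuals and losses vanish identically. One small correction to your closing scope remark: away from $\pisamp$ the residual $d_\theta=\beta\ell_\theta$ is only generally nonzero, not always, because a trainer can differ from the sampler yet agree with it on the single observed response.
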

The proof is in Appendix~\ref{app:singleton-proof}. Fixed-normalizer training on single responses therefore needs normalization information obtained independently of the response being scored. A shared learned intercept is different: it estimates a conditional mean across training examples rather than computing a singleton group mean. A reward mean across prompts cannot replace the prompt normalizer. Appendix~\ref{app:single-rollout} gives the signed-feedback and normalizer-error identities.

\subsection{Centering with Length Weights}
\label{sec:weighted-centering}
For length-weighted training, the intercept should be profiled with the same outer weight that the loss uses.
\begin{proposition}[Weighted Sequence Centering]
\label{prop:weighted-centering}
At a fixed prompt and sampler version, set $a=1/T$ and $h_\theta=R-\beta\ell_\theta$. The population intercept and profiled loss are
\begin{equation}
 c_{\theta,a}(x)=
 \frac{\EE_{\pisamp}[a\,h_\theta\mid x]}{\EE_{\pisamp}[a\mid x]},
 \qquad
 \mathcal L_{\mathrm{ctr},a}
 =\frac1{2\beta}\EE_{x,Y\sim\pisamp}
       [a(h_\theta-c_{\theta,a}(x))^2].
 \label{eq:weighted-centering}
\end{equation}
For a group of $G$ responses, let $a_i=1/T_i$, $W=\sum_i a_i$, and $\bar h_a=W^{-1}\sum_i a_i h_i$. Then
\begin{equation}
 \widehat{\mathcal L}_{\mathrm{ctr},a}
 =\frac1{2\beta G}\sum_i a_i(h_i-\bar h_a)^2
 =\frac1{4\beta GW}\sum_{i,j}a_i a_j(h_i-h_j)^2.
 \label{eq:weighted-pairwise}
\end{equation}
The empirical gradient is $G^{-1}\sum_i a_i(\bar h_a-h_i)\score(y_i)$ whether $\bar h_a$ is differentiated or detached. Under full support and realizability, the unique zero-loss policy remains $\pistar$, with intercept $c^*$.
\end{proposition}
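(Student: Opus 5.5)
The plan is to treat this as a weighted version of Proposition~\ref{prop:profiling}, fixing a prompt and sampler version throughout. The tool is completing the square under the reweighted law $\nu_a(y)=a(y)\pisamp(y\mid x)/Z_a$, where $Z_a=\EE_{\pisamp}[a\mid x]$.

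\textbf{Population intercept and profiled loss.} For any constant $c$ I would expand
\[
\EE_{\pisamp}[a(h_\theta-c)^2\mid x]=\EE_{\pisamp}[a(h_\theta-c_{\theta,a})^2\mid x]+Z_a(c-c_{\theta,a})^2 .
\]
The cross term vanishes because $\EE_{\pisamp}[a(h_\theta-c_{\theta,a})\mid x]=0$ by the definition of $c_{\theta,a}$. Since $Z_a>0$, this identifies $c_{\theta,a}$ as the unique minimizer. Averaging over $x$ and dividing by $2\beta$ gives $\mathcal L_{\mathrm{ctr},a}$.

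\textbf{Empirical pairwise form.} I would expand
\[
\sum_{i,j}a_ia_j(h_i-h_j)^2=2W\sum_i a_ih_i^2-2\Big(\sum_i a_ih_i\Big)^2 .
\]
The right side equals $2W\sum_i a_i(h_i-\bar h_a)^2$, because $\sum_ia_i(h_i-\bar h_a)^2=\sum_ia_ih_i^2-W\bar h_a^2$. Dividing by $4\beta GW$ gives the first expression in Equation~\eqref{eq:weighted-pairwise}.

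\textbf{Gradient.} The weights $a_i$, the rewards, and the stored sampler probabilities are fixed, so $\nabla_\theta h_i=-\beta\,\score(y_i)$. Differentiating $\frac1{2\beta G}\sum_ia_i(h_i-\bar h_a)^2$ gives two pieces:
\[
\frac1{G}\sum_ia_i(\bar h_a-h_i)\score(y_i)\quad\text{and}\quad -\frac{\nabla_\theta\bar h_a}{\beta G}\sum_ia_i(h_i-\bar h_a).
\]
The second piece vanishes because the weighted residuals sum to zero. So detaching $\bar h_a$ or differentiating through it gives the same gradient. This step mirrors Lemma~\ref{lem:empirical-centering}.

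\textbf{Identification.} This is the step needing the most care, though it is short. At zero loss, $a>0$ and full support of $\pisamp$ force $h_\theta=c_{\theta,a}(x)$ on the whole support. Then $\pi_\theta\propto\pisamp e^{R/\beta}$, and normalization gives $\pi_\theta=\pistar$, by the same argument as in Proposition~\ref{prop:profiling}. At $\pistar$ we have $h_\theta=c^*$ identically by Theorem~\ref{prop:gibbs}, so the weighted mean of a constant is $c^*$. This gives $c_{\theta,a}=c^*$. Conversely, $\pistar$ attains zero loss whenever it is realizable.

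The main thing to watch is that the weighted intercept must use the same weights as the loss. With an unweighted intercept the cross term would not vanish, and neither the exact minimization nor the cancellation of the intercept derivative would hold.
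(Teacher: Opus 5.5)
Your proof is correct and follows essentially the same route as the paper: complete the square with the weights $a$ (equivalently, under the reweighted law $\nu_a$), use the weighted pairwise-variance identity, note that the intercept derivative drops out because the weighted residuals sum to zero, and identify the target from positive weights, full support and normalization. The differences are minor. The paper states the computation for a possibly selected law $\nu$ and differentiates the population loss, whereas you verify the empirical gradient claim directly, which is what the statement actually asserts.
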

The proof is in Appendix~\ref{app:weighted-profiling}. The weighted mean must be recomputed at every trainer update, and it generally differs from both the ordinary mean and the fixed $c^*$. Direct estimation of $c^*$ by Equation~\eqref{eq:mc} still uses the original, unweighted sampler distribution, and the unweighted $(G-1)/G$ bias formula does not carry over to random-length weights.

\subsection{Fixed Token Normalizers}
\label{sec:token-normalization}
For comparison, the token-level analogue of a fixed normalizer is the log-partition $c^*_{\mathrm{tok}}$ of Equation~\eqref{eq:localnormalizer}. Given exact sampler advantages, it can be computed over next actions or estimated from independent actions at the same history, but the advantages themselves require continuation information from the same sampler. Holding the advantage and normalizer fixed, the token residual, loss, and gradient are
\begin{equation}
 \begin{aligned}
 d^{\mathrm{tok}}_{\theta,u}
 &=\beta\ell_{\theta,u}-A^{\mathrm{sampler}}_{t,k}(x,y_{<u},y_u)+c^*_{\mathrm{tok}}(x,y_{<u}),\\
 \mathcal L_{\mathrm{tok}}^{\mathrm{fix}}(\theta)
 &=\frac1{2\beta}\EE_{x,Y\sim\pisamp}\sum_u(d^{\mathrm{tok}}_{\theta,u})^2,\\
 \nabla_\theta\mathcal L_{\mathrm{tok}}^{\mathrm{fix}}
 &=\EE_{x,Y\sim\pisamp}\sum_u d^{\mathrm{tok}}_{\theta,u}
       \nabla_\theta\log\pi_\theta(y_u\mid x,y_{<u}).
 \end{aligned}
 \label{eq:token-fixed-loss}
\end{equation}
Under realizability, this loss vanishes exactly at $\pi_\theta=\pi^+_{\mathrm{tok}}$, by Theorem~\ref{thm:common-regression}. The fixed $c^*_{\mathrm{tok}}$ and the profiled intercept $\beta k_\theta(s)$ agree at the PMD target and differ during training. A single logged action probability determines the sampled log-ratio, but neither a fixed normalizer nor the conditional score mean; Section~\ref{sec:kl-estimators} discusses what each KL estimator stores instead.

\subsection{A Terminal-Reward Form of the Token Objective}
\label{sec:token-mc}
One can also replace the advantage by the terminal reward inside a squared residual that keeps the full local KL. The resulting loss has the same population gradient as $\mathcal L_{\mathrm{tok}}$.
\begin{proposition}[Terminal-Reward Token Regression]
\label{prop:token-mc}
Let $b(s)$ be a fixed, action-independent baseline, and use complete rollouts from the specified sampler. Define
\begin{equation}
 \widehat\delta_u=\frac{R-b(x,y_{<u})}{\beta}-\ell_{\theta,u}-k_\theta(x,y_{<u}),\qquad
 \mathcal L_{\mathrm{MC}}=\frac{\beta}{2}\EE_{x,Y\sim\pisamp}\sum_u\widehat\delta_u^2.
 \label{eq:tokenmc}
\end{equation}
With exact local KLs, $\mathcal L_{\mathrm{MC}}-\mathcal L_{\mathrm{tok}}$ does not depend on $\theta$, so the population gradients agree, including for $b=0$. The identity differentiates through $k_\theta$ and holds the rollout law and the baseline fixed.
\end{proposition}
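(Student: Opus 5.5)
The approach is to write $\widehat\delta_u$ as $\delta_u$ from Lemma~\ref{lem:token-centering} plus a $\theta$-independent perturbation, and then show that every cross term vanishes by conditioning. Set $\xi_u=R-Q^{\mathrm{sampler}}_{t,k}(x,y_{<u},y_u)$ and $m_u=V^{\mathrm{sampler}}_{t,k}(x,y_{<u})-b(x,y_{<u})$. Because $A^{\mathrm{sampler}}_{t,k}=Q^{\mathrm{sampler}}_{t,k}-V^{\mathrm{sampler}}_{t,k}$, we get exactly
\[
 \widehat\delta_u=\delta_u+\frac{\xi_u+m_u}{\beta}.
\]
Here $\xi_u$ and $m_u$ depend only on the fixed sampler, the environment, the rollout and the fixed baseline. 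They do not depend on $\theta$.

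Next, expand $\widehat\delta_u^2$ term by term and take the sampler expectation one position at a time. There are three cross terms.

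The term $\delta_u\xi_u$ vanishes. Condition on $(x,y_{\le u})$: at that point $\delta_u$ is determined, and $\EE_{\pisamp}[\xi_u\mid x,y_{\le u}]=0$. This holds because the continuation is drawn from the same sampler that defines $Q^{\mathrm{sampler}}_{t,k}$ (Assumption~\ref{ass:matched}). No deterministic transitions are needed, since $Q^{\mathrm{sampler}}_{t,k}$ is an expectation over tool outputs too.

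The term $\delta_u m_u$ vanishes. Condition on $s_u=(x,y_{<u})$: $m_u$ is determined there, and $\EE[\delta_u\mid s_u]=0$ by Lemma~\ref{lem:token-centering}.

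The term $\xi_u m_u$ does not vanish in general, but $\theta$ does not appear in it.

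What remains is
\[
 \mathcal L_{\mathrm{MC}}-\mathcal L_{\mathrm{tok}}=\frac1{2\beta}\EE\sum_u\big[(\xi_u+m_u)^2\big],
\]
which after conditioning becomes $\Var_{\pisamp}(R\mid x,y_{\le u})+m_u^2$. No term on the right involves $\theta$, so the gradients agree. The case $b=0$ needs no special treatment: it only makes $m_u=V^{\mathrm{sampler}}_{t,k}$.

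The step most in need of care is keeping the loss-level identity distinct from a gradient-level shortcut. The claim differentiates through $k_\theta$. The cross term $\delta_u m_u$ vanishes only in expectation, for every $\theta$, because of the exact local KL. That exactness is what gives $\EE[\ell_{\theta,u}+k_\theta(s_u)\mid s_u]=0$ identically in $\theta$, so the vanishing persists after differentiation. If $b\neq V^{\mathrm{sampler}}_{t,k}$, then $\widehat\delta_u$ has conditional mean $m_u/\beta\ne0$, and one cannot drop $\nabla_\theta k_\theta$ from the gradient as was done in Equation~\eqref{eq:tokengradient}. The argument therefore has to be carried out on the loss, before differentiating, with the rollout law and $b$ held fixed throughout. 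Bounded rewards and finite support (Assumption~\ref{ass:finite}) justify exchanging the gradient with the expectation.
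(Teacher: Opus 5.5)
Your proposal is correct and matches the paper's proof: the same split $\widehat\delta_u=\delta_u+(\xi_u+m_u)/\beta$, the same two conditionings, and the same $\theta$-free remainder $\frac1{2\beta}\EE\sum_u[\Var_{\pisamp}(R\mid x,y_{\le u})+m_u^2]$. One small slip: the $\xi_u m_u$ cross term does vanish, since conditioning on $(x,y_{\le u})$ fixes $m_u$ and leaves $\xi_u$ with zero mean, and that is why your conditioned expression has no cross term; this does not affect the conclusion, which only needs $\theta$-independence.
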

The proof is in Appendix~\ref{app:token-mc-proof}. A prompt baseline shared by all tokens preserves the population gradient while adding noise and giving up the samplewise Bellman telescope. When $b(s)\ne V^{\mathrm{sampler}}_{t,k}(s)$, discarding the KL derivative generally changes the gradient, and a baseline fitted on the same rollout needs a separate argument. The local KL requires either the full next-token distributions or an estimator built from additional independent draws. Using the sampled action itself, $\widehat k_\theta(x,y_{<u})=-\ell_{\theta,u}$, cancels the log-ratio and removes the learning signal.

%% file: contents/qstar.tex
\section{Implicit \texorpdfstring{$Q^\star$}{Q-Star} Learning}
\label{sec:qstar}
This appendix shows that the response Gibbs policy of Appendix~\ref{sec:sequence} represents an optimal regularized action-value function implicitly. For a fixed collection sampler $\pisamp$, let $Q^\star_{\beta,t,k}$ and $V^\star_{\beta,t,k}$ be the optimal values regularized toward that sampler. These differ both from the unregularized optimum and from the ordinary sampler value $Q^{\mathrm{sampler}}_{t,k}$ that drives token-level KLPO. The results apply to SKLPO and to soft-value token regression; local PMD targets a different policy and does not recover these values. They are interpretation rather than a component of the token-level updates in the main text.

\subsection{The Regularized Bellman Target}
Consider a finite-horizon process with finite state and action spaces, bounded rewards, and a positive reference $\pisamp(a\mid s)$. The state $s$ includes time and the full observed history, and $P(s'\mid s,a)$ is the environment's transition law. We keep the terminal-reward setting: intermediate rewards are zero and $R(S_T)$ is paid at termination. Define
\begin{equation}
 V^\star_{\beta,t,k}(s)
 =\max_\pi\EE_{\pi,P}\!\left[
 R(S_T)-\beta\sum_{u<T}\log\frac{\pi(A_u\mid S_u)}{\pisamp(A_u\mid S_u)}
 \,\middle|\,S_0=s\right].
 \label{eq:qstar-objective}
\end{equation}
\Needspace{8\baselineskip}
\begin{proposition}[Sampler-Regularized Bellman Equations]
\label{prop:soft-bellman}
In this setting, the optimal values and policy satisfy
\begin{equation}
 \begin{aligned}
 Q^\star_{\beta,t,k}(s,a)&=\EE_{S'\sim P(\cdot\mid s,a)}V^\star_{\beta,t,k}(S'),\\
 V^\star_{\beta,t,k}(s)&=\beta\log\sum_a\pisamp(a\mid s)
       e^{Q^\star_{\beta,t,k}(s,a)/\beta},\\
 \pi^\star_{\beta,t,k}(a\mid s)&=\pisamp(a\mid s)
       e^{(Q^\star_{\beta,t,k}(s,a)-V^\star_{\beta,t,k}(s))/\beta},
 \end{aligned}
 \label{eq:qstar-bellman}
\end{equation}
with $V^\star_{\beta,t,k}(s_T)=R(s_T)$. For a fixed positive sampler, the regularized values converge to the unregularized optimal values as $\beta\downarrow0$.
\end{proposition}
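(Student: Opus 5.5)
The plan is backward induction on the remaining horizon, with Lemma~\ref{lem:common-target} doing the work at every state. The state $s$ contains time, so the horizon is finite and the terminal boundary $V^\star_{\beta,t,k}(s_T)=R(s_T)$ is the base case. For the inductive step, fix a nonterminal $s$ and assume the optimal values at all successor states $S'$ satisfy Equation~\eqref{eq:qstar-bellman}. Write the objective in Equation~\eqref{eq:qstar-objective} by conditioning on the first action $A_0=a$ and the successor $S_1=S'$. The first log-ratio penalty depends only on $\pi(\cdot\mid s)$. The remaining terms form the same problem started from $S'$. The policy at later states can be chosen separately for each history, because states carry the full history. So the maximum over the continuation is attained by the optimal continuation policy, and its value is $V^\star_{\beta,t,k}(S')$.

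After this reduction, the problem at $s$ is
\[
 \max_{\pi(\cdot\mid s)}\sum_a\pi(a\mid s)\Big[\EE_{S'\sim P(\cdot\mid s,a)}V^\star_{\beta,t,k}(S')-\beta\log\frac{\pi(a\mid s)}{\pisamp(a\mid s)}\Big].
\]
This is Equation~\eqref{eq:framework-objective} with $h=s$, $z=a$, $\mu=\pisamp(\cdot\mid s)$ and signal $f(s,a)=\EE_PV^\star_{\beta,t,k}(S')=:Q^\star_{\beta,t,k}(s,a)$. Lemma~\ref{lem:common-target} then gives the unique maximizer, which is the stated Gibbs policy. It also gives the optimal value $c_f(s)=\beta\log\sum_a\pisamp(a\mid s)e^{Q^\star_{\beta,t,k}(s,a)/\beta}$, which is the second line of Equation~\eqref{eq:qstar-bellman}. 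The maximization changes only the action law, so the transition law $P$ stays fixed and the first line holds by definition. The main care point is the interchange of max and expectation. It is justified because the feasible policies form a product over histories and everything is finite, so the dynamic-programming decomposition is exact.

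For the limit $\beta\downarrow0$, I would bound $V^\star_{\beta,t,k}$ against the unregularized optimum $V^\star$, where $Q^\star(s,a)=\EE_PV^\star(S')$ and $V^\star(s)=\max_aQ^\star(s,a)$.

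\textbf{Upper bound.} For any $\pi$, the expected accumulated penalty is a sum of expected KL divergences along the trajectory, so it is nonnegative. Hence $V^\star_{\beta,t,k}\le V^\star$.

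\textbf{Lower bound.} Evaluate a deterministic optimal policy of the unregularized problem in the regularized objective. Each step's penalty is then $-\beta\log\pisamp(a^\ast\mid s)\le\beta\log(1/p_{\min})$, where $p_{\min}>0$ is the smallest sampler probability over feasible pairs. With remaining horizon at most $H$, this gives $V^\star_{\beta,t,k}\ge V^\star-\beta H\log(1/p_{\min})$.

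Together the two bounds give uniform convergence of the state values. Taking the fixed transition expectation $Q^\star_{\beta,t,k}(s,a)=\EE_PV^\star_{\beta,t,k}(S')$ then transfers the convergence to the action values.

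I expect the only delicate step to be the finiteness of $p_{\min}$. It relies on Assumption~\ref{ass:finite}-style positivity on the feasible support, and on restricting to feasible actions so that the logarithm stays bounded.
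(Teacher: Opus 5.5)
Your proposal is correct and follows the paper's proof essentially step for step. You use backward induction from the terminal boundary, applying Lemma~\ref{lem:common-target} at each state to the one-step problem with signal $\EE_P V^\star_{\beta,t,k}(S')$, and then the sandwich $V^\star-\beta H\log(1/p_{\min})\le V^\star_{\beta,t,k}\le V^\star$, which comes from nonnegativity of the KL penalty and from evaluating a deterministic unregularized optimal policy. The only difference is that you spell out why exchanging the max and the expectation in the dynamic-programming step is exact, which the paper leaves implicit.
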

The proof is in Appendix~\ref{app:bellman-proof}. These are the regularized Bellman equations of \citet{geist2019regularized} for the penalty $\beta\KL(\pi\|\pisamp)$, and the reference-weighted form of the Boltzmann policy in soft Q-learning \citep{haarnoja2017softq}. Each state's policy is again the Gibbs solution of Lemma~\ref{lem:common-target}, now with signal $Q^\star_{\beta,t,k}$. The expectation over $P$ keeps environment outcomes conditioned on the chosen action. At finite $\beta$, future KL costs are part of the continuation value. Exponentiating the ordinary $Q^\star$ omits them, and exponentiating $Q^{\mathrm{sampler}}_{t,k}$ gives the local PMD update; neither solves this Bellman system in general.

\subsection{Soft-Value Token Regression}
\label{sec:soft-token}
In deterministic generation, soft continuation values factor the response Gibbs target into token conditionals. Regressing those local conditions gives a token-level objective with the same target as SKLPO. It needs the soft value at every prefix, which terminal returns with score centering do not provide.
\begin{theorem}[Soft-Value Token Regression]
\label{thm:soft-token}
Under Assumptions~\ref{ass:finite} and~\ref{ass:deterministic}, define
\begin{equation}
 F(s)=\beta\log\EE_{\pisamp}[e^{R/\beta}\mid s],\qquad
 \pistar(a\mid s)=\pisamp(a\mid s)e^{(F(sa)-F(s))/\beta},
 \label{eq:softfactorization}
\end{equation}
where $sa$ is the successor prefix, $F(x)=c^*(x)$, and $F=R$ at termination. The local residuals satisfy
\begin{equation}
 \epsilon_u=\frac{F(s_{u+1})-F(s_u)}{\beta}-\ell_{\theta,u},\qquad
 \sum_u\epsilon_u=\frac{R-c^*(x)}{\beta}-\ell_\theta(x,y).
 \label{eq:softtokenresidual}
\end{equation}
For fixed exact $F$ and a fixed sampler,
\begin{equation}
 \mathcal L_{\mathrm{soft\text{-}tok}}=\frac\beta2\EE_{\pisamp}\sum_u\epsilon_u^2,\qquad
 \nabla_\theta\mathcal L_{\mathrm{soft\text{-}tok}}
 =-\beta\EE_{\pisamp}\sum_u\epsilon_u\nabla_\theta\log\pi_\theta(a_u\mid s_u).
 \label{eq:soft-token-loss}
\end{equation}
Under realizability, the unique zero-loss policy is $\pistar$, the SKLPO target.
\end{theorem}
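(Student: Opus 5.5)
The plan has four steps, following the order of the statement: factorization, telescoping, gradient, identification.

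\emph{Factorization.} Under Assumption~\ref{ass:deterministic}, a prefix $s$ together with the next token determines the successor prefix $sa$. Conditioning on the first action then gives the one-step recursion
\[
e^{F(s)/\beta}=\EE_{\pisamp}[e^{R/\beta}\mid s]=\sum_a\pisamp(a\mid s)\,e^{F(sa)/\beta}.
\]
Two consequences follow. First, $\pistar(\cdot\mid s)$ as defined in Equation~\eqref{eq:softfactorization} is a normalized positive conditional. Second, $F$ is exactly the soft value $V^\star_{\beta,t,k}$ of Proposition~\ref{prop:soft-bellman}, with $Q^\star_{\beta,t,k}(s,a)=F(sa)$.

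I would then show that the product of these conditionals is the response Gibbs policy. Multiplying along a response $y$, the exponents $F(s_{u+1})-F(s_u)$ telescope to $R(x,y)-F(x)$, since $F=R$ at termination. Because $F(x)=\beta\log Z(x)=c^*(x)$, the product equals $\pisamp(y\mid x)e^{(R-c^*)/\beta}$, which is Equation~\eqref{eq:gibbs}.

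\emph{Telescoping of residuals.} The same computation, applied to $\epsilon_u$ and summed over $u$, gives Equation~\eqref{eq:softtokenresidual} once we use $\sum_u\ell_{\theta,u}=\ell_\theta$.

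\emph{Gradient.} With $F$ and the sampler held fixed, $\nabla_\theta\epsilon_u=-\nabla_\theta\log\pi_\theta(a_u\mid s_u)$. The rollout law $\pisamp$ does not depend on $\theta$, so I would differentiate under the expectation; this yields Equation~\eqref{eq:soft-token-loss}. Equivalently, this is Theorem~\ref{thm:common-regression} applied at each history with $\mu=\pisamp(\cdot\mid s)$, $f(s,a)=F(sa)$, and $c_f=F(s)$, since then $\epsilon_u=-\beta^{-1}r_\theta$.

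\emph{Identification.} The loss is a sum of squares, and $\pistar$ attains zero by the factorization step. Conversely, under Assumptions~\ref{ass:finite} and~\ref{ass:matched} the sampler visits every feasible history and action with positive probability. Zero loss therefore forces $\epsilon_u=0$ on every feasible pair, which means $\pi_\theta(a\mid s)=\pistar(a\mid s)$ at every feasible prefix. The token conditionals determine the response distribution, so $\pi_\theta=\pistar$ on responses, which is the SKLPO target of Theorem~\ref{thm:sequence-regression}. Realizability guarantees that such a zero-loss parameter exists.

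The main obstacle is the first step, specifically justifying that $F(sa)$ is well defined as a function of the successor prefix. This is exactly where deterministic transitions are needed: with stochastic tool outputs, $sa$ would not determine the next history. A second point to handle is that at termination $F(s_{T+1})=R$ must be read as the terminal boundary condition. Once the recursion is established, the remaining steps are bookkeeping.
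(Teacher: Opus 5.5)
Your proposal is correct and matches the paper's proof step for step: you establish the factorization, telescope the local residuals to $(R-c^*(x))/\beta-\ell_\theta$, differentiate with $F$ held fixed, and identify the target through positive sampler mass at every feasible history. The one difference is cosmetic: the paper gets the factorization as the ratio of the Gibbs masses of the prefixes $sa$ and $s$, whereas you verify normalization through the recursion $e^{F(s)/\beta}=\sum_a\pisamp(a\mid s)e^{F(sa)/\beta}$ and telescope the product, which is the backward-induction argument the paper itself uses for Theorem~\ref{prop:implicit-qstar}.
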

The proof is in Appendix~\ref{app:soft-token-proof}. Since $d_\theta=-\beta\sum_u\epsilon_u$, SKLPO squares the sum of the local residuals while soft-value token regression sums their squares. Unlike the PMD residuals of Theorem~\ref{thm:token-bpo}, these residuals are not conditionally centered, so the cross terms survive in expectation and the gradients differ. Soft-value token regression needs $F$ at every prefix; SKLPO needs only $R$ and $F(x)=c^*(x)$. In deterministic generation, these soft values solve Equation~\eqref{eq:qstar-bellman}. The loss sums over generated policy tokens and averages over responses, and Appendix~\ref{app:token-reductions} distinguishes a global token mean from response-length weighting.

\begin{figure}[ht]
 \centering
 \input{figures/objectives}
 \caption{\textbf{Response-level and token-level targets.} SKLPO and soft-value token regression share the response Gibbs target under exact fitting in deterministic generation. Token-level KLPO targets local PMD; its token- and sequence-regression forms have the same population gradient under the stated conditions and need no critic.}
 \label{fig:objectives}
\end{figure}

\subsection{The Implicit Action-Value Representation}
Define the policy's implicit soft advantage by
\begin{equation}
 A^{\mathrm{imp}}_\theta(s,a)
 =\beta\log\frac{\pi_\theta(a\mid s)}{\pisamp(a\mid s)}.
 \label{eq:implicit-advantage}
\end{equation}
Normalization of the policy gives $\sum_a\pisamp(a\mid s)e^{A^{\mathrm{imp}}_\theta(s,a)/\beta}=1$. The log-ratio therefore represents differences of action values, and recovering absolute Q-values requires a Bellman-consistent state baseline as well.

\begin{theorem}[Implicit Recovery of Optimal Soft Advantages]
\label{prop:implicit-qstar}
Under Assumptions~\ref{ass:finite} and~\ref{ass:deterministic}, the regularized Bellman values are
\begin{equation}
 V^\star_{\beta,t,k}(s)=F(s),\qquad
 Q^\star_{\beta,t,k}(s,a)=F(sa),\qquad
 F(s)=\beta\log\EE_{\pisamp}[e^{R/\beta}\mid s].
 \label{eq:qstar-softvalue}
\end{equation}
Suppose replay covers every feasible response and the Gibbs target is realizable. With exact $c^*(x)$ for SKLPO, or exact $F$ for soft-value token regression, any zero-loss fit satisfies, at every covered prefix,
\begin{equation}
 A^{\mathrm{imp}}_\theta(s,a)
 =Q^\star_{\beta,t,k}(s,a)-V^\star_{\beta,t,k}(s)
 =F(sa)-F(s).
 \label{eq:implicit-recovery}
\end{equation}
The same holds with fixed positive outer weights on either regression loss.
\end{theorem}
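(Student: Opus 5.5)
The proof has two parts: first identify the regularized Bellman values with the soft continuation value $F$, then read the zero-loss condition as a statement about token conditionals.

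\textbf{Step 1: the Bellman values equal $F$.} Under Assumption~\ref{ass:deterministic} the transition law $P$ is a point mass, so the first line of Equation~\eqref{eq:qstar-bellman} in Proposition~\ref{prop:soft-bellman} gives $Q^\star_{\beta,t,k}(s,a)=V^\star_{\beta,t,k}(sa)$. I would then show $V^\star_{\beta,t,k}=F$ by backward induction on the remaining horizon, which is bounded by Assumption~\ref{ass:finite}.
\begin{itemize}
\item At termination both quantities equal $R$.
\item At a nonterminal prefix, assume $V^\star_{\beta,t,k}(sa)=F(sa)$ for every successor. The second Bellman line becomes $e^{V^\star(s)/\beta}=\sum_a\pisamp(a\mid s)e^{F(sa)/\beta}$.
\item By the tower property under the sampler's continuation, $e^{F(sa)/\beta}=\EE_{\pisamp}[e^{R/\beta}\mid sa]$. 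Averaging over $a\sim\pisamp(\cdot\mid s)$ gives $\EE_{\pisamp}[e^{R/\beta}\mid s]=e^{F(s)/\beta}$.
\end{itemize}
This proves Equation~\eqref{eq:qstar-softvalue}, and the third Bellman line then shows that the optimal policy is exactly $\pistar(a\mid s)$ of Equation~\eqref{eq:softfactorization}.

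\textbf{Step 2: a zero-loss fit has the right token conditionals.} For SKLPO with exact $c^*$, Theorem~\ref{thm:sequence-regression} gives $d_\theta=\beta\log(\pi_\theta/\pistar)$. Coverage of every feasible response, together with positive outer weights $a$ (Proposition~\ref{prop:positive-weights}), forces $d_\theta=0$ on every feasible response. Hence $\pi_\theta(y\mid x)=\pistar(y\mid x)$ for all complete responses.

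To pass from responses to tokens, I take ratios of prefix marginals. Summing the response probabilities over all completions of a prefix $s$ gives equal prefix masses under $\pi_\theta$ and $\pistar$. At every covered prefix this mass is positive, so the conditionals agree:
\[
\pi_\theta(a\mid s)=\frac{\pi_\theta(sa)}{\pi_\theta(s)}=\pistar(a\mid s).
\]
By Theorem~\ref{thm:soft-token}, $\pistar(a\mid s)$ equals $\pisamp(a\mid s)e^{(F(sa)-F(s))/\beta}$.

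For soft-value token regression with exact $F$, the argument is shorter. Zero loss with positive replay mass makes each $\epsilon_u$ vanish at every covered prefix and action. That gives $\beta\ell_{\theta,u}=F(sa)-F(s)$ directly, and positive outer weights do not affect this zero-residual argument.

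\textbf{Step 3: conclude.} In both cases, taking $\beta\log$ of the ratio $\pi_\theta/\pisamp$ yields $A^{\mathrm{imp}}_\theta(s,a)=F(sa)-F(s)$. By Step 1 this equals $Q^\star_{\beta,t,k}(s,a)-V^\star_{\beta,t,k}(s)$, which is Equation~\eqref{eq:implicit-recovery}.

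\textbf{Main obstacle.} I expect the delicate step to be the passage from response-level equality to token conditionals in SKLPO. It needs every feasible completion of a covered prefix to be covered, so that the summed prefix masses are equal and the division is legitimate. It also needs deterministic transitions, so that the prefix $sa$ determines the successor state and the Bellman $Q^\star$ is literally $F(sa)$. Everything else is the inductive log-sum-exp identity.
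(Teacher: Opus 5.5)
Your proposal is correct and follows essentially the same route as the paper's proof. Both use backward induction with $Q^\star_{\beta,t,k}(s,a)=V^\star_{\beta,t,k}(sa)$ under deterministic transitions to get $V^\star_{\beta,t,k}=F$. Both then pass from response-level equality $\pi_\theta(y\mid x)=\pistar(y\mid x)$ to token conditionals at prefixes of positive mass for SKLPO, and read the token conditionals directly off the zero residuals for soft-value token regression.
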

The proof is in Appendix~\ref{app:implicit-proof}. The theorem describes an exact fit, not the outcome of neural optimization. The log-ratio of an intermediate policy need not be Bellman-consistent, and absolute action values still require $Q^\star_{\beta,t,k}=V^\star_{\beta,t,k}+A^{\mathrm{imp}}_\theta$.

With stochastic tools, the deterministic equivalence can fail, because an unrestricted trajectory tilt can change outcomes outside the policy's control. Appendix~\ref{app:stochastic-counterexample} gives a one-action counterexample. The Bellman equations remain feasible because they keep $P$ fixed.

\subsection{Policy Regression with an Optimal Q-Function}
When optimal regularized Q-values are available, the regression principle applies directly to action policies, including under stochastic transitions.
\begin{proposition}[Regression against Optimal Soft Q-Values]
\label{prop:q-regression}
Hold the reference and the exact target values fixed. For a fixed replay law $\zeta$, define
\begin{equation}
 \begin{aligned}
 d^Q_\theta(s,a)&=\beta\log\frac{\pi_\theta(a\mid s)}{\pisamp(a\mid s)}
       -Q^\star_{\beta,t,k}(s,a)+V^\star_{\beta,t,k}(s),\\
 \mathcal L_Q(\theta)&=\frac{1}{2\beta}\EE_{(S,A)\sim\zeta}[d^Q_\theta(S,A)^2],\\
 \nabla_\theta\mathcal L_Q&=\EE_\zeta[d^Q_\theta(S,A)\nabla_\theta\log\pi_\theta(A\mid S)].
 \end{aligned}
 \label{eq:qstar-regression}
\end{equation}
Under state-action coverage and realizability, zero loss identifies $\pi^\star_{\beta,t,k}$. Its state normalizer satisfies
\begin{equation}
 V^\star_{\beta,t,k}(s)
 =\EE_{a\sim\pisamp(\cdot\mid s)}Q^\star_{\beta,t,k}(s,a)
  +\beta\KL(\pisamp(\cdot\mid s)\|\pi^\star_{\beta,t,k}(\cdot\mid s)).
 \label{eq:qstar-normalizer}
\end{equation}
In deterministic generation, $d^Q_\theta=-\beta\epsilon_u$, which recovers the soft-value token objective.
\end{proposition}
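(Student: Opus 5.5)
The plan is to treat Proposition~\ref{prop:q-regression} as an instance of the generic machinery: Lemma~\ref{lem:common-target} and Theorem~\ref{thm:common-regression} applied statewise with signal $f=Q^\star_{\beta,t,k}(s,\cdot)$, plus Proposition~\ref{prop:profiling} for the normalizer identity. I will not construct anything new. The only inputs specific to this setting are the Bellman equations of Proposition~\ref{prop:soft-bellman} and, for the last claim, the deterministic value formula of Theorem~\ref{prop:implicit-qstar}.

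\textbf{Step 1: rewrite the residual and get the gradient.} By Equation~\eqref{eq:qstar-bellman}, the Bellman policy satisfies
\[
\beta\log\frac{\pi^\star_{\beta,t,k}(a\mid s)}{\pisamp(a\mid s)}=Q^\star_{\beta,t,k}(s,a)-V^\star_{\beta,t,k}(s).
\]
Substituting this into $d^Q_\theta$ gives $d^Q_\theta=\beta\log(\pi_\theta/\pi^\star_{\beta,t,k})$. This is exactly the residual $r_\theta$ of Section~\ref{sec:regression-principle}, with context $h=s$, $\mu=\pisamp(\cdot\mid s)$, $f=Q^\star_{\beta,t,k}(s,\cdot)$ and $c_f=V^\star_{\beta,t,k}(s)$. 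The identification $c_f=V^\star_{\beta,t,k}$ is the log-sum-exp line of Equation~\eqref{eq:qstar-bellman}.

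The reference, $Q^\star$, $V^\star$ and $\zeta$ are held fixed during differentiation. Hence $\nabla_\theta d^Q_\theta=\beta\nabla_\theta\log\pi_\theta$, and Theorem~\ref{thm:common-regression} with $a=1$ yields the stated gradient.

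\textbf{Step 2: identify the zero-loss policy.} Assume $\zeta$ charges every feasible state--action pair. A zero loss then forces $d^Q_\theta=0$ on the support. This means $\pi_\theta(\cdot\mid s)=\pi^\star_{\beta,t,k}(\cdot\mid s)$ at every covered state, and realizability guarantees that this value is attained.

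Stochastic transitions play no role here. $P$ enters only through the fixed target values $Q^\star_{\beta,t,k}$, and the regression is purely over actions at each state.

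\textbf{Step 3: the normalizer identity.} Equation~\eqref{eq:qstar-normalizer} is Equation~\eqref{eq:normalizer-kl} applied at state $s$ with $(\mu,f)=(\pisamp,Q^\star_{\beta,t,k})$. Directly, one takes the $\pisamp$-expectation of the displayed log-ratio identity. This gives
\[
-\beta\KL(\pisamp\|\pi^\star_{\beta,t,k})=\EE_{\pisamp}Q^\star_{\beta,t,k}-V^\star_{\beta,t,k},
\]
and rearranging gives the claim.

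\textbf{Step 4: the deterministic case.} Under Assumption~\ref{ass:deterministic}, Theorem~\ref{prop:implicit-qstar} gives $Q^\star_{\beta,t,k}(s_u,a_u)=F(s_ua_u)=F(s_{u+1})$ and $V^\star_{\beta,t,k}(s_u)=F(s_u)$. Therefore
\[
d^Q_\theta=\beta\ell_{\theta,u}-F(s_{u+1})+F(s_u)=-\beta\epsilon_u,
\]
using the definition of $\epsilon_u$ in Equation~\eqref{eq:softtokenresidual}.

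If $\zeta$ is the token-visitation law of complete $\pisamp$ rollouts, then $\mathcal L_Q$ equals $\mathcal L_{\mathrm{soft\text{-}tok}}$ of Theorem~\ref{thm:soft-token}. The scaling matches because $\frac1{2\beta}\beta^2\epsilon_u^2=\frac\beta2\epsilon_u^2$, summed over tokens.

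\textbf{Expected obstacle.} The main difficulty is bookkeeping rather than substance. One must make sure the state $s$ carries time and full history, so that a per-state Gibbs argument is legitimate. One must also make precise how the replay law $\zeta$ over pairs corresponds to the ``sum over tokens, average over responses'' reduction when recovering the soft-token objective.
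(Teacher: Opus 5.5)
Your proposal is correct and follows the paper's proof essentially step for step. You use the Bellman policy to rewrite $d^Q_\theta$ as $\beta\log(\pi_\theta/\pi^\star_{\beta,t,k})$, invoke Theorem~\ref{thm:common-regression} for the gradient and the zero-loss identification, take the sampler expectation of the log-ratio identity to get the normalizer, and apply Equation~\eqref{eq:qstar-softvalue} in the deterministic case to obtain $d^Q_\theta=-\beta\epsilon_u$.
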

The proof is in Appendix~\ref{app:q-regression-proof}. Equation~\eqref{eq:qstar-normalizer} is Equation~\eqref{eq:normalizer-kl} once more. Appendix~\ref{app:learning-soft-targets} sketches a critic extension with sampled Bellman targets. A sampler's terminal return estimates its ordinary continuation value and cannot in general replace the optimal regularized target.

%% file: figures/objectives.tex
\begingroup
\input{figures/style}
\resizebox{\linewidth}{!}{%
\begin{tikzpicture}[x=1cm,y=1cm,>=Latex,font=\sffamily\small]
 \path[use as bounding box] (0,0) rectangle (16,5.45);
 \foreach \xx/\panelcolor in {0/klpoOrange,5.43/klpoBlue,10.86/klpoBlue} {
  \fill[\panelcolor!4,rounded corners=5pt] (\xx,.5) rectangle (\xx+5.14,5.45);
 }

 \node[klpo heading,text=klpoOrange] at (.22,5.08) {Token-level KLPO};
 \node[klpo heading,text=klpoBlue] at (5.65,5.08) {SKLPO};
 \node[klpo heading,text=klpoBlue] at (11.08,5.08) {Soft-value token regression};

 \node[klpo note,align=center] at (2.57,4.38) {Sampler advantage $A^{\mathrm{sampler}}_{t,k}$\\profiled PMD regression};
 \node[klpo note,align=center] at (8,4.38) {Terminal reward $R$\\prompt normalization};
 \node[klpo note,align=center] at (13.43,4.38) {Soft continuation value $F(x,y_{<u})$\\at each prefix};

 \node[klpo fit,minimum width=4.42cm,minimum height=.86cm] (tok) at (2.57,3.13)
   {Profiled token loss\\$\frac{\beta}{2}\,\EE_{\pisamp}\sum_u\delta_u^2$};
 \node[klpo box,minimum width=4.42cm,minimum height=.86cm] (seq) at (8,3.13)
   {One response residual\\$\frac{1}{2\beta}\,\EE_{\pisamp}[d_\theta^2]$};
 \node[klpo box,minimum width=4.42cm,minimum height=.86cm] (soft) at (13.43,3.13)
   {Soft-value increments\\$\frac{\beta}{2}\,\EE_{\pisamp}\sum_u\epsilon_u^2$};

 \foreach \xx/\name/\panelcolor in {2.57/tok/klpoOrange,8/seq/klpoBlue,13.43/soft/klpoBlue} {
  \draw[->,draw=\panelcolor,line width=.85pt] (\xx,3.96)--(\name.north);
  \draw[->,draw=\panelcolor,line width=.85pt] (\name.south)--(\xx,2.14);
 }

 \node[text=klpoOrange,align=center] at (2.57,1.78) {Local PMD target\\[3pt]$\pi^+_{\rm tok}$};
 \node[text=klpoBlue,align=center] at (8,1.78) {Optimal soft-Q policy\\[3pt]$\pi^\star$};
 \node[text=klpoBlue,align=center] at (13.43,1.78) {Optimal soft-Q policy\\[3pt]$\pi^\star$};

 \node[klpo note,align=center] at (2.57,.87) {Token or sequence\\regression$^\dagger$};
 \node[klpo note,align=center] at (8,.87) {Response-level Gibbs view};
 \node[klpo note,align=center] at (13.43,.87) {Same Gibbs target\\different regression loss};

 \node[anchor=west,klpo note] at (.08,.13)
  {$^\dagger$Both routes: complete matched-sampler rollouts with exact centering or full local KLs.};
\end{tikzpicture}%
}
\endgroup

%% file: contents/token-supplement.tex
\section{Additional Details for the Main Text}
\label{app:token-supplement}
This appendix complements Sections~\ref{sec:framework}--\ref{sec:agentic}. It compares the two instances of KLPO, summarizes the sequence-regression route, states the token-regression surrogate, contrasts sequence regression with the response-level instance, gives the TopK-KL and binary-KL surrogates, and collects the single-rollout requirements of the two instances.

\subsection{The Two Instances}
Table~\ref{tab:granularity} places the response-level and token-level instances of Section~\ref{sec:instances} side by side.

\begin{table}[H]
\centering
\small
\renewcommand{\arraystretch}{1.2}
\begin{tabularx}{\textwidth}{@{}l>{\raggedright\arraybackslash}X>{\raggedright\arraybackslash}X@{}}
\toprule
 & Response level (SKLPO) & Token level (local PMD) \\
\midrule
Context, outcome & prompt $x$, response $y$ & history $s$, next token $a$ \\
Signal $f$ & reward $R(x,y)$ & sampler advantage $A(s,a)$ \\
Target & $\pistar(y\mid x)\propto q(y\mid x)\,e^{R(x,y)/\beta}$ & $\pi^+_{\mathrm{tok}}(a\mid s)\propto q(a\mid s)\,e^{A(s,a)/\beta}$ \\
Fixed normalizer & $c^*(x)=\beta\log\EE_{q}[e^{R/\beta}\mid x]$ & $c^*_{\mathrm{tok}}(s)=\beta\log\EE_{q}[e^{A/\beta}\mid s]$ \\
Profiled intercept & $\EE_q[R\mid x]+\beta\KL\big(q(\cdot\mid x)\|\pi_\theta(\cdot\mid x)\big)$ & $\beta\KL\big(q(\cdot\mid s)\|\pi_\theta(\cdot\mid s)\big)$ \\
Still required & the prompt's mean reward and response KL, in practice several responses per prompt & sampler advantages and the local KL \\
\bottomrule
\end{tabularx}
\caption{The two instances of KLPO, with $q=\pisamp$ and $A=A^{\mathrm{sampler}}_{t,k}$. Profiling removes the fixed normalizer from both. The response-level intercept still needs quantities that a single response cannot supply; the token-level intercept needs only the local KL, and Section~\ref{sec:granularity} removes the advantages from the residual.}
\label{tab:granularity}
\end{table}

\subsection{The Sequence-Regression Route}
Figure~\ref{fig:score-alternative} traces sequence regression, the alternative to Steps~7--9 of Figure~\ref{fig:overview}, through the results of Section~\ref{sec:tokenbpo} and the MC-KL estimator of Section~\ref{sec:mc-kl}.

\begin{figure}[H]
 \centering
 \input{figures/score-alternative}
 \caption{\textbf{Sequence regression, an alternative to Steps~7--9 of Figure~\ref{fig:overview}.} Summing the token residuals of Step~6 along a trajectory gives one trajectory residual, which Bellman telescoping identifies with BPO's under deterministic transitions (Step~$7^\dagger$; \citealp{bpo2026}). The prompt value can be dropped, and random tool outputs add only a $\theta$-independent term, so the gradient is unchanged. Step~$8^\dagger$ differentiates through the full local KLs. Step~$9^\dagger$ estimates them with $M\geq2$ independent draws per prefix and uses leave-one-out residuals to keep the gradient unbiased. The two routes share a population gradient but differ sample by sample.}
 \label{fig:score-alternative}
\end{figure}

\subsection{The Token-Regression Surrogate}
The per-token surrogate of Equation~\eqref{eq:token-sc-surrogate} produces the gradient of Theorem~\ref{thm:token-sc} by automatic differentiation.
\begin{proposition}[Backward-Surrogate Equivalence]
\label{prop:token-sc-surrogate}
Hold the sampler fixed, detach $h_{\theta,u}$ and the sampler probabilities in Equation~\eqref{eq:token-sc-surrogate}, and differentiate both trainer log-probability terms. Then
\begin{equation}
 \nabla_\theta^{\mathrm{AD}}
 \EE_{x,Y\sim\pisamp}\sum_u L^{\mathrm{bp,TR}}_{\theta,u}
 =-\EE_{x,Y\sim\pisamp}\sum_u
 h_{\theta,u}\,\widetilde g_\theta(s_u,a_u)
 =\nabla_\theta\mathcal L_{\mathrm{TR}}
 =\nabla_\theta\mathcal L_{\mathrm{tok}}.
 \label{eq:token-sc-surrogate-equivalence}
\end{equation}
\end{proposition}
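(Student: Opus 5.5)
The plan is a direct computation of the automatic-differentiation gradient of one surrogate term, followed by an appeal to Theorem~\ref{thm:token-sc} for the last two equalities.

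\textbf{Step 1: differentiate one token term.} Fix a visited history $s_u$ and its action $a_u$. In Equation~\eqref{eq:token-sc-surrogate}, $h_{\theta,u}$ and the weights $\pisamp(v\mid s_u)$ are detached, so automatic differentiation treats them as constants. Only the two trainer log-probability terms inside $z_{\theta,u}$ are differentiated, which gives
\[
\nabla^{\mathrm{AD}}_\theta L^{\mathrm{bp,TR}}_{\theta,u}=-h_{\theta,u}\Big[g_\theta(s_u,a_u)-\sum_v\pisamp(v\mid s_u)\,g_\theta(s_u,v)\Big].
\]
By Equation~\eqref{eq:token-sc-signal}, the bracket is exactly $\widetilde g_\theta(s_u,a_u)$.

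\textbf{Step 2: pass the gradient through the expectation.} The rollout law $x\sim\mathcal D$, $Y\sim\pisamp$ does not depend on $\theta$. Assumption~\ref{ass:finite} makes the vocabulary finite and the length bounded, so the expectation is a finite sum. Gradient and expectation therefore commute, and summing Step~1 over $u$ gives the first equality in Equation~\eqref{eq:token-sc-surrogate-equivalence}.

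\textbf{Step 3: identify the loss gradients.} The remaining equalities are Equation~\eqref{eq:token-sc-gradient} of Theorem~\ref{thm:token-sc}, which states
\[
\nabla_\theta\mathcal L_{\mathrm{tok}}=\nabla_\theta\mathcal L_{\mathrm{TR}}=-\EE\sum_u h_{\theta,u}\,\widetilde g_\theta(s_u,a_u).
\]
The surrogate's value is neither $\mathcal L_{\mathrm{TR}}$ nor an estimate of it. Only the gradients coincide.

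\textbf{Main obstacle.} The only subtle point is bookkeeping for the stop-gradients. Had $h_{\theta,u}$ been differentiated, a product-rule term $-z_{\theta,u}\nabla_\theta h_{\theta,u}$ would appear, and it has no reason to vanish. This is why the statement specifies exactly which quantities are detached. For the same reason, the sampler weights in the score mean must come from the full next-token distribution, independent of the sampled action. Under those conventions, Step~1 is exact sample by sample, and the rest follows by linearity.
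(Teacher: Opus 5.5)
Your proposal is correct and follows the paper's proof: you differentiate a single surrogate term with $h_{\theta,u}$ and the sampler weights detached to get $-h_{\theta,u}\widetilde g_\theta(s_u,a_u)$, take the sampler expectation, and cite Theorem~\ref{thm:token-sc} for the remaining equalities. The one addition is your explicit justification that the gradient and the expectation commute, because the rollout law does not depend on $\theta$ and the sum is finite; the paper leaves this step implicit.
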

The proof is in Appendix~\ref{app:token-surrogate-proof}.

\subsection{Sequence Regression and the Response-Level Instance}
\label{sec:sr-vs-sklpo}
Sequence regression and the response-level instance (SKLPO) both square one residual per response; they differ in the correction and in the target. With full local KLs,
\begin{equation}
 \begin{aligned}
 D_{\mathrm{SR}}(x,Y)
 &=R(x,Y)-\beta\sum_u\ell_{\theta,u}-\beta\sum_u k_\theta(s_u),\\
 D_{\mathrm{SKLPO}}(x,Y)
 &=R(x,Y)-\beta\sum_u\ell_{\theta,u}-c^*(x),\qquad
 c^*(x)=\beta\log\EE_{Y\sim\pisamp(\cdot\mid x)}e^{R(x,Y)/\beta}.
 \end{aligned}
 \label{eq:klpo-sklpo-residuals}
\end{equation}
Sequence regression subtracts the path sum of current local KLs, which depends on the trainer and is differentiated. SKLPO subtracts a prompt normalizer that is fixed with respect to the trainer and must be supplied. The targets differ too: one is local PMD, the other the response Gibbs policy, and they coincide only in the special case noted in Section~\ref{sec:instances}.

\subsection{TopK-KL and Binary-KL Surrogates}
\label{app:topk-binary}
\paragraph{TopK-KL.}
TopK-KL stores the head $H_u$ with the sampler's original probabilities, together with the sampled token's own log-probability even when that token lies outside the head. With $g_v=\nabla_\theta\log p_v$ and $\rho=q_{\mathrm{tail}}/p_{\mathrm{tail}}$, the gradient of Equation~\eqref{eq:topk-kl} is
\begin{equation}
 \nabla_\theta k^K_{\theta,u}
 =-\sum_{v\in H_u}(q_v-\rho p_v)g_v,
 \label{eq:topk-kl-derivative}
\end{equation}
which amounts to modeling the sampler's tail as proportional to the trainer's, $\widehat q_v=\rho p_v$. Sequence regression substitutes TopK-KL into the residual and differentiates through it:
\begin{equation}
 D^K_\theta:=R-\beta\sum_u[\ell_{\theta,u}+k^K_{\theta,u}],
 \label{eq:bpo-topk-residual}
\end{equation}
\begin{equation}
 L^{\mathrm{bp,SRK}}_\theta:=-\sg(D^K_\theta)
 \sum_u[\log\pi_\theta(a_u\mid s_u)+k^K_{\theta,u}].
 \label{eq:bpo-topk-surrogate}
\end{equation}
Token regression uses the same correction as a score mean with a detached coefficient, following \citet[Section~4.1 and Appendix~A]{scorecentering2026}:
\begin{equation}
 C_\theta(s)=\sum_{v\in H}\sg(q_v-\rho p_v)\log p_v,\qquad
 L^{\mathrm{bp,TRK}}_{\theta,u}
 =-\sg(h_{\theta,u})[\log p_{a_u}-C_\theta(s_u)].
 \label{eq:token-sc-topk}
\end{equation}
The whole coefficient $q_v-\rho p_v$ is detached, including its dependence on the trainer through $\rho$.

\begin{proposition}[TopK-KL Surrogates]
\label{prop:bpo-topk}
Hold the sampler probabilities and the head fixed, and assume positive tail masses. Then Equation~\eqref{eq:topk-kl-derivative} holds, $\nabla_\theta C_\theta=-\nabla_\theta k^K_\theta$, and $L^{\mathrm{bp,SRK}}_\theta$ has the sample gradient of $(D^K_\theta)^2/(2\beta)$. With $\widetilde g^K_{\theta,u}=g_{\theta,u}+\nabla_\theta k^K_{\theta,u}$, the two routes produce
\begin{equation}
 \nabla_\theta^{\mathrm{AD}}L^{\mathrm{bp,SRK}}_\theta
 =-D^K_\theta\sum_u\widetilde g^K_{\theta,u},\qquad
 \nabla_\theta^{\mathrm{AD}}\sum_u L^{\mathrm{bp,TRK}}_{\theta,u}
 =-\sum_u(R-\beta\ell_{\theta,u})\widetilde g^K_{\theta,u}.
 \label{eq:sequence-token-feedback}
\end{equation}
\end{proposition}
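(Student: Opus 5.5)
The argument is a direct computation at a single history $s$, with the head $H$ and the sampler probabilities $q_v$ held fixed; nothing beyond the chain rule and the definitions is needed.

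\textbf{Derivative of $k^K$.} Write $g_v=\nabla_\theta\log p_v$. Then $\nabla p_v=p_vg_v$ and $\nabla p_{\mathrm{tail}}=-\sum_{v\in H}p_vg_v$. In Equation~\eqref{eq:topk-kl} only the $p$-dependent pieces contribute:
\begin{itemize}
 \item the head terms $-\sum_{v\in H}q_v\log p_v$ give $-\sum_{v\in H}q_vg_v$;
 \item the tail term $-q_{\mathrm{tail}}\log p_{\mathrm{tail}}$ gives $-\rho\,\nabla p_{\mathrm{tail}}=\rho\sum_{v\in H}p_vg_v$, with $\rho=q_{\mathrm{tail}}/p_{\mathrm{tail}}$.
\end{itemize}
Adding these yields Equation~\eqref{eq:topk-kl-derivative}. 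The assumption of positive tail masses is used only here, so that $\log p_{\mathrm{tail}}$ is differentiable and $\rho$ is finite.

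\textbf{The token-route correction $C_\theta$.} Because the whole coefficient $\sg(q_v-\rho p_v)$ is detached, differentiating $C_\theta$ acts only on $\log p_v$. This gives $\nabla C_\theta=\sum_{v\in H}(q_v-\rho p_v)g_v=-\nabla k^K_\theta$. The point to watch is that detaching $\rho$ alone would not suffice, since the $p_v$ factor would then also be differentiated. The full detachment specified in Equation~\eqref{eq:token-sc-topk} avoids this.

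\textbf{Sequence route.} Since $\ell_{\theta,u}=\log p_{a_u}-\log q_{a_u}$ with $q$ fixed, we have $\nabla D^K_\theta=-\beta\sum_u(g_{\theta,u}+\nabla k^K_{\theta,u})=-\beta\sum_u\widetilde g^K_{\theta,u}$. By the chain rule,
\[
\nabla\,\frac{(D^K_\theta)^2}{2\beta}=-D^K_\theta\sum_u\widetilde g^K_{\theta,u}.
\]
With $D^K_\theta$ detached, the surrogate $L^{\mathrm{bp,SRK}}_\theta$ has exactly this gradient: its bracket is $\sum_u(\log p_{a_u}+k^K_{\theta,u})$, whose gradient is $\sum_u\widetilde g^K_{\theta,u}$.

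\textbf{Token route.} With $h_{\theta,u}$ detached, each token contributes $-h_{\theta,u}(g_{\theta,u}-\nabla C_\theta(s_u))$. Substituting the identity $\nabla C_\theta=-\nabla k^K_\theta$ from above turns this into $-(R-\beta\ell_{\theta,u})\,\widetilde g^K_{\theta,u}$. Summing over tokens gives the second identity in Equation~\eqref{eq:sequence-token-feedback}.

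The only real difficulty is the tail-term differentiation in the first step; everything else is bookkeeping about which quantities are detached.
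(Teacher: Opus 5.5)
Your proof is correct and follows the paper's argument essentially line by line. Both compute $\nabla_\theta k^K_\theta$ from the head and tail terms using $\nabla p_{\mathrm{tail}}=-\sum_{v\in H}p_vg_v$, use the fully detached coefficient to get $\nabla_\theta C_\theta=-\nabla_\theta k^K_\theta$, and then apply the chain rule to $(D^K_\theta)^2/(2\beta)$ and to the stopped token surrogate; your remark that detaching $\rho$ alone would not suffice is the same caveat the paper makes.
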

The proof is in Appendix~\ref{app:bpo-topk-proof}. The two routes share the score correction and differ only in the coefficient; Appendix~\ref{app:topk-stability} gives numerical safeguards for small tail masses.

\paragraph{Binary KL.}
When only the sampled token's probability is stored, the coarsest partition separates that token from everything else. Following BPO's binary partition \citep[Equation~(27)]{bpo2026}, with $q_u=\pisamp(a_u\mid s_u)$ and $p_u=\pi_\theta(a_u\mid s_u)$, define
\begin{equation}
 k^{\mathrm{bin}}_{\theta,u}
 :=q_u\log\frac{q_u}{p_u}
 +(1-q_u)\log\frac{1-q_u}{1-p_u},\qquad
 \widehat D_\theta:=R-\beta\sum_u[\ell_{\theta,u}+k^{\mathrm{bin}}_{\theta,u}],\qquad
 \omega_{\theta,u}:=\frac{1-q_u}{1-p_u}.
 \label{eq:bpo-binary-residual}
\end{equation}
\begin{proposition}[Binary-KL Surrogates]
\label{prop:bpo-binary}
For $0<p_u,q_u<1$, with the sampler held fixed,
\begin{equation}
 \nabla_\theta[\log p_u+k^{\mathrm{bin}}_{\theta,u}]
 =\omega_{\theta,u}\nabla_\theta\log p_u.
 \label{eq:bpo-binary-derivative}
\end{equation}
Consequently the sequence surrogate
\begin{equation}
 L^{\mathrm{bp,SR\text{-}bin}}_\theta
 :=-\sg(\widehat D_\theta)\sum_u\sg(\omega_{\theta,u})\log p_u
 \label{eq:bpo-binary-surrogate}
\end{equation}
has the sample gradient of $\widehat D_\theta^2/(2\beta)$, and the token surrogate
\begin{equation}
 L^{\mathrm{bp,TR\text{-}bin}}_{\theta,u}
 :=-\sg(R-\beta\ell_{\theta,u})\,\sg(\omega_{\theta,u})\log p_u
 \label{eq:sc-binary-surrogate}
\end{equation}
applies the same corrected score with the per-token coefficient.
\end{proposition}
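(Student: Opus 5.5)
Everything reduces to one scalar derivative identity per token, Equation~\eqref{eq:bpo-binary-derivative}, followed by the chain rule. Fix a position $u$ and write $p=p_u$, $q=q_u$. The sampler is held fixed, so $q$ is a constant. We differentiate
\[
 k^{\mathrm{bin}}_{\theta,u}=q\log q-q\log p+(1-q)\log(1-q)-(1-q)\log(1-p).
\]
Since $\nabla_\theta\log(1-p)=-p\,\nabla_\theta\log p/(1-p)$, this gives
\[
 \nabla_\theta k^{\mathrm{bin}}_{\theta,u}=\Big[-q+\frac{(1-q)p}{1-p}\Big]\nabla_\theta\log p .
\]
Adding $\nabla_\theta\log p$ to both sides, the bracket becomes
\[
 1-q+\frac{(1-q)p}{1-p}=\frac{(1-q)(1-p)+(1-q)p}{1-p}=\frac{1-q}{1-p}=\omega_{\theta,u},
\]
which is Equation~\eqref{eq:bpo-binary-derivative}. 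The hypothesis $0<p,q<1$ guarantees that every logarithm and the quotient are finite.

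For the sequence surrogate, we first compute the gradient of the target loss directly. Differentiating $\widehat D_\theta^2/(2\beta)$ gives $\beta^{-1}\widehat D_\theta\nabla_\theta\widehat D_\theta$. Because $\ell_{\theta,u}=\log p_u-\log q_u$ with $q_u$ fixed, we have
\[
 \nabla_\theta\widehat D_\theta=-\beta\sum_u\nabla_\theta[\log p_u+k^{\mathrm{bin}}_{\theta,u}]=-\beta\sum_u\omega_{\theta,u}\nabla_\theta\log p_u .
\]
Hence the loss gradient is $-\widehat D_\theta\sum_u\omega_{\theta,u}\nabla_\theta\log p_u$.

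Next we differentiate Equation~\eqref{eq:bpo-binary-surrogate} by automatic differentiation. The factors $\sg(\widehat D_\theta)$ and $\sg(\omega_{\theta,u})$ are treated as constants, and each one carries its forward value. The only gradient that survives flows through $\log p_u$, and it produces the same expression. The two therefore agree samplewise, for every trajectory and every $\theta$; no expectation is needed.

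The token surrogate is handled in the same way. Its automatic-differentiation gradient is
\[
 -(R-\beta\ell_{\theta,u})\,\omega_{\theta,u}\nabla_\theta\log p_u .
\]
By Equation~\eqref{eq:bpo-binary-derivative} this equals $-(R-\beta\ell_{\theta,u})\nabla_\theta[\log p_u+k^{\mathrm{bin}}_{\theta,u}]$. This is the binary analogue of the full-KL centered score $g_{\theta,u}+\nabla_\theta k_\theta(s_u)$, now weighted by the per-token coefficient of Equation~\eqref{eq:token-sc-signal}.

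There is no real obstacle in this argument. The one step that needs care is bookkeeping around the stop-gradients. The claimed identity is with the gradient of the binary objective $\widehat D_\theta^2/(2\beta)$, not with the full-KL loss $\mathcal L_{\mathrm{SR}}$. We must also check that detaching $\omega_{\theta,u}$ does not discard a needed term. It does not: $\omega_{\theta,u}$ arises only after differentiation, as a coefficient multiplying $\nabla_\theta\log p_u$. It does not appear in the objective, so no product-rule term involving its derivative is missing.
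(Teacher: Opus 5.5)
Your proof is correct and follows the paper's own argument. You differentiate the Bernoulli KL with $q_u$ fixed, add the score so the bracket collapses to $\omega_{\theta,u}=(1-q_u)/(1-p_u)$, and then apply the chain rule to $\widehat D_\theta^2/(2\beta)$ to match the stop-gradient surrogates samplewise.
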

Here $k^{\mathrm{bin}}$ is Equation~\eqref{eq:topk-kl} with head $\{a_u\}$, which need not be the sampler's most probable token, and $\omega_{\theta,u}$ is a complementary-probability correction, not the importance ratio $p_u/q_u$. Binary KL depends on which action was sampled. The proof is in Appendix~\ref{app:bpo-binary-proof}, which also gives a three-token example in which the binary-corrected log-ratio has nonzero sampler mean. BPO's practical loss uses this partition after linearizing the residual and adds group normalization, smoothing, and clipping \citep[Section~3.2]{bpo2026}. The surrogates above keep the residual and the coefficient dynamic.

\subsection{Reusing Auxiliary Draws}
\label{app:mc-reuse}
Proposition~\ref{prop:mc-gradients} conditions on a trainer chosen independently of the auxiliary draws. Once the trainer has been updated with a set of draws, the draws are no longer independent of it. A conditionally unbiased gradient at every update therefore requires fresh draws from the historical sampler, which in turn requires its stored conditionals or the ability to rerun it. Reusing a fixed set of draws still defines a sensible training objective, a fixed sample-average surrogate, but the expectation identities of Section~\ref{sec:mc-kl} no longer apply to it.

\subsection{Single-Rollout Requirements}
Table~\ref{tab:single-rollout} lists what each instance needs to train from one rollout per prompt, as discussed in Section~\ref{sec:klpo-sklpo}.

\begin{table}[H]
\centering
\small
\renewcommand{\arraystretch}{1.2}
\begin{tabularx}{\textwidth}{@{}l>{\raggedright\arraybackslash}X>{\raggedright\arraybackslash}X@{}}
\toprule
With one response per prompt & Token-level KLPO & Response-level KLPO (SKLPO) \\
\midrule
Token-level critic & not needed & not needed \\
Prompt normalizer & not needed & supplied, estimated from independent data, or learned as a prompt-level predictor \\
Same-prompt group & not needed & needed for group centering or a log-mean-exp estimate, both of which degenerate for a single response \\
Per-token records & sampled-token log-probability, plus the KL estimator's records & sampled-token log-probability \\
Unbiased gradient & with full KL or MC-KL & given an exact normalizer \\
\bottomrule
\end{tabularx}
\caption{What each instance needs to train from one rollout per prompt. The comparison concerns information requirements.}
\label{tab:single-rollout}
\end{table}

%% file: figures/score-alternative.tex
\begingroup
\input{figures/style}

\tikzset{
  klpo fig flow/.style={
    -{Latex[length=2.0mm,width=1.35mm]},
    draw=klpoBlue,
    line width=.85pt
  }
}
\newcommand{\klpostepheading}[1]{%
  {\raggedright\sffamily\bfseries\fontsize{8.5}{10}\selectfont #1\par}%
}
\tcbset{
  klpo theory step/.style={
    enhanced,colback=klpoOrange!3,colframe=klpoOrange!30,
    boxrule=.4pt,arc=1.6mm,left=.9mm,right=.9mm,
    top=1.2mm,bottom=1.2mm,boxsep=0pt,before skip=0pt,after skip=0pt,
    valign=top
  },
  klpo practice step/.style={
    enhanced,colback=klpoBlue!3,colframe=klpoBlue!25,
    boxrule=.4pt,arc=1.6mm,left=.9mm,right=.9mm,
    top=1.2mm,bottom=1.2mm,boxsep=0pt,before skip=0pt,after skip=0pt,
    valign=top
  }
}

\begin{tcolorbox}[klpo practice step,colback=klpoOrange!4,colframe=klpoOrange!30]
\centering\sffamily\small\color{klpoInk}
\textbf{Sequence-regression alternative to Steps 7--9}\\[1mm]
{\scriptsize Start from Step 6; complete rollouts from the matched sampler.}\\[1mm]
{\scriptsize \(v_{u,j}\overset{\mathrm{iid}}{\sim}q(\cdot\mid s_u),\quad r_{u,j}:=\log\frac{q(v_{u,j}\mid s_u)}{p(v_{u,j}\mid s_u)},\quad M\geq2.\)}
\end{tcolorbox}
\vspace{1.5mm}
\noindent
\begin{minipage}[c]{.30\linewidth}
\begin{tcolorbox}[klpo practice step,equal height group=klpo-alternative-row]
\klpostepheading{\color{klpoBlue}Step \(7^\dagger\). KLPO Sequence Regression}
{\sffamily\tiny\color{klpoGray}terminal return; no value baseline}\par
\vspace{.3mm}
\centering\scriptsize\color{klpoInk}
\(\displaystyle
\begin{gathered}
D_\theta:=R-\beta\sum_u[\ell_{\theta,u}+k_{\theta,u}],\\[2pt]
\mathcal L_{\mathrm{SR}}:=\frac1{2\beta}\EE_{\pi_{\mathrm{sampler}}}D_\theta^2,\\[2pt]
\mathcal L_{\mathrm{SR}}
=\mathcal L_{\mathrm{tok}}+\text{constant in }\theta,\\[2pt]
\nabla\mathcal L_{\mathrm{SR}}=\nabla\mathcal L_{\mathrm{tok}}.
\end{gathered}
\)
\end{tcolorbox}
\end{minipage}%
\hfill
\begin{minipage}[c]{.035\linewidth}
\centering
\begin{tikzpicture}[baseline=-.5ex]
\draw[klpo fig flow] (0,0)--(.42,0);
\end{tikzpicture}
\end{minipage}%
\hfill
\begin{minipage}[c]{.30\linewidth}
\begin{tcolorbox}[klpo practice step,equal height group=klpo-alternative-row]
\klpostepheading{\color{klpoBlue}Step \(8^\dagger\). Sequence-Regression Backprop}
{\sffamily\tiny\color{klpoGray}differentiate the full local KL}\par
\vspace{.3mm}
\centering\scriptsize\color{klpoInk}
\(\displaystyle
\begin{gathered}
p_u:=\pi_\theta(y_u\mid x,y_{<u}),\\[1pt]
q_u:=\pi_{\mathrm{sampler}}(y_u\mid x,y_{<u}),\\[2pt]
L^{\mathrm{bp,SR}}_\theta:=-\sg(D_\theta)\\[-1pt]
{}\cdot\sum_u[\log p_u+k_{\theta,u}],\\[2pt]
\nabla_\theta^{\mathrm{AD}}\EE_{\pi_{\mathrm{sampler}}}L^{\mathrm{bp,SR}}_\theta
=\nabla_\theta\mathcal L_{\mathrm{tok}}.
\end{gathered}
\)
\end{tcolorbox}
\end{minipage}%
\hfill
\begin{minipage}[c]{.035\linewidth}
\centering
\begin{tikzpicture}[baseline=-.5ex]
\draw[klpo fig flow] (0,0)--(.42,0);
\end{tikzpicture}
\end{minipage}%
\hfill
\begin{minipage}[c]{.30\linewidth}
\begin{tcolorbox}[klpo practice step,equal height group=klpo-alternative-row]
\klpostepheading{\color{klpoBlue}Step \(9^\dagger\). Sequence Regression\\+ MC-KL}
{\sffamily\tiny\color{klpoGray}independent draws; leave-one-out feedback}\par
\vspace{.2mm}
\centering\scriptsize\color{klpoInk}
\(\displaystyle
\begin{gathered}
D_j:=R-\beta\sum_u[\ell_{\theta,u}+r_{u,j}],\\[2pt]
\overline D_{-j}:=\frac{1}{M-1}\sum_{l\ne j}D_l,\\[2pt]
L^{\mathrm{bp,SR\text{-}MC}}_\theta
:=-\frac1M\sum_j\sg(\overline D_{-j})\\[-1pt]
{}\cdot\sum_u[\log p_u-\log p(v_{u,j})].
\end{gathered}
\)
\end{tcolorbox}
\end{minipage}\par

\vspace{1.5mm}
\begin{tcolorbox}[klpo practice step,colback=klpoInk!4,colframe=klpoInk!12]
\centering\scriptsize\color{klpoInk}
\(\displaystyle
\nabla\mathcal L_{\mathrm{SR}}
=\nabla\mathcal L_{\mathrm{tok}}
=\nabla\mathcal L_{\mathrm{TR}}
=\nabla_\theta^{\mathrm{AD}}\EE_{\pi_{\mathrm{sampler}}}L^{\mathrm{bp,SR}}_\theta.
\)\\[1mm]
{\sffamily Exact routes: same population gradient; different sample estimators.}\\[1mm]
{\sffamily\color{klpoGray}MC-KL preserves the expected gradient; TopK-KL and binary KL approximate it.}\\[.6mm]
{\sffamily\color{klpoGray}Neither identity requires deterministic transitions.}
\end{tcolorbox}

\endgroup

%% file: contents/mc-proof.tex
\subsection{Monte Carlo KL and Leave-One-Out Residuals}
\label{app:mc-kl-proof}
\begin{proof}[Proof of Proposition~\ref{prop:mc-gradients}]
Condition on a complete rollout $Y$ and on a trainer $\theta$ chosen independently of the auxiliary draws. The visited histories, rollout actions, reward, and sampler conditionals are then fixed. Write $g_u(v)=\nabla_\theta\log p_\theta(v\mid s_u)$, and for each auxiliary column define
\[
 Z_j=\sum_u\log\frac{q(v_{u,j}\mid s_u)}{p_\theta(v_{u,j}\mid s_u)},\qquad
 G_j=\sum_u[g_u(a_u)-g_u(v_{u,j})].
\]
Their conditional means are $\EE_{\mathrm{MC}}Z_j=\sum_u k_\theta(s_u)$ and $\EE_{\mathrm{MC}}G_j=\sum_u\widetilde g_{\theta,u}$, and $D_j=R-\beta(\sum_u\ell_{\theta,u}+Z_j)$ has mean $D_\theta$. Token regression multiplies its MC score estimate by $h_{\theta,u}$, which is fixed given the rollout, so its conditional expectation is the full-KL sample gradient.

For sequence regression, the auxiliary columns are independent given $Y$, so $\overline D_{-j}$ is independent of $G_j$ and
\[
 \EE_{\mathrm{MC}}\left[-\frac1M\sum_j\overline D_{-j}G_j\right]
 =-D_\theta\sum_u\widetilde g_{\theta,u}.
\]
Since $\nabla_\theta D_j=-\beta G_j$, differentiating the sum of ordered cross-products in Equation~\eqref{eq:mc-sequence-ustat} gives exactly this estimator before taking expectations. For $j\ne l$, independence gives $\EE_{\mathrm{MC}}D_jD_l=D_\theta^2$, which proves the loss identity. Averaging over sampler rollouts then completes the proof under the original full-KL assumptions.
\end{proof}

\paragraph{The plug-in square is biased.}
Let $\overline D=M^{-1}\sum_jD_j$. With draws independent across prefixes,
\begin{equation}
 \EE_{\mathrm{MC}}\frac{\overline D^2}{2\beta}
 =\frac{D_\theta^2}{2\beta}
 +\frac{\beta}{2M}\sum_u
   \Var_{v\sim q(\cdot\mid s_u)}\left[\log\frac{q(v\mid s_u)}{p_\theta(v\mid s_u)}\right].
 \label{eq:mc-naive-bias}
\end{equation}
The extra variance term depends on the trainer, so differentiating the plug-in square does not give an unbiased full-KL gradient. For $M\geq2$, the U-statistic subtracts an unbiased estimate of that term:
\[
 \widehat{\mathcal L}_{\mathrm{SR,MC}}
 =\frac1{2\beta}\left[\overline D^2
 -\frac1{M-1}\frac1M\sum_j(D_j-\overline D)^2\right].
\]
Neither this quantity nor a single MC-KL estimate has to be nonnegative. The leave-one-out coefficient is $\overline D_{-j}=\overline D-(D_j-\overline D)/(M-1)$, so no explicit enumeration of pairs is needed. Splitting the draws into independent groups would also estimate the product, but the leave-one-out construction uses all $M$ draws symmetrically. With $M=1$ there is no independent second residual, so sequence regression is undefined, while token regression remains valid.

\paragraph{Reuse.}
All conditional identities above assume a trainer chosen independently of the auxiliary records. After the trainer has been updated with a fixed set of records, conditioning on the updated trainer can destroy their original i.i.d.\ law. Recomputing the current log-probabilities on the same records gives a fixed-sample surrogate, and drawing fresh auxiliary tokens from the same historical sampler restores the conditional unbiasedness. Sampling from the current trainer, deduplicating token IDs, clipping negative estimates, or reusing the rollout action each changes the estimator.

%% file: reference.bib
@inproceedings{kostrikov2022iql,
  title = {Offline Reinforcement Learning with Implicit {Q}-Learning},
  author = {Kostrikov, Ilya and Nair, Ashvin and Levine, Sergey},
  booktitle = {International Conference on Learning Representations},
  year = {2022},
  url = {https://arxiv.org/abs/2110.06169}
}

@inproceedings{garg2021iqlearn,
  title = {{IQ-Learn}: Inverse soft-{Q} Learning for Imitation},
  author = {Garg, Divyansh and Chakraborty, Shuvam and Cundy, Chris and Song, Jiaming and Geist, Matthieu and Ermon, Stefano},
  booktitle = {Advances in Neural Information Processing Systems},
  volume = {34},
  year = {2021},
  url = {https://arxiv.org/abs/2106.12142}
}

@inproceedings{rafailov2024qfunction,
  title = {From {$r$} to {$Q^*$}: Your Language Model is Secretly a {Q}-Function},
  author = {Rafailov, Rafael and Hejna, Joey and Park, Ryan and Finn, Chelsea},
  booktitle = {Conference on Language Modeling},
  year = {2024},
  url = {https://arxiv.org/abs/2404.12358}
}

@inproceedings{geist2019regularized,
  title = {A Theory of Regularized {Markov} Decision Processes},
  author = {Geist, Matthieu and Scherrer, Bruno and Pietquin, Olivier},
  booktitle = {Proceedings of the 36th International Conference on Machine Learning},
  volume = {97},
  pages = {2160--2169},
  year = {2019},
  publisher = {PMLR},
  url = {https://proceedings.mlr.press/v97/geist19a.html}
}

@inproceedings{haarnoja2017softq,
  title = {Reinforcement Learning with Deep Energy-Based Policies},
  author = {Haarnoja, Tuomas and Tang, Haoran and Abbeel, Pieter and Levine, Sergey},
  booktitle = {Proceedings of the 34th International Conference on Machine Learning},
  volume = {70},
  pages = {1352--1361},
  year = {2017},
  publisher = {PMLR},
  url = {https://proceedings.mlr.press/v70/haarnoja17a.html}
}

@inproceedings{rpg2026,
  title = {On the Design of {KL}-Regularized Policy Gradient Algorithms for {LLM} Reasoning},
  author = {Zhang, Yifan and Liu, Yifeng and Yuan, Huizhuo and Yuan, Yang and Gu, Quanquan and Yao, Andrew Chi-Chih},
  booktitle = {International Conference on Learning Representations},
  year = {2026},
  url = {https://arxiv.org/abs/2505.17508}
}

@inproceedings{gpo2025,
  title = {Beyond {Bradley--Terry} Models: A General Preference Model for Language Model Alignment},
  author = {Zhang, Yifan and Zhang, Ge and Wu, Yue and Xu, Kangping and Gu, Quanquan},
  booktitle = {Proceedings of the 42nd International Conference on Machine Learning},
  year = {2025},
  url = {https://arxiv.org/abs/2410.02197}
}

@article{bpo2026,
  title = {Bellman Policy Optimization},
  author = {Song, Zhuoqing and Xu, Haotian and Zhang, Xikun and Bing, Lidong},
  journal = {arXiv preprint arXiv:2609.15987},
  year = {2026},
  url = {https://arxiv.org/abs/2609.15987v1}
}

@article{scorecentering2026,
  title = {Score Centering Stabilizes Off-policy Reinforcement Learning},
  author = {Marek, Martin and Ryabinin, Max},
  journal = {arXiv preprint arXiv:2609.20807},
  year = {2026},
  url = {https://arxiv.org/abs/2609.20807v1}
}

@article{rebel2024,
  title = {{REBEL}: Reinforcement Learning via Regressing Relative Rewards},
  author = {Gao, Zhaolin and Chang, Jonathan D. and Zhan, Wenhao and Oertell, Owen and Swamy, Gokul and Brantley, Kiant{\'e} and Joachims, Thorsten and Bagnell, J. Andrew and Lee, Jason D. and Sun, Wen},
  journal = {arXiv preprint arXiv:2404.16767},
  year = {2024},
  url = {https://arxiv.org/abs/2404.16767}
}

@article{awr2019,
  title = {Advantage-Weighted Regression: Simple and Scalable Off-Policy Reinforcement Learning},
  author = {Peng, Xue Bin and Kumar, Aviral and Zhang, Grace and Levine, Sergey},
  journal = {arXiv preprint arXiv:1910.00177},
  year = {2019},
  url = {https://arxiv.org/abs/1910.00177}
}

@misc{flashreinforce2026,
  title = {{FlashREINFORCE}: Critic-Free Single-Rollout Asynchronous {RL} for Agentic Language Models},
  author = {Hu, Jian and Zhang, Yifan and Zhang, Hao and Xu, Binfeng and Zhang, Shaokun and Peng, Hongqing and Yu, Zhiding and Molchanov, Pavlo and Kautz, Jan and Dong, Yi},
  year = {2026},
  howpublished = {Manuscript},
  note = {Repository revision 774e4ccd, accessed September 19, 2026},
  url = {https://github.com/yifanzhang-pro/FlashREINFORCE-Overleaf/tree/774e4ccd79435e8251a5b1aff8366575467de4ea}
}

@article{sppo2024,
  title = {Self-Play Preference Optimization for Language Model Alignment},
  author = {Wu, Yue and Sun, Zhiqing and Yuan, Huizhuo and Ji, Kaixuan and Yang, Yiming and Gu, Quanquan},
  journal = {arXiv preprint arXiv:2405.00675},
  year = {2024},
  url = {https://arxiv.org/abs/2405.00675}
}

@inproceedings{peters2010reps,
  title = {Relative Entropy Policy Search},
  author = {Peters, Jan and M{\"u}lling, Katharina and Alt{\"u}n, Yasemin},
  booktitle = {Proceedings of the AAAI Conference on Artificial Intelligence},
  volume = {24},
  pages = {1607--1612},
  year = {2010}
}

@inproceedings{abdolmaleki2018mpo,
  title = {Maximum a Posteriori Policy Optimisation},
  author = {Abdolmaleki, Abbas and Springenberg, Jost Tobias and Tassa, Yuval and Munos, Remi and Heess, Nicolas and Riedmiller, Martin},
  booktitle = {International Conference on Learning Representations},
  year = {2018},
  url = {https://arxiv.org/abs/1806.06920}
}

@article{lan2023pmd,
  title = {Policy Mirror Descent for Reinforcement Learning: Linear Convergence, New Sampling Complexity, and Generalized Problem Classes},
  author = {Lan, Guanghui},
  journal = {Mathematical Programming},
  volume = {198},
  pages = {1059--1106},
  year = {2023}
}

@inproceedings{rafailov2023dpo,
  title = {Direct Preference Optimization: Your Language Model is Secretly a Reward Model},
  author = {Rafailov, Rafael and Sharma, Archit and Mitchell, Eric and Manning, Christopher D. and Ermon, Stefano and Finn, Chelsea},
  booktitle = {Advances in Neural Information Processing Systems},
  volume = {36},
  year = {2023},
  url = {https://arxiv.org/abs/2305.18290}
}

@article{schulman2017ppo,
  title = {Proximal Policy Optimization Algorithms},
  author = {Schulman, John and Wolski, Filip and Dhariwal, Prafulla and Radford, Alec and Klimov, Oleg},
  journal = {arXiv preprint arXiv:1707.06347},
  year = {2017},
  url = {https://arxiv.org/abs/1707.06347}
}

@article{shao2024deepseekmath,
  title = {{DeepSeekMath}: Pushing the Limits of Mathematical Reasoning in Open Language Models},
  author = {Shao, Zhihong and Wang, Peiyi and Zhu, Qihao and Xu, Runxin and Song, Junxiao and Bi, Xiao and Zhang, Haowei and Zhang, Mingchuan and Li, Y. K. and Wu, Y. and Guo, Daya},
  journal = {arXiv preprint arXiv:2402.03300},
  year = {2024},
  url = {https://arxiv.org/abs/2402.03300}
}
